\documentclass[10pt]{article} 
\usepackage[preprint]{tmlr}

\usepackage{amsmath,amssymb,amsthm}
\usepackage{mathtools}
\usepackage{thmtools}  
\usepackage{bm}
\usepackage{booktabs}
\usepackage{multirow}
\usepackage{graphicx}
\graphicspath{{figures/}{./figures/}}
\newcommand{\figorbox}[2][0.92\linewidth]{%
  \IfFileExists{#2}{\includegraphics[width=#1]{#2}}%
  {\fbox{\parbox{0.9\linewidth}{\centering\vspace{1.2cm}\texttt{#2} not found\\(upload the \texttt{figures/} folder to render)\vspace{1.2cm}}}}}
\usepackage{tikz}
\usetikzlibrary{arrows.meta,positioning,decorations.pathreplacing,shapes.geometric,shapes.misc,fit,backgrounds}
\usepackage{algorithm}
\usepackage{algpseudocode}
\usepackage{xcolor}
\usepackage{colortbl}
\usepackage[font=small,skip=6pt]{caption}
\AtBeginDocument{%
  \setlength{\abovedisplayskip}{5pt plus 1pt minus 3pt}%
  \setlength{\belowdisplayskip}{5pt plus 1pt minus 3pt}%
  \setlength{\abovedisplayshortskip}{2pt plus 1pt minus 1pt}%
  \setlength{\belowdisplayshortskip}{3pt plus 1pt minus 2pt}%
}
\usepackage{hyperref}
\usepackage[capitalise,noabbrev]{cleveref}

\crefname{theorem}{Theorem}{Theorems}
\Crefname{theorem}{Theorem}{Theorems}
\crefname{proposition}{Proposition}{Propositions}
\Crefname{proposition}{Proposition}{Propositions}
\crefname{corollary}{Corollary}{Corollaries}
\Crefname{corollary}{Corollary}{Corollaries}
\crefname{lemma}{Lemma}{Lemmas}
\Crefname{lemma}{Lemma}{Lemmas}
\crefname{definition}{Definition}{Definitions}
\Crefname{definition}{Definition}{Definitions}
\crefname{assumption}{Assumption}{Assumptions}
\Crefname{assumption}{Assumption}{Assumptions}
\crefname{remark}{Remark}{Remarks}
\Crefname{remark}{Remark}{Remarks}

\newtheorem{theorem}{Theorem}
\newtheorem{proposition}[theorem]{Proposition}
\newtheorem{corollary}[theorem]{Corollary}
\newtheorem{lemma}[theorem]{Lemma}
\newtheorem{definition}[theorem]{Definition}
\newtheorem{assumption}{Assumption}
\newtheorem{remark}{Remark}

\newcommand{\E}{\mathbb{E}}
\newcommand{\Var}{\mathrm{Var}}
\newcommand{\Cov}{\mathrm{Cov}}

\newcommand{\R}{\mathbb{R}}

\newcommand{\Dtrain}{\mathcal{D}_{\mathrm{train}}}
\newcommand{\Dclean}{\mathcal{D}_{\mathrm{clean}}}
\newcommand{\Dbridge}{\mathcal{D}_{\mathrm{bridge}}}

\newcommand{\yclean}{\hat{y}_{\mathrm{clean}}}
\newcommand{\ycorrupt}{\hat{y}_{\mathrm{corrupt}}}
\newcommand{\ybayes}{\hat{y}_{\mathrm{Bayes}}}
\newcommand{\ycleanbayes}{\hat{y}_{\mathrm{Bayes,clean}}}

\newcommand{\mudelta}{\mu_\delta}
\newcommand{\tnu}{\tilde{\nu}}
\newcommand{\gperp}{g_\perp}
\newcommand{\rclean}{r_{\mathrm{clean}}}
\newcommand{\Psignal}{P_{\mathrm{signal}}}
\newcommand{\Rshift}{R_{\mathrm{shift}}}
\newcommand{\Rnoise}{R_{\mathrm{noise}}}
\newcommand{\Rcross}{R_{\mathrm{cross}}}

\newcommand{\Radj}{\mathrm{Risk}_{\mathrm{adj}}}
\newcommand{\Rmeas}{\mathrm{Risk}_{\mathrm{meas}}}
\newcommand{\Bhat}{\widehat{B}}

\newcommand{\bary}{\bar{y}}
\newcommand{\PRC}{\mathrm{PRC}}  
\newcommand{\PRChat}{\widehat{\mathrm{PRC}}}

\title{Temporal Leakage in LLM Backtesting:\\Measurement, Validation, and Adjusted Scores}

\author{\name Zeyu Zhang\thanks{Corresponding author.} \email zeyuzhang2028@u.northwestern.edu \\
      \addr Department of Statistics and Data Science\\
      Northwestern University
      \AND
      \name Bradly C. Stadie \email bstadie@northwestern.edu \\
      \addr Department of Statistics and Data Science\\
      Northwestern University}

\begin{document}

\maketitle

\begin{abstract}
The standard check for contamination in LLM backtests is simple: compare scores before and
after the training cutoff. We show this check is uninformative. Four flagship models fail it
on questions they cannot have memorized: every scored question resolved after their cutoffs.
The reason is structural. Models legitimately know more about times near their cutoff, so
recency mimics leakage, and we prove no passive backtest can separate the two from genuine
skill. Measurement, not just detection, requires information from outside the backtest. We
supply it in two forms. A known cutoff identifies leakage at the boundary; a matched clean
control identifies it globally and yields a leakage-adjusted score. We also derive where
leakage hides: it concentrates on outcomes that surprised the crowd and were well covered in
training, and partial memorization is disproportionately rewarded. We validate the estimators
against ground truth by planting leakage in twin models, where they recover the injected dose
and return null on clean questions. Deployed on frontier models, they detect one
cutoff-localized signature and, at the audit's power floor, clear five models whose apparent
advantages were recency alone. Backtests need not be discarded; they need one defensible reference.

\end{abstract}

\section{Introduction}
\label{sec:intro}
\label{sec:exp:m1}

Large language models are increasingly evaluated by \emph{backtesting}: the model is asked,
as of a historical date, about events whose outcomes have since resolved, and its score is
read as evidence of how it would forecast the genuine future
\citep{halawi2024forecasting,karger2025forecastbench,lopezlira2025memorization,paleka2025pitfalls,gao2025lookahead}.
The threat is temporal leakage: a web-scale training corpus may already describe the outcome
being ``predicted,'' so a high score can reflect memorization rather than skill. The standard
defense is a pre/post check, which compares scores before and after the model's training
cutoff and flags a model whose advantage sits on the pre-cutoff side
\citep{livecodebench2025,lookaheadbench2026}. \Cref{fig:m1forest} shows that check failing.
We scored five flagship models on the official ForecastBench panel, restricted to questions
that resolved after their documented training cutoffs, so no outcome can be in their training
data. Four of the five fail the check anyway, with spurious ``leakage'' gaps as large as
$+0.061$. We read the documented cutoffs as bounding all training data. The matched-window
check of \Cref{app:m1} supports this reading: in a common window, all five gaps collapse
together.

\begin{figure}[t]\centering
\figorbox[0.59\linewidth]{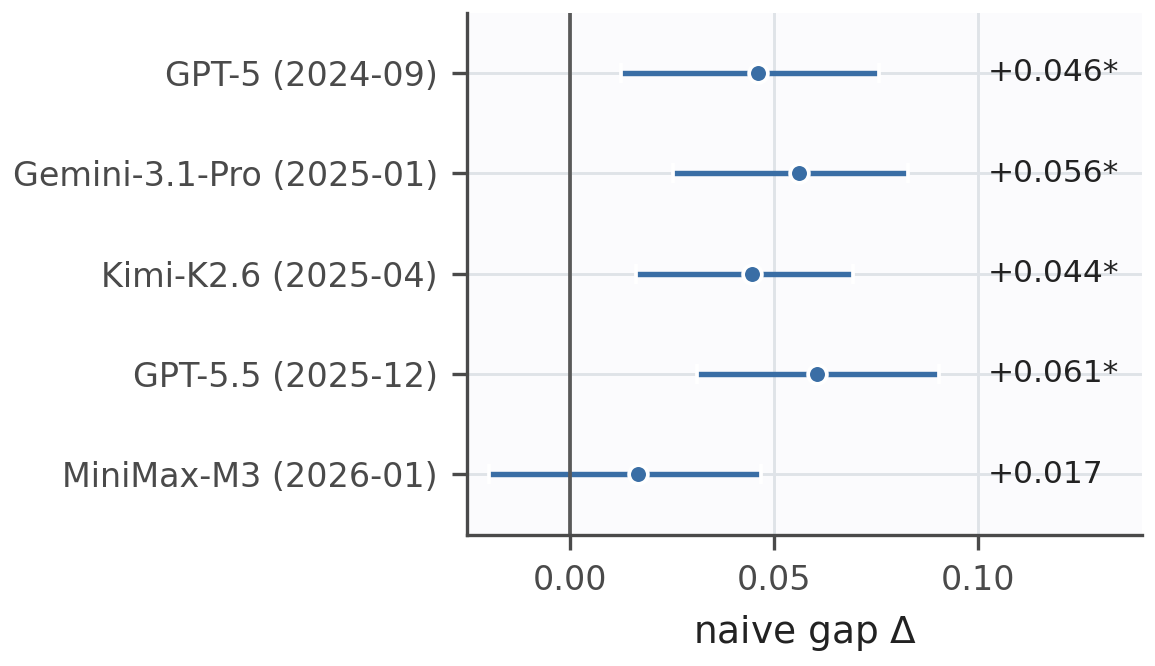}
\caption{\textbf{M1: models that cannot have leaked still fail the naive contamination check.}
Each row is one model (cutoff month in parentheses), scored only on questions resolving at
least 30 days after its cutoff, so no outcome can be in its training data. The x-axis is its
naive gap $\Delta=\bar s_{\mathrm{early}}-\bar
s_{\mathrm{late}}$, the window split at its median resolution date, where
$s=(c_0-Y)^2-(P-Y)^2$ is the Brier improvement of the model's forecast $P$ over the crowd
forecast $c_0$ (higher is better); $\Delta>0$ means better on older questions, the pattern an
audit reads as leakage. Whiskers: cluster-bootstrap 95\% CIs; $^\ast$: CI excludes
zero. Result: four of five flagships are flagged,
$\Delta=+0.044$ to $+0.061^\ast$; recency alone produces the leakage pattern.}
\label{fig:m1forest}
\end{figure}

The failure is structural, not a flaw of one benchmark. A backtest score mixes three
components. \emph{Skill} is what the evaluation wants to measure. \emph{Recency} is
legitimate knowledge that varies with distance from the cutoff: training data cover recent
periods more densely, and post-cutoff knowledge grows stale \citep{lazaridou2021mind}.
\emph{Leakage} is memorized outcome information, and it is illegitimate. The same cutoff
drives the last two: a question resolving near the cutoff is one the model legitimately knows
more about and one whose outcome is likelier to sit in the corpus. A
pre/post contrast therefore measures recency and leakage jointly, and recency alone
reproduces the leakage signature. Other failure modes---hallucination, miscalibration,
prompt sensitivity \citep{huang2025survey,xiong2024can,sclar2024quantifying}---degrade
honest performance on both sides of the cutoff;
leakage alone is specific to backtesting, and it inflates the score.

The question that matters for practice is therefore not whether contamination exists---it
usually does \citep{sainz2023nlp,golchin2024time}---but how much it inflates the score, and
whether the score can be corrected. Existing tools do not answer
it. Detection methods establish that contamination exists, not what it is worth
\citep{oren2024proving,codec2026}, and prompt-level remedies fail to suppress recall
\citep{lopezlira2025memorization}. Clean-reference estimates on static benchmarks do not
transfer, because a backtest defines ``clean'' by a date rather than by membership
\citep{singh2024contamination,haimes2024benchmark}. Retraining a truly clean reference model
fails, because its cutoff would have to predate every evaluation question, leaving a model
too outdated to stand in for the one under audit \citep{drinkall2024time}.

This paper answers the quantitative question in four steps: we prove the inflation is
non-identifiable from forecasts and outcomes alone, derive where it concentrates, show that
one external reference restores measurement, and validate the estimators against ground
truth by planting leakage in twin models. Deployed on frontier systems, the estimators
detect one cutoff-localized signature and, at the design's power floors, clear five models
whose apparent advantages were recency alone.

\paragraph{Contributions.}
\begin{itemize}
\item \textbf{The standard check is uninformative (\Cref{sec:impossibility}).} Leakage
inflation cannot be identified from backtest scores alone: the identified set is a sharp
interval that more data does not shrink. A flat pre/post profile is not evidence of a clean
backtest, and \Cref{fig:m1forest} shows the converse failure on real flagships.
\item \textbf{Leakage is not a rate; it obeys a law (\Cref{sec:concentration}).} Per
question, inflation equals honest uncertainty times an extraction factor with a double
benefit: half the leaked signal already yields three quarters of the full inflation. Leakage
concentrates where the crowd was surprised and training coverage was dense, so an average
contamination rate is the wrong audit object.
\item \textbf{A validated audit recipe
(\Cref{sec:estimation,sec:validation,sec:deployment}).} Choose a reference---a known cutoff
(regression discontinuity) or a matched clean control (difference-in-differences)---run the
matching estimator, and report the leakage-adjusted score with its assumption; paraphrase
probes detect leakage but cannot measure it. The estimators recover planted leakage in twin
models in dose, location, and per-question profile, return null on a clean twin, and behave
honestly in the wild.
\end{itemize}

\noindent \Cref{sec:related} positions the work in the contamination literature; proofs,
estimator details, and full experimental configurations are in the appendix.
\section{Problem Setup}
\label{sec:setup}

\paragraph{The mechanism.}
Two dates govern temporal leakage, and \Cref{fig:dag} makes the mechanism precise. The
\emph{as-of date} $t_0$ is the information horizon a genuine forecast may use; the deployed
model's \emph{training cutoff} is $T>t_0$. The world state up to $t_0$ generates the training
corpus $\Dclean$ available to a genuine forecaster, while the later state $W_+$ is a latent
common cause of the ``bridge'' training data $\Dbridge$ (text written after $t_0$ but before
$T$) and of the realized outcome $Y$. A model trained on $\Dclean\cup\Dbridge$ can therefore
acquire information about $Y$ through the back-door path $W_+\to\Dbridge\to\ycorrupt$, the
red leakage path in \Cref{fig:dag}, without the benchmark question ever appearing verbatim in
training. Temporal knowledge leakage is this back-door flow; it separates a model trained
with the bridge data ($\ycorrupt$) from an otherwise identical one trained without it
($\yclean$). A question such as ``\emph{will country $X$ enter a recession by Q4 2023?}'' is
leakage-eligible if it resolves before the cutoff, where the model may simply recall the
reported outcome, and clean if it resolves after, where the model must reason.

\begin{figure}[t]
\centering
\begin{tikzpicture}[scale=0.82, transform shape,
    node distance=1.0cm and 1.5cm,
    every node/.style={font=\small},
    latent/.style={draw, circle, minimum size=0.85cm, thick, align=center},
    observed/.style={draw, rectangle, rounded corners, minimum height=0.7cm, minimum width=1.0cm, thick, align=center, fill=white},
    worldpath/.style={-{Stealth[length=5pt]}, thick},
    trainarrow/.style={-{Stealth[length=5pt]}, thick, black!70},
    leakpath/.style={-{Stealth[length=6pt]}, line width=1.5pt, red!70!black},
    cleanpath/.style={-{Stealth[length=5pt]}, thick, blue!60!black},
    regionlabel/.style={font=\small\itshape, text=black!60},
  ]
  \node[latent, fill=white] (W0) {$W(t_0)$};
  \node[latent, fill=white, right=2.7cm of W0] (Wplus) {$W_+$};
  \node[observed, below left=0.95cm and 0.25cm of W0] (Dclean) {$\Dclean$};
  \node[observed, below right=0.95cm and 0.5cm of Wplus] (fstar) {$Y$};
  \node[observed, below left=0.95cm and 0.5cm of Wplus] (Dbridge) {$\Dbridge$};
  \node[observed, below=0.85cm of Dclean] (yclean) {$\yclean$};
  \node[observed, below=0.85cm of Dbridge] (ycorrupt) {$\ycorrupt$};
  \draw[worldpath] (W0) -- (Wplus);
  \draw[worldpath] (Wplus) -- (fstar);
  \draw[cleanpath] (W0) -- (Dclean);
  \draw[cleanpath] (Dclean) -- (yclean);
  \draw[trainarrow] (Dclean) -- (ycorrupt);
  \draw[leakpath] (Wplus) -- (Dbridge);
  \draw[leakpath] (Dbridge) --
    node[right=2pt, font=\small\itshape, text=red!70!black, pos=0.45] {leakage}
    (ycorrupt);
  \begin{scope}[on background layer]
    \node[fit=(W0)(Dclean)(yclean), inner sep=6pt, rounded corners=6pt,
          fill=blue!4, draw=none] (preregion) {};
    \node[fit=(Wplus)(Dbridge)(fstar)(ycorrupt), inner sep=6pt, rounded corners=6pt,
          fill=red!5, draw=none] (postregion) {};
  \end{scope}
  \node[regionlabel, above=1pt of preregion.north] {before question date $t_0$};
  \node[regionlabel, above=1pt of postregion.north] {leaked window $(t_0,\,T]$};
\end{tikzpicture}
\caption{\textbf{Temporal leakage is an indirect information flow through a common cause of
the training text and the outcome, not verbatim overlap.}
The model's training cutoff $T$ postdates the as-of date $t_0$; a question is
leakage-eligible when it resolves inside $(t_0,T]$. The later world state $W_+$
resolves the outcome $Y$ (black, top) and also generates the bridge data $\Dbridge$, text
written inside $(t_0,T]$. The deployed model trains on $\Dclean$ \emph{plus}
$\Dbridge$, so the red path $W_+\!\to\!\Dbridge\!\to\!\ycorrupt$ carries outcome information
into it without the benchmark question ever appearing in training; the counterfactual clean
model $\yclean$ (blue) learns from $\Dclean$ alone.}
\label{fig:dag}
\end{figure}

\paragraph{Notation.}
We audit one backtested model with reference cutoff $T$; uncertainty in $T$ is handled by
sensitivity analysis rather than treated as known. For a random question we observe the
realized outcome $Y\in\{0,1\}$; the model's forecast $P\in[0,1]$ under a fixed deterministic
protocol; the \emph{running variable} $G$, the signed gap between the question's resolution
date and the cutoff, negative for leakage-eligible questions ($g<0$) and positive for clean
ones; and a leakage-free difficulty covariate $X$ observable without the model, such as the
contemporaneous crowd forecast. We score with the
Brier loss $\ell=(P-Y)^2$, a strictly proper score; the identification results of
\Cref{sec:estimation} require only a bounded loss. The central observable is the conditional
mean loss
\begin{equation}
  m(x,g)\;=\;\E\big[(P-Y)^2 \mid X=x,\ G=g\big],
  \label{eq:m}
\end{equation}
estimable by regression wherever the backtest has data.

\paragraph{The target.}
Leakage lowers the pre-cutoff loss below what the model's legitimate information could
achieve. Write $m_0(x,g)$ for the \emph{honest surface}: the loss the deployed model would
incur using legitimate information only. It still varies with $g$, because legitimate
knowledge depends on temporal distance from the cutoff; that variation is recency, and the
theory imposes no shape or direction on it. The paper's target is the \emph{inflation}
\begin{equation}
  B\;=\;\E\big[\,m_0(X,G)-m(X,G)\mid G<0\,\big]\;\ge\;0,
  \label{eq:Bdef}
\end{equation}
the average amount by which the backtest understates the model's honest error on
leakage-eligible questions. The goal of the paper is an estimate of the inflation with a
confidence interval, and the resulting \emph{leakage-adjusted} score. \Cref{app:notation}
collects all symbols and the standing regularity conventions; proofs are in
\Cref{app:proofs}.
\section{Scores Alone Cannot Measure Leakage}
\label{sec:impossibility}
\label{sec:nonident}
\label{sec:nonident:struct}
\label{sec:nonident:nonid}

A completed backtest yields \emph{passive data}: draws of $(P,Y,X,G)$. This section proves
that the inflation $B$ is not a functional of their joint law, however many questions are
scored, and isolates exactly what is missing, which dictates the form of the remedies in
\Cref{sec:estimation}.

The definition of $B$ in \eqref{eq:Bdef} compares the observable surface $m$ with the honest
surface $m_0$, so the content lies in which pairs $(m_0,L)$ are admissible.

\begin{definition}[Operational risk decomposition]
\label{def:struct}
A pair of functions $m_0:\mathcal{X}\times\R\to[0,1]$ and $L:\mathcal{X}\times\R\to\R$ is an
\emph{operational decomposition} of $m$ if
\begin{equation}
  m(x,g)=m_0(x,g)-L(x,g).
  \label{eq:struct}
\end{equation}
\end{definition}

\noindent Admissible pairs always exist ($(m,0)$ is one); content enters through a sign and
a support restriction.

\begin{assumption}[Nonnegative pre-cutoff leakage]
\label{ass:struct}
There exists an operational decomposition $(m_0,L)$ such that
$L(x,g)\ge0$ and $L(x,g)=0$ for every $g\ge0$.
\end{assumption}

\noindent The support restriction says the future cannot leak. For a \emph{frozen} model (no
retrieval tools, no post-cutoff fine-tuning; cutoff uncertainty is handled by sensitivity
analysis, \Cref{sec:setup}), an outcome resolving after $T$ cannot have been seen, while
before the cutoff, memorized information lowers the loss by $L\ge0$. Recency lives in $m_0$:
the honest surface may depend on $g$ in any way, and in particular the theory never assumes
the model knows more in aggregate about recent times. Under \Cref{ass:struct}, the inflation
is $B=\E[L(X,G)\mid G<0]$, as in \eqref{eq:Bdef}.

\begin{restatable}[Sharp partial identification from passive scores]{theorem}{thmnonident}
\label{thm:nonident}
Under \Cref{ass:struct} alone, $B$ is not a functional of the law of $(P,Y,X,G)$: every
continuation $\tilde m_0:\mathcal{X}\times\R\to[0,1]$ with $m\le\tilde m_0\le 1$ on $\{g<0\}$ and
$\tilde m_0=m$ on $\{g\ge0\}$ defines an admissible pair $(\tilde m_0,\ \tilde L=\tilde m_0-m)$
consistent with the same passive law, with corresponding inflation
$\tilde B=\E[\tilde m_0-m\mid G<0]$. The sharp identified set is
\[
  \big[\,0,\ \ \E[\,1-m\mid G<0\,]\,\big],
\]
and every value in this interval is attainable.
\end{restatable}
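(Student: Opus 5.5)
The argument is a direct construction: the passive law determines $m$ but places no constraint on $m_0$ below the cutoff beyond what \Cref{ass:struct} and the range $[0,1]$ impose. I would proceed in three steps: admissibility of every continuation, the outer bounds, and attainment of every value in between.

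\textbf{Admissibility.} Fix the joint law of $(P,Y,X,G)$; it determines $m$ through \eqref{eq:m}, and the law of $(X,G)$. Take any continuation $\tilde m_0$ with $m\le\tilde m_0\le1$ on $\{g<0\}$ and $\tilde m_0=m$ on $\{g\ge0\}$, and set $\tilde L=\tilde m_0-m$. Then $m=\tilde m_0-\tilde L$, so \Cref{def:struct} holds. Moreover $\tilde L\ge0$ everywhere and $\tilde L=0$ on $\{g\ge0\}$, so \Cref{ass:struct} holds. The values lie in $[0,1]$ because $m\in[0,1]$, the Brier loss being bounded by one. The key observation is that the decomposition is a statement about functions only, and nothing in the passive law is altered: the same law of $(P,Y,X,G)$ is consistent with every such pair. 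The corresponding inflation is $\tilde B=\E[\tilde L(X,G)\mid G<0]=\E[\tilde m_0-m\mid G<0]$.

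\textbf{Outer bounds.} Conversely, any admissible pair $(m_0,L)$ satisfying \Cref{ass:struct} has $L\ge0$ and $m_0=m+L\le1$. Hence $0\le L\le 1-m$ on $\{g<0\}$. Taking conditional expectations gives $B\in[0,\E[1-m\mid G<0]]$, so the identified set lies inside the stated interval.

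\textbf{Sharpness.} For $\lambda\in[0,1]$, define the continuation
\[
\tilde m_0^\lambda=m+\lambda(1-m)\mathbf{1}\{g<0\}.
\]
It satisfies the continuation constraints and yields $\tilde B=\lambda\,\E[1-m\mid G<0]$. This is linear in $\lambda$, so every point of the interval is attained, including both endpoints ($\lambda=0$ is the trivial pair $(m,0)$). Since at least two distinct values are attainable whenever $\E[1-m\mid G<0]>0$, $B$ is not a functional of the passive law. In the degenerate case where the upper bound is zero, the statement reduces to the interval $\{0\}$.

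\textbf{Main obstacle.} The delicate point is not algebraic. It is making precise that ``consistent with the same passive law'' holds, that is, that $m_0$ is a primitive not pinned down by the law. I would argue this by exhibiting, for each $\tilde m_0$, a data-generating story with a frozen honest model whose legitimate loss is $\tilde m_0$ and whose leakage lowers it to $m$, without changing the distribution of $(P,Y,X,G)$. Since the theorem is stated under \Cref{ass:struct} alone, it suffices to note that the model restrictions are exactly \Cref{def:struct} plus \Cref{ass:struct}, both of which were verified above.

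I would also handle measurability of $\tilde m_0$ and the requirement $\Pr(G<0)>0$ so that the conditional expectations are defined. Both are covered by the standing regularity conventions.
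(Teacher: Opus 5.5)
Your proposal is correct and follows the paper's proof essentially step for step. The paper's observational-equivalence, attainability (via the same family $m+t(1-m)$ on $\{g<0\}$), and sharpness (via $0\le\tilde L\le 1-m$ from $\tilde m_0\le 1$) parts match your admissibility, sharpness, and outer-bounds steps respectively. Your remark that non-identification needs $\E[1-m\mid G<0]>0$ is a small refinement the paper's proof leaves implicit: if that expectation is zero, the set collapses to $\{0\}$ and $B$ is identified.
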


\noindent The construction is elementary. Post-cutoff, $m_0=m$ is observed. Pre-cutoff, the
data reveal only the difference $m_0-L=m$, and the one-parameter family
$\tilde m_{0,t}=m+t(1-m)$, $t\in[0,1]$, sweeps the entire interval. The proof is in
\Cref{app:proofs-nonident}.

The content lies in the model rather than the argument. \Cref{ass:struct} restricts almost
nothing---recency is entirely free, and leakage carries only a sign and a support
restriction---so the impossibility cannot be blamed on a restrictive model: once recency and
leakage are indexed by the same cutoff, it follows. Nothing is Brier-specific: for any loss
bounded by $M$, the upper endpoint becomes $\E[M-m\mid G<0]$.

The practical consequence corrects common practice: \emph{a flat pre/post performance profile
is not evidence of a clean backtest}, because an honest model with the same recency profile
produces identical scores. \Cref{fig:m1forest} is the converse error on real models: recency
alone produced the failing pattern of four flagships that cannot have memorized any scored
outcome.

The theorem also specifies its own remedy. The identified set is exactly the freedom of the
honest surface on the leaked side, so identification must import information that pins $m_0$
there: a clean control model, a continuity restriction at a known cutoff, or an intervention
on the question itself. Those are the three routes of \Cref{sec:estimation}. First, we
characterize what $L$ is, which tells an auditor where to look.
\section{Where Leakage Concentrates}
\label{sec:concentration}
\label{sec:nonident:perq}

Leakage is not a uniform rate across questions. It requires both that an outcome was
\emph{worth} knowing and that the model \emph{absorbed and used} it. A minimal working model
of how leaked information moves a forecast makes this precise and gives the inflation a
closed form.

\begin{assumption}[Convex-pull leakage]
\label{ass:pull}
Under the fixed deterministic prediction protocol, a leakage-eligible question $q$ ($g<0$) satisfies
\[
P(q)=(1-w(q))\,P_{\mathrm{hon}}(q)+w(q)\,Y(q)
\]
for an \emph{extraction weight}
$w(q)\in[0,1]$, where $P_{\mathrm{hon}}$ is the honest (legitimate-information) forecast.
\end{assumption}

\noindent \Cref{ass:pull} is a modeling assumption, not a consequence of the leakage problem.
Leaked information enters as a convex blend of the honest forecast and the recalled outcome:
exact for answer memorization ($w{=}1$, where $P\to Y$), first-order for graded evidence
leakage. Its substantive restriction is directionality: any forecast between
$P_{\mathrm{hon}}$ and the outcome is a convex pull for some $w\in[0,1]$, and what is
excluded is leaked information that \emph{misleads} ($w<0$), the same direction
\Cref{ass:struct} fixes in aggregate. We test the model's consequences against ground truth
in \Cref{sec:validation}.

Writing the honest Brier $b_0(q)=(P_{\mathrm{hon}}(q)-Y(q))^2$, the leakage has a closed form.

\begin{restatable}[Per-question and aggregate leakage]{proposition}{propperq}
\label{prop:perq}
Under \Cref{ass:pull}, the per-question leakage is $b_0(q)\,w(q)\,(2-w(q))$; the pair
$m_0(x,g)=\E[b_0\mid X{=}x,G{=}g]$ and $L(x,g)=\E[b_0\,w(2-w)\mid X{=}x,G{=}g]$ is an
operational decomposition satisfying \Cref{ass:struct}, and hence
\begin{equation}
  B\;=\;\E\big[\,b_0\,w(2-w)\,\big|\,G<0\big].
  \label{eq:perq}
\end{equation}
\end{restatable}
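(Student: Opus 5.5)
The proof is a direct per-question computation followed by averaging and checking the conditions of \Cref{ass:struct}.

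\textbf{Per-question algebra.} Fix a leakage-eligible question $q$ ($g<0$). By \Cref{ass:pull},
\[
P-Y=(1-w)P_{\mathrm{hon}}+wY-Y=(1-w)(P_{\mathrm{hon}}-Y).
\]
Squaring gives $(P-Y)^2=(1-w)^2\,b_0$. The per-question leakage, honest loss minus realized loss, is therefore
\[
b_0-(1-w)^2b_0=b_0\bigl(1-(1-w)^2\bigr)=b_0\,w(2-w).
\]
This holds pointwise, for each realization of $Y$. No expectation over the outcome is needed, which is why the identity survives the fact that $w$ and $P_{\mathrm{hon}}$ may depend on $q$ arbitrarily. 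For $g\ge0$ the model is frozen and no outcome can be recalled, so I set $w=0$ there, giving $P=P_{\mathrm{hon}}$ and zero leakage. This convention is the step that needs explicit justification, since \Cref{ass:pull} is stated only for $g<0$. I would invoke the support restriction motivated after \Cref{ass:struct} and define $P_{\mathrm{hon}}:=P$ on $\{g\ge0\}$.

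\textbf{Conditioning.} Take conditional expectations given $X=x,\,G=g$. This gives
\[
m(x,g)=\E[b_0\mid x,g]-\E[b_0\,w(2-w)\mid x,g]=m_0(x,g)-L(x,g),
\]
which is exactly the decomposition \eqref{eq:struct}. The remaining checks are as follows.
\begin{itemize}
\item $m_0$ maps into $[0,1]$, because $b_0\in[0,1]$ (both $P_{\mathrm{hon}}$ and $Y$ lie in $[0,1]$).
\item $L\ge0$, because $w\in[0,1]$ implies $w(2-w)\in[0,1]$ and $b_0\ge0$.
\item $L=0$ on $\{g\ge0\}$, by the convention $w=0$ there.
\end{itemize}
All conditional expectations exist because the integrands are bounded. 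Hence $(m_0,L)$ satisfies \Cref{ass:struct}.

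\textbf{Aggregation.} By the remark after \Cref{ass:struct}, $B=\E[L(X,G)\mid G<0]$. The tower property then yields
\[
B=\E\bigl[\E[b_0w(2-w)\mid X,G]\mid G<0\bigr]=\E[b_0w(2-w)\mid G<0],
\]
since $\{G<0\}$ is $\sigma(G)$-measurable. This is \eqref{eq:perq}.

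I expect no step to be a genuine obstacle. The only point requiring care is the bookkeeping of the clean side, $g\ge0$, where \Cref{ass:pull} is silent.
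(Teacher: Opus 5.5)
Your proposal is correct and follows the paper's proof essentially step for step. Both arguments use the pointwise identity $P-Y=(1-w)(P_{\mathrm{hon}}-Y)$, condition on $(X,G)$ to read off $m_0$ and $L$, check sign and support (the paper likewise simply posits $P=P_{\mathrm{hon}}$ on $\{g\ge0\}$), and apply iterated expectations for $B$; your explicit check that $m_0\in[0,1]$ and your flagging of the post-cutoff convention add care the paper leaves implicit.
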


\noindent The reason is one line: the pull leaves residual error $(1-w)(P_{\mathrm{hon}}-Y)$,
so the leaked Brier is $(1-w)^2 b_0$ and the saving is $b_0\,w(2-w)$. Averaging the saving
given $(X,G)$ furnishes a canonical decomposition, so the working model automatically
satisfies \Cref{ass:struct}. Per question, inflation is \emph{stakes times extraction}:
$b_0$ measures how much there was to know, and $w(2-w)$ how much of it the model took.

\paragraph{The double benefit.}
The factor $w(2-w)=1-(1-w)^2$ is why partial leakage is disproportionately rewarded
(plotted in \Cref{fig:geometric}b, \Cref{app:twin}). Pulling the forecast toward the truth earns a benefit linear in
$w$; the residual cost is only quadratic in $w$. A model that extracts half the leaked
signal ($w=0.5$) already gains $75\%$ of the full-memorization inflation, and $w=0.3$ still
gains $51\%$. Leakage need not be ``memorize the answer'' to distort a backtest; a faint,
partial trace suffices.

\begin{remark}[The counterfactual view]
\label{rem:twin}
\Cref{app:twin} derives the same double benefit from a counterfactual comparison with a
clean model retrained without the leaked data (a population loading replaces $w$; the two
agree under homogeneity conditions stated there), and shows that the target \eqref{eq:Bdef}
coincides with the counterfactual inflation (\Cref{lem:estimand-alignment}). It also shows
the asymmetry: model change \emph{uncorrelated} with the leaked signal strictly deflates the
score, so observed inflation is evidence of extraction, not an artifact of generic
retraining noise (\Cref{cor:no-free}).
\end{remark}

\paragraph{Two kinds of leakage, two consequences.}
The extraction weight separates \emph{answer} leakage, where the model has memorized the
outcome itself ($w\to1$), from \emph{evidence} leakage, where it absorbed leaked-window
information correlated with $Y$ ($w\in(0,1)$); a finer interpretive factorization of $w$ is
given in \Cref{app:w-interp}. Two consequences of \eqref{eq:perq} organize the rest of the
paper. First, leakage \emph{concentrates}: it is large only where the outcome was surprising
(high $b_0$) and absorbed (high $w$), and vanishes on easy or obscure questions, so an
``average leakage rate'' is the wrong audit object. Second, detecting $w>0$
requires controlling difficulty, skill, recency, and overconfidence simultaneously---a
multi-signal estimator, not a pre/post score gap---which is what \Cref{sec:estimation}
builds.
\section{Three References, Two Estimators, One Diagnostic}
\label{sec:estimation}

\Cref{thm:nonident} specifies what identification must import: information that pins the
honest surface on the leaked side. Three practical references supply it: a known training
cutoff or a clean control model point-identifies the inflation under explicit assumptions,
and paraphrase-query access detects evidence leakage without identifying it. The results are
identification statements for the conditional risk $m$ under a bounded loss; the matching
estimators and bootstrap intervals are in \Cref{sec:validation,app:delta-machinery}.

\subsection{Route 1, a known cutoff: regression discontinuity}
\label{sec:estimation:rd}

Training cutoffs are published in model cards as documented approximations (hence the
sensitivity treatment of \Cref{sec:setup}), and the route needs no queries beyond a dated
benchmark spanning the cutoff. One restriction turns the cutoff into a reference: honest
competence drifts continuously through time, while memorization switches off at the cutoff.
Any jump in the loss there is therefore leakage.

\begin{assumption}[Honest-risk continuity]
\label{ass:cont}
For each $x$, $m_0(x,\cdot)$ is continuous at $g=0$, and $L(x,0^-)$ exists.
\end{assumption}

\begin{restatable}[RD identifies boundary leakage]{theorem}{thmrd}
\label{thm:rd}
Under \Cref{ass:struct,ass:cont} and the support regularity of
\Cref{app:proofs-routes}, for any bounded loss, the boundary leakage is identified by the
upward jump in observed risk at the cutoff,
\[
  J(x)\;=\;\lim_{g\downarrow0}m(x,g)-\lim_{g\uparrow0}m(x,g)\;=\;L(x,0^-)\;\ge\;0 .
\]
\end{restatable}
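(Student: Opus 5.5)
The plan is to work directly with the operational decomposition supplied by \Cref{ass:struct} and compute the two one-sided limits of $m(x,\cdot)$ at $g=0$ separately. Fix $x$ and take the pair $(m_0,L)$ guaranteed by \Cref{ass:struct}, so that $m(x,g)=m_0(x,g)-L(x,g)$ for all $g$, with $L\ge0$ everywhere and $L(x,g)=0$ for $g\ge0$. The support regularity of \Cref{app:proofs-routes} will be used only to ensure that $m(x,\cdot)$ is well defined, as a conditional expectation, on punctured neighbourhoods of $0$ on both sides. This is what makes the one-sided limits of the identified surface meaningful objects of the law of $(P,Y,X,G)$. Boundedness of the loss keeps every term finite; the argument never uses the Brier form.

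\emph{Right limit.} For $g>0$ the support restriction gives $m(x,g)=m_0(x,g)$. By \Cref{ass:cont}, $m_0(x,\cdot)$ is continuous at $0$, so $\lim_{g\downarrow0}m(x,g)=m_0(x,0)$. \emph{Left limit.} For $g<0$ we have $m(x,g)=m_0(x,g)-L(x,g)$. Continuity gives $m_0(x,g)\to m_0(x,0)$, and \Cref{ass:cont} also asserts that $L(x,0^-)$ exists. Hence the left limit exists by the algebra of limits and equals $m_0(x,0)-L(x,0^-)$. Subtracting the left limit from the right limit, the honest value $m_0(x,0)$ cancels, which yields $J(x)=L(x,0^-)$. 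Since $L(x,g)\ge0$ for all $g<0$ and limits preserve weak inequalities, $L(x,0^-)\ge0$.

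It remains to argue that this is genuinely \emph{identification}, that is, that $J(x)$ does not depend on which admissible decomposition is chosen. Here $J(x)$ is defined purely through $m$, a functional of the passive law. So any two decompositions satisfying \Cref{ass:struct,ass:cont} must share the same $L(x,0^-)$, even though they may differ away from the cutoff. This is exactly the contrast with \Cref{thm:nonident}: the continuation $\tilde m_{0,t}=m+t(1-m)$ with $t>0$ is admissible there, but it generally jumps at $0$ whenever $m(x,0^-)<1$, so \Cref{ass:cont} excludes it.

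The main obstacle is bookkeeping rather than analysis. The delicate point is making precise that the one-sided limits of $m$ are well defined from the data, which is why the support regularity is invoked: it requires positive density of $G$ on both sides of $0$ for each relevant $x$. I would state that condition explicitly at the start and otherwise keep the argument to the two limit computations above.
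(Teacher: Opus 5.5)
Your proposal is correct and follows the paper's proof essentially line for line: $m=m_0$ on $g>0$ plus continuity of $m_0$ gives the right limit $m_0(x,0)$, $m=m_0-L$ plus existence of $L(x,0^-)$ gives the left limit $m_0(x,0)-L(x,0^-)$, and subtracting yields $J(x)=L(x,0^-)\ge0$. Your extra remarks on decomposition-independence are sound, with one small sharpening of the $\tilde m_{0,t}$ aside: at a given $x$ with $m(x,0^-)<1$, that continuation jumps at $0$ for every $t$ except the single one with $t\,(1-m(x,0^-))=J(x)$, and that surviving continuation has $\tilde L(x,0^-)=J(x)$, consistent with the theorem rather than a counterexample to it.
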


\noindent The boundary jump is the assumption-light estimand and remains nonzero under
perfect recall; quasi-resolved outcomes and ingestion lag attenuate it toward zero, so a
small $\widehat J$ should not be over-read (\Cref{sec:wild}). Recovering the \emph{global}
inflation needs unique clean-side extrapolation of the honest surface (\Cref{ass:smooth}),
under which $B$ is identified (\Cref{prop:rd-global} in \Cref{app:proof-rd-global}).

\subsection{Route 2, a clean control model: difference-in-differences}
\label{sec:estimation:did}

The second route uses a second model: families ship vintages with different cutoffs, so
clean controls are standard artifacts, \emph{selected} rather than retrained, and they need
to be clean only on the evaluation window. The control need not match the target's
capability, because level differences cancel. It must instead match the \emph{response} of
honest performance to question timing: if the control shares the target's recency profile,
its pre/post change measures recency alone, and differencing pins the target's honest
change.

\begin{assumption}[Transportable recency]
\label{ass:transport}
The control $M_0$ is clean on the evaluation support ($L^{M_0}\equiv0$); the observations of
both models cover the target's pre-cutoff question mix on both sides of the cutoff; and,
once standardized to that mix, honest performance changes by the same amount from pre to
post for target and control (formal statement in \Cref{app:proofs-routes}).
\end{assumption}

\begin{restatable}[DiD identification]{proposition}{propdid}
\label{prop:did}
Under \Cref{ass:struct,ass:transport}, standardize both models' risks to the target's
pre-cutoff question mix and let $\Delta^A$ denote model $A$'s resulting pre-to-post change.
Then the inflation \eqref{eq:Bdef} is identified: $B=\Delta^{M}-\Delta^{M_0}$.
\end{restatable}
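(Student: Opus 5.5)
The plan is to make the informal \Cref{ass:transport} precise and then reduce the identity to a telescoping computation. Let $\pi$ be the target's pre-cutoff question mix, i.e. the law of $X$ given $G<0$ for model $M$. For a model $A$ with observed risk surface $m^A$, I would define the standardized pre and post risks
\[
  R^A_{\mathrm{pre}}=\E_\pi\big[\,\bar m^A_{-}(X)\,\big],\qquad
  R^A_{\mathrm{post}}=\E_\pi\big[\,\bar m^A_{+}(X)\,\big],
\]
where $\bar m^A_{\pm}(x)$ average $m^A(x,g)$ over the pre- and post-cutoff values of $g$. To align with \eqref{eq:Bdef}, I would take $\bar m^A_-(x)=\E[m^A(x,G)\mid X=x,G<0]$ under the target's joint law of $(X,G)$, and $\bar m^A_+$ as a fixed post-window average shared by both models. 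The change is then $\Delta^A=R^A_{\mathrm{post}}-R^A_{\mathrm{pre}}$, with sign chosen so that leakage makes it positive. The coverage clause of \Cref{ass:transport} is exactly what makes both averages well defined, as functionals of the observed law, for both models on the support of $\pi$. Writing the analogous quantities for the honest surfaces $m_0^A$ as $\Delta^A_0$, the transport clause reads $\Delta^M_0=\Delta^{M_0}_0$.

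Next I decompose each model's change via \Cref{ass:struct}.

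For the target, $m^M=m_0^M-L^M$ with $L^M=0$ for $g\ge0$. On the post side, $\bar m^M_+=\bar m^M_{0,+}$. On the pre side, $\bar m^M_-=\bar m^M_{0,-}-\bar L^M_-$. Hence
\[
  \Delta^M=\Delta^M_0+\E_\pi\big[\bar L^M_-(X)\big]=\Delta^M_0+\E\big[L^M(X,G)\mid G<0\big]=\Delta^M_0+B.
\]
The middle equality holds by the tower property, because $\pi$ together with the conditional law of $G$ given $X$ and $G<0$ reconstructs the target's pre-cutoff joint law. The last equality is the remark following \Cref{ass:struct}.

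For the control, $L^{M_0}\equiv0$ on the evaluation support, so $m^{M_0}=m^{M_0}_0$ there and $\Delta^{M_0}=\Delta^{M_0}_0$. Subtracting and applying the transport equality gives
\[
  \Delta^M-\Delta^{M_0}=B+\Delta^M_0-\Delta^{M_0}_0=B.
\]
Each $\Delta^A$ is a functional of observed $(P,Y,X,G)$ laws, so $B$ is identified.

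The main obstacle is bookkeeping, not analysis. The standardization weights must be the same for both models and must be taken under the target's conditional distribution of $G$ given $X$ on the pre side, so that the averaged $L^M$ coincides exactly with $B$ rather than with a reweighted version of it. I would therefore state the formal transport assumption with these weights built in, which is presumably what the appendix's formal statement does. I would also check that levels cancel as claimed: only differences of $m_0$ appear, so a constant capability gap between $M$ and $M_0$ drops out.
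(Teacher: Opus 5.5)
Your proposal is correct and follows the paper's proof essentially step for step: both arguments standardize the four cells to the target's pre-cutoff covariate law ($Q$ in the paper, your $\pi$), use \Cref{ass:struct} and iterated expectations to get $\Delta^M=(\text{target honest change})+B$, use $L^{M_0}\equiv0$ to make the control's change purely honest, and cancel the two honest changes via the parallel-change clause of \Cref{ass:transport}. Your insistence that the pre-side averages be taken under the target's conditional law of $G$ given $X$, with common weights for both models, only makes explicit what the paper's definition of $\mu^A_{\pm}$ leaves implicit; there both models are scored on the same questions.
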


\noindent The parallel-change condition is not fully testable, because the target's
pre-cutoff cell is contaminated by construction; its testable implications are in
\Cref{app:delta-machinery}. Its dominant violation is signed: a control too weak to express
recency \emph{overstates} $B$---the weak-control trap of \Cref{sec:deployment}---so a null
is conservative, while a positive must carry the matching checks. A boundary variant
requiring only a no-discontinuity control underwrites the deployment matrix
(\Cref{prop:did-boundary}). How to read wild estimates is deferred to \Cref{sec:wild}.

\subsection{Route 3, paraphrase probes: detection only}
\label{sec:estimation:prc}

The third route needs only black-box access---a batch of paraphrase calls plus leakage-free
calibration anchors---and is a diagnostic, not an estimator. It yields the
prediction-residual covariance $\PRC=\Cov(P,\,Y-P\mid G<0)$ of the paraphrase-consensus
forecast, positive under convex-pull leakage with partial extraction
(\Cref{prop:efficiency}). Three limits keep it secondary: it requires calibration, since LLM
overconfidence makes the raw covariance negative (\Cref{rem:calib}); it is blind at full
memorization ($w{=}1$), where the RD jump is maximal (\Cref{prop:connection}); and on
frontier data it failed its pre-specified calibration-stability gate (\Cref{app:prc-real}).
We use it to detect, never to estimate $B$ (\Cref{app:delta-machinery}).

\subsection{One reference suffices; the adjusted score}
\label{sec:estimation:minimal}
\label{sec:estimation:deliverable}

Either of Routes 1--2 point-identifies $B$ under its stated assumptions, and Route 3 detects
without identifying (\Cref{prop:minimal}); one suitable external reference therefore
suffices, though we do not claim the family is exhaustive. \Cref{tab:routes} states the
input, guarantee, and check for each. When global $B$ is point-identified, report a
leakage-adjusted score with a confidence interval; otherwise report the boundary or
detection estimand without converting it into a global correction.

\begin{table}[t]\centering
\caption{\textbf{Three identification routes: input, guarantee, and check.} R1--R2
point-identify $B$ (or boundary $J$) under stated assumptions; R3 is detection only. Every
assumption carries a falsification check exercised in
\Cref{sec:validation,sec:deployment}.}
\label{tab:routes}
\footnotesize
\begin{tabular}{@{}p{2.3cm}p{2.6cm}p{3.2cm}p{3.0cm}@{}}
\toprule
\textbf{Route (input)} & \textbf{Identifies} & \textbf{Key assumption} & \textbf{Check} \\
\midrule
R1: known cutoff (RD)
  & boundary $J$; global $B$ with extrapolation
  & continuity (\Cref{ass:cont}); extrapolation (\Cref{ass:smooth})
  & placebo jump; bandwidth; date sensitivity \\[2pt]
R2: clean control (DiD)
  & global $B$; boundary $J$ (\Cref{prop:did-boundary})
  & matched clean control for $B$; shared jump for $J$
  & placebo cutoffs; control balance \\[2pt]
R3: paraphrases (PRC)
  & detection only (not $B$)
  & calibration (\Cref{rem:calib})
  & calibration on leakage-free anchors; pairing with R1 covers $w{=}1$ \\
\bottomrule
\end{tabular}
\end{table}

\begin{corollary}[Leakage-adjusted performance]
\label{cor:honest}
Whenever $B$ is point-identified (Route~1 or~2), the leakage-adjusted performance removes $B$
in the appropriate direction: for a risk (any bounded loss, lower is better),
$\Radj=\Rmeas+B$; for a higher-is-better score (e.g.\ accuracy or pass@1),
$\mathrm{Score}_{\mathrm{adj}}=\mathrm{Score}_{\mathrm{meas}}-B$, with $B$ in score units.
\end{corollary}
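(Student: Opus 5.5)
The corollary is essentially a rearrangement of the definition of $B$ in \eqref{eq:Bdef}, combined with the identification results of Routes~1 and~2. The plan is to take $\Rmeas$ to be the backtest's observed mean risk on the leakage-eligible set, $\Rmeas=\E[m(X,G)\mid G<0]$. The honest counterpart is then $\Radj:=\E[m_0(X,G)\mid G<0]$, the risk the deployed model would incur on the same questions using legitimate information only. By linearity of conditional expectation and \eqref{eq:Bdef},
\[
\Radj-\Rmeas=\E\big[m_0(X,G)-m(X,G)\mid G<0\big]=B,
\]
so $\Radj=\Rmeas+B$ holds as an identity whenever the operational decomposition of \Cref{ass:struct} exists. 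The definition of $B$ uses no property of the Brier loss beyond boundedness, which is needed for the expectations to exist. The risk case therefore holds for any bounded loss.

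The second step checks that both terms on the right are identified. $\Rmeas$ is a functional of the passive law of $(P,Y,X,G)$. $B$ is point-identified under the assumptions of the chosen route: by \Cref{prop:did} under Route~2, and by \Cref{prop:rd-global} (RD plus the extrapolation assumption \Cref{ass:smooth}) under Route~1. Note that \Cref{thm:rd} alone gives only the boundary jump $J$, so the global extrapolation is required for a global adjustment. This is consistent with the paragraph preceding the corollary. $\Radj$ is then identified as a sum of identified quantities. Its plug-in estimate inherits a confidence interval from the joint bootstrap of $(\widehat{\Rmeas},\Bhat)$.

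For a higher-is-better score $S$, I would apply the same argument to the bounded loss $\ell=M-S$, where $M$ bounds the score. The observed and honest loss surfaces are $M-m^S$ and $M-m_0^S$. Their leakage term $L$ is then $m^S-m_0^S\ge0$, and this is the same $B$ expressed in score units. Substituting into $\Radj=\Rmeas+B$ gives $M-\mathrm{Score}_{\mathrm{adj}}=M-\mathrm{Score}_{\mathrm{meas}}+B$, hence $\mathrm{Score}_{\mathrm{adj}}=\mathrm{Score}_{\mathrm{meas}}-B$.

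The only delicate point is bookkeeping rather than mathematics. The statement must say which population the score averages over. If $\Rmeas$ is computed over all questions rather than only those with $G<0$, the correction must be scaled by $\Pr(G<0)$. This is because $L=0$ on $\{g\ge0\}$ by \Cref{ass:struct}, giving $\Radj=\Rmeas+\Pr(G<0)\,B$. I would state the leaked-set convention explicitly and note this rescaling as a remark.
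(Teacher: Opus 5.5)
Your proposal is correct and follows the paper's implicit argument: the corollary is the definition $B=\E[m_0-m\mid G<0]$ rearranged, with point identification supplied by \Cref{prop:did} or \Cref{prop:rd-global} (not \Cref{thm:rd} alone). As in \Cref{rem:metric}, the extension to any bounded loss and the sign flip for higher-is-better scores use only the structure $m=m_0-L$. Your explicit leaked-set convention for $\Rmeas$, and the $\Pr(G<0)$ rescaling when the measured risk averages over all questions, is a correct clarification that the paper's statement leaves implicit.
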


\noindent The route results use only $m=m_0-L$ and hold for any bounded loss, licensing the
pass@1 correction of \Cref{sec:exp:e3}. The law $L=b_0\,w(2-w)$ and $\PRC$ are
Brier-specific (\Cref{rem:metric}). The practitioner's recipe and decision flowchart
(\Cref{fig:decision}) are in \Cref{app:delta-machinery}.
\section{Validation with Planted Leakage}
\label{sec:validation}

The estimators of \Cref{sec:estimation} claim to measure leakage. Before deploying them on
frontier models, we test them where the answer is known. M2 re-analyzes a public
controlled-contamination study at pretraining scale. M3 plants temporal leakage in twin
models on real forecasting questions and asks the estimators to find it. A claim-to-evidence
contract table mapping each theoretical claim to the experiment that carries it---across this
section, the deployment of \Cref{sec:deployment}, and the M1 check of
\Cref{sec:intro}---is given as \Cref{tab:contract} in \Cref{app:experiments}.

\paragraph{Data: the forecasting panel.}
All forecasting experiments use one panel built from the public archive of ForecastBench
\citep{karger2025forecastbench}, the field-standard dynamic forecasting benchmark:
$1{,}646$ resolved binary market questions (Polymarket, Metaculus, Manifold, INFER) resolving
2024Q3 through 2026Q3, each carrying its resolution date, its realized outcome $Y$, and a
contemporaneous crowd forecast $c_0$ frozen before resolution. The score is the
crowd-anchored Brier reduction $s=(c_0-Y)^2-(P-Y)^2$; positive $s$ means the model beats the
crowd. Confidence intervals use a cluster bootstrap over source-by-month clusters; boundary
tests add date-permutation and placebo-date checks; configurations, prices, and the
pre-registration are in \Cref{app:experiments} (coverage: \Cref{app:coverage}).

\paragraph{A pretraining-scale anchor (M2).}
\label{sec:exp:e2}
The Hubble suite \citep{hubble2025} pretrains a \emph{perturbed} model with benchmark
documents inserted at known duplication counts $r\in\{0,1,4,16,64,256\}$ and a
\emph{standard} twin trained identically without them. Re-analyzing its published accuracies
(no new compute) anchors the causal end: benchmark documents inserted into pretraining
inflate measured scores. The placebo-centered contrast rises
strictly monotonically with dose, from $+0.011$ at $r=1$ to $+0.403^\ast$ at $r=256$, with
sixteen duplicates already worth about 21 accuracy points (\Cref{fig:m2} in \Cref{app:e2};
Spearman $\rho=1.0$, permutation $p=0.0083$). The rise is contamination, not composition:
the clean twin is nearly flat across dose bins, and a self-baseline check agrees
(\Cref{app:e2}). Hubble is question
answering with no time axis, so it cannot test the recency confound; that is what M3 adds
(full analysis: \Cref{app:e2,app:supp-hubble}).

\subsection{Twin models with known temporal leakage (M3)}
\label{sec:exp:m3}

No observational study can supply per-question ground truth for leakage. We therefore
created the leakage ourselves. We continue-train two LoRA twins of Qwen3.5-35B-A3B-Base on
the same 95-million-token corpus, which interleaves public filler text with injected
news-style documents about panel questions that resolved before a pseudo-cutoff $T^\ast$
(April 2026). In the \emph{treatment} twin the documents state realized outcomes; in the
\emph{control} twin the same documents appear with outcomes scrubbed, matched on source,
topic, and length, so both twins gain the same recency and only the treatment twin gains
leakage. Injection covers the $988$ questions on which the untuned base model is demonstrably
uncertain, with dose levels $r\in\{0,1,4,16,64\}$ randomized across questions, so the full
dose grid costs two training runs. Probabilities are read from paired ``Yes''/``No''
continuation log-probabilities under a fixed few-shot prompt; a ten-percent pilot passed
five pre-registered gates before the full runs, and the protocol and logged deviations are
in \Cref{app:m3}.

\paragraph{Manipulation checks.}
Three checks confirm the design does what it claims: the treatment twin memorizes its
documents (per-token log-probability exceeds the control twin's by $0.76$, CI $[0.72,0.81]$);
the movement is outcome-directed (at dose 64 the treatment twin's probability moves toward
the realized outcome by $+0.139$, CI $[+0.119,+0.159]$); and recency is genuinely present (on
injected-topic questions the scrubbed control beats the untuned base by $+0.036$ Brier, CI
$[+0.028,+0.044]$). The confound the theory targets is in the data.

\paragraph{Inflation rises with dose and is null at dose zero.}
The twin contrast rises strictly monotonically in dose and is null at dose zero
(\Cref{fig:m3}, left). Contamination does not need to be heavy: a single document exposure
is worth about one point of crowd-anchored Brier, sixteen about four. The dose-zero null
shows the inflation is question-specific, not a generic gain from training, and the curve
reproduces M2's pretraining dose-response on real forecasting questions, now with per-item
ground truth.

\begin{figure}[t]\centering
\figorbox[0.415\linewidth]{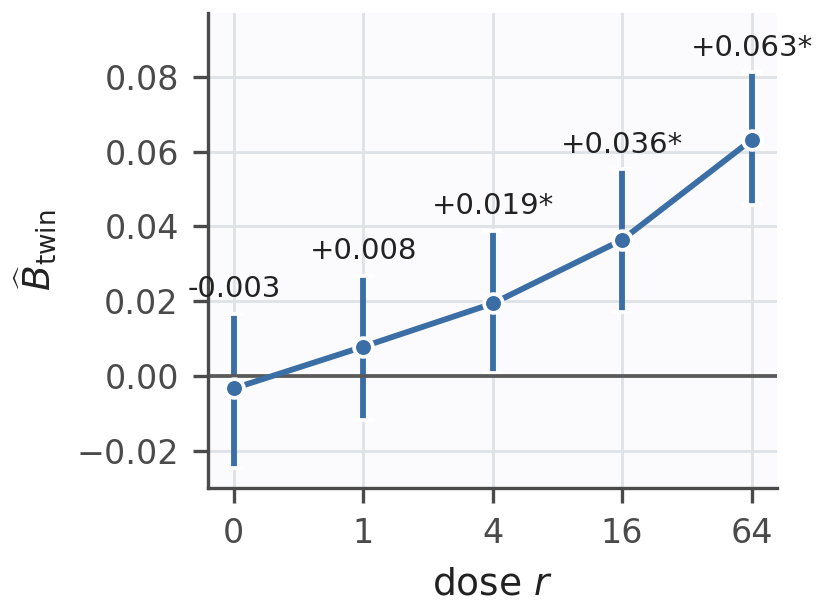}\hfill
\figorbox[0.555\linewidth]{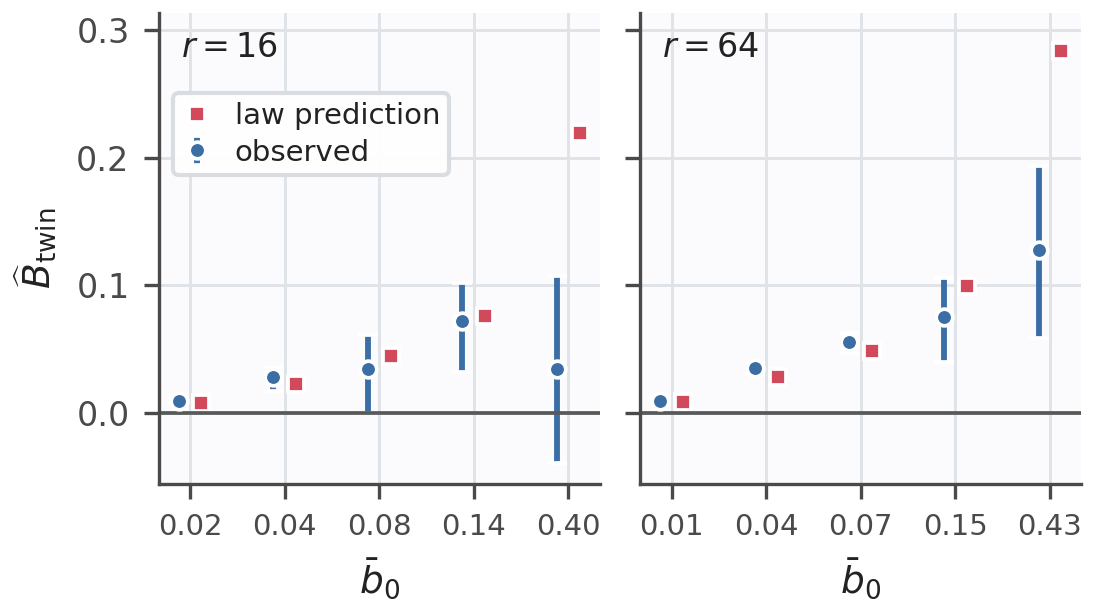}
\caption{\textbf{M3: injected leakage produces a strictly monotone dose-response (left) and
concentrates on hard questions as the law $b_0\,w(2-w)$ predicts (right).} Treatment and
control twins of one base model train on the same documents; only the treatment copy sees
realized outcomes. Both panels plot the twin contrast
$\widehat B_{\mathrm{twin}}=\bar\ell_{\mathrm{ctrl}}-\bar\ell_{\mathrm{treat}}$
($\ell=(P-Y)^2$; positive = leakage inflation); whiskers: bootstrap 95\% CIs; $^\ast$: CI
excludes zero. \emph{Left}: by dose $r$, the number of outcome-stating documents injected
per question (randomized across $988$ questions): null at $r=0$ ($-0.003$), rising strictly
monotonically to $+0.063^\ast$ at $r=64$. \emph{Right}: by difficulty, questions sorted into
quintiles of $b_0=(P_{\mathrm{ctrl}}-Y)^2$ (ticks: bin mean $\bar b_0$), at doses 16 and 64.
Red squares: the zero-free-parameter prediction $\bar b_0\,\hat w(2-\hat w)$, with $\hat w$
fitted from the treatment twin's probability movement alone, never from the plotted
contrasts. Inflation grows $13\times$ from easiest to hardest bin ($+0.010$ to
$+0.128$ at $r=64$); the prediction matches except in the hardest bin, where extraction
weakens ($\hat w$ drops from ${\approx}0.5$ to $0.16$) and the law acts as an upper bound.}
\label{fig:m3}
\label{fig:m3dose}
\label{fig:m3conc}
\end{figure}

\paragraph{The concentration law holds, and its one miss is interpretable.}
\Cref{prop:perq} predicts $L=b_0\,w(2-w)$. \Cref{fig:m3} (right) tests the law with zero
free parameters: $\hat w$ is fitted from the treatment twin's probability movement alone,
never from the plotted contrasts. The prediction falls inside the observed CI in four of
five difficulty quintiles at dose 16 and three of five at dose 64, and the globally fitted
extraction weight rises with dose ($\hat w=0.24$, $0.23$, $0.33$, $0.42$ at $r=1,4,16,64$);
tercile and ex-ante binnings agree (\Cref{app:m3}). The one miss is the hardest quintile, where extraction weakens and the
homogeneous-$w$ law overshoots, so there it acts as an upper bound---the conservative
direction for score correction. Contamination distorts a backtest most where the model is
weakest.

\paragraph{The boundary estimators recover the injection; recency alone shows no jump.}
An analyst who did not create this leakage would measure the boundary statistics of
\Cref{sec:estimation}: on the treatment twin the naive pre/post gap at $T^\ast$ is $+0.053$
and the local RD jump (90-day bandwidth) is $+0.052$; on the control twin the same statistics
are $+0.001$ and $+0.012$, and a placebo boundary 120 days earlier is null
($-0.014$). Detection and localization both work: injected leakage produces a starred jump
at the right date, and recency alone produces none---the control twin absorbed the same 95
million tokens and shows no inflation signature, the real-model face of the
no-free-inflation asymmetry (\Cref{cor:no-free}).

\paragraph{PRC detects the evidence, once calibrated.}
The prediction-residual covariance of \Cref{prop:efficiency} had not previously been tested
against known contamination. Raw PRC fails: it is negative in every cell of the twin-by-dose
grid, the miscalibration failure \Cref{rem:calib} warns about. The twin-differenced,
temperature-calibrated statistic behaves as predicted: $+0.0069^\ast$ on injected questions,
null at dose zero, monotone in dose (\Cref{fig:m3prc} in \Cref{app:m3}). PRC fires where
evidence leakage was injected and nowhere else, and only with differencing and calibration.

\subsection{How to read a wild estimate}
\label{sec:wild}

The twins license two reading rules for the deployment results of \Cref{sec:deployment}.

\paragraph{A boundary estimate is a footprint, not a per-question inflation.}
On the injected population, per-item ground-truth inflation averages $+0.024$, roughly half
the boundary estimate of $+0.052$; the remainder is spillover with a measurable mechanism.
The injected documents state outcomes that are $83\%$ NO, so the treatment twin acquires a
base-rate lean that is nearly free on pre-cutoff questions but depresses post-cutoff scores,
and a pre/post estimator cannot tell inflated pre from depressed post (the law analyses are
immune: dose-zero differencing removes the lean). A wild RD or DiD estimate is therefore the
total contamination footprint at the boundary; it bounds the per-question inflation from
above, which makes the score correction of \Cref{cor:honest} conservative. Two wild
mechanisms the sharp injection cannot exhibit push the other way: outcomes effectively
decided in pre-cutoff text raise the honest post-boundary score, and ingestion lag thins
memorized outcomes just before it. Both attenuate the jump toward zero: the upper bound
holds against spillover only, and a small jump does not exclude leakage deeper in the
pre-cutoff period (\Cref{app:m3}).

\paragraph{A null bounds; it does not certify.}
Every null in \Cref{sec:deployment} comes with a power floor: the smallest effect the
design would have detected. On the forecasting matrix, the minimum detectable effects at
$80\%$ power are $0.05$ to $0.11$ anchored-Brier units per cell. A null therefore excludes
code-domain-sized leakage (about $0.09$) for the best-powered cells and says nothing about
smaller leakage. No cleanliness certificate is issued below the power floor.

\section{Deployment on Frontier Models}
\label{sec:deployment}
\label{sec:experiments}
\label{sec:exp:e3}

We now run the audit where nobody knows the answer. Results are ordered as an auditor should
read them: cross-model nulls, disciplined nulls under the matched control, one detection, and
a positive control on documented contamination.

\subsection{The forecasting matrix: no flagship flagged at its own cutoff}
\label{sec:exp:m4f}

We query four frontier targets with in-window documented cutoffs (rows of \Cref{tab:m4f})
against the near-clean controls MiniMax-M3 and Claude-Opus-4.7 (their January 2026 cutoffs
leave no leakage discontinuity inside the tested window), computing the boundary jump at
every assumed boundary. A genuine signal must jump at the
model's own documented cutoff and nowhere else. No own-cutoff cell is starred, and the joint
diagonal-versus-off-diagonal permutation test is null ($p=0.53$). Under the power floors of
\Cref{sec:wild}, the matrix excludes code-domain-sized effects for the best-powered cells
while remaining uninformative about smaller leakage. An exploratory Gemini-3.1-Pro row is in
\Cref{app:m4}. Run honestly, the cross-model route substantiates no leakage claim for these
flagships.

\begin{table}[t]\centering
\caption{\textbf{M4-F: no own-cutoff jump among four flagships; the within-family arm flags
GPT-5.5.} Cell: $\widehat J_m(t)=(\bar s_m-\bar s_c)_{\mathrm{post}}-(\bar s_m-\bar
s_c)_{\mathrm{pre}}$ over $\pm90$ days of assumed boundary $t$ ($s$ as in
\Cref{fig:m1forest}; $c$: top block = MiniMax-M3/Claude-Opus-4.7; bottom = GPT-5.5 family
archived real-time forecasts). Shaded: own documented cutoff. $^\ast$: CI excludes zero.
Dashes: not defined---the archived-forecast arm exists only at the target's own cutoff.
Result: no shaded top-block cell starred (permutation $p=0.53$); bottom block starred at the
shaded cell (pre-specified unbanded / horizon-banded, \Cref{sec:exp:m4anchor}).}
\label{tab:m4f}
\footnotesize
\begin{tabular}{lcccc}
\toprule
 & \multicolumn{4}{c}{assumed boundary $t$} \\
\cmidrule(lr){2-5}
model (own cutoff) & Mar'25 & Apr'25 & Aug'25 & Dec'25 \\
\midrule
\multicolumn{5}{l}{\emph{cross-model: flagship vs.\ clean controls}} \\
\addlinespace[1pt]
DeepSeek-V3.1 (Mar'25) & \cellcolor{gray!18}$-0.009$ & $+0.015$ & $-0.065$ & $+0.018$ \\
Kimi-K2.6 (Apr'25)     & $+0.034$ & \cellcolor{gray!18}$+0.022$ & $-0.020$ & $-0.013$ \\
GPT-5.4 (Aug'25)       & $-0.012$ & $+0.022$ & \cellcolor{gray!18}$-0.051$ & $+0.030$ \\
GPT-5.5 (Dec'25)       & $-0.008$ & $+0.010$ & $-0.011$ & \cellcolor{gray!18}$+0.034$ \\
\addlinespace[2pt]
\multicolumn{5}{l}{\emph{within-family: GPT-5.5 vs.\ archived real-time forecasts}} \\
\addlinespace[1pt]
GPT-5.5 (Dec'25) & --- & --- & --- & \cellcolor{gray!18}$+0.061^\ast$ / $+0.106^\ast$ \\
\bottomrule
\end{tabular}
\end{table}

\subsection{Disciplined nulls under the matched control}
\label{sec:exp:m5}

The primary target is Qwen3.5-35B-A3B (year-level 2026 cutoff), evaluated at a July 2025
boundary on panel questions resolving 2025 through mid-2026. Because the cutoff is
year-level, the boundary lies inside the plausible training window, so both sides are
leakage-eligible: the design detects inflation that \emph{differs} across the boundary;
uniform inflation would evade it at every sweep placement. The control is selected
\emph{before} reading any DiD result by a pre-registered profile-matching protocol on
post-boundary-clean questions, which picks GPT-5 over Gemini-3.1-Pro (profile distance
$0.0051$ vs.\ $0.0088$; \Cref{app:m5}). Secondary targets are the four other flagships of
\Cref{fig:m5dissolve}; GPT-3.5-Turbo is a deliberately mismatched weak control.

Naive gaps again raise flags, and again the flags are recency: after the paired,
profile-matched DiD, no adjusted estimate is significantly positive (\Cref{fig:m5dissolve}).
Two checks show the nulls are earned. Substituting GPT-3.5-Turbo re-inflates the primary
estimate, so the nulls come from matching, not from an estimator biased toward zero. Semi-synthetic injections are detected with probability $0.97$ at effect
$0.05$ and $1.00$ at the code-domain-sized effect $0.09$. Full estimates and robustness are
in \Cref{app:m5}.

\begin{figure}[t]\centering
\figorbox[0.59\linewidth]{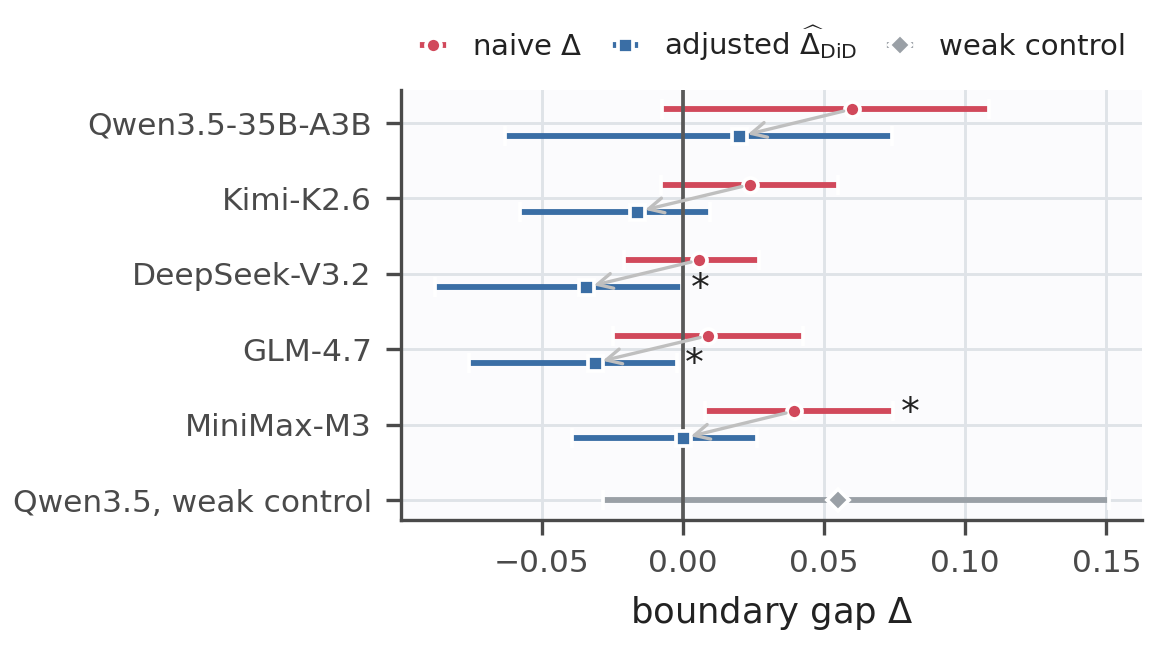}
\caption{\textbf{M5: naive leakage flags dissolve under the matched adjustment.} Red: naive
gap $\Delta=\bar s_{\mathrm{pre}}-\bar s_{\mathrm{post}}$. Blue: adjusted
$\widehat\Delta_{\mathrm{DiD}}=\Delta_{\mathrm{target}}-\Delta_{\mathrm{control}}$ against
pre-registered matched control GPT-5. Grey diamond: primary target vs.\ mismatched
GPT-3.5-Turbo. Whiskers: 95\% CIs; $^\ast$: CI excludes zero. Result: no adjusted estimate
significantly positive; mismatched control re-inflates to $+0.055$.}
\label{fig:m5dissolve}
\end{figure}

\subsection{One detection: a boundary signature for GPT-5.5}
\label{sec:exp:m4anchor}

ForecastBench archives real-time forecasts, clean by construction, enabling a within-family
design: comparing retrospective GPT-5.5 scores with archived scores of GPT-5-Mini and GPT-5.1
cancels the crowd anchor. The jump at GPT-5.5's December 2025 cutoff is $+0.061$ (CI
$[+0.023,+0.093]$), and $+0.106$ (CI $[+0.056,+0.189]$) inside a 10--75-day
anchor-horizon band logged as a dated deviation; both variants are starred
(\Cref{app:m4}). Protocol checks and a DeepSeek-V3.1 placebo at the same boundary are null,
and the unstarred matrix cell ($+0.034$) is attenuation plus lower power, not contradiction
(\Cref{app:m4}). We scope the finding as \Cref{sec:wild} instructs: a boundary leakage
signature at GPT-5.5's documented cutoff---a contamination footprint, not an accusation.
GPT-5.4's unbanded jump ($+0.048$) is null once horizons are balanced and is reported as an
artifact, not a detection.

\subsection{Positive control: documented contamination on dated code}
\label{sec:exp:m4c}

LiveCodeBench \citep{livecodebench2025} dates $1{,}055$ problems by contest release and
publishes per-problem pass@1 (\Cref{rem:metric}), so the same matrix runs at near-zero cost
on a benchmark era where contamination is documented. GPT-4o and Claude-3.5-Sonnet are
starred exactly at their own cutoffs (date-permutation $p=0.023$, $p=0.017$), with all
off-diagonal cells null (\Cref{tab:m4c}). A self-generated extension to 2024-cutoff targets
is in \Cref{app:e3}; coverage across M1--M5 spans 2023--2026 vintages of six-plus vendors
(\Cref{app:coverage}).

\begin{table}[t]\centering
\caption{\textbf{M4-C: positive control---the same test fires at both models' own cutoffs
and nowhere else.} Cell: $\widehat J_m(t)$ of \Cref{tab:m4f} on LiveCodeBench pass@1 within
$\pm160$ days of $t$; baseline = 2025-cutoff models. Shaded: own cutoff. $^\ast$: CI
excludes zero ($p=0.023$, $p=0.017$). The large unstarred placebo (Claude at Dec'24,
$+0.128$) is noise: its CI $[-0.23,+0.33]$ is five times wider than the others because the
$\pm160$-day window overruns the end of the problem panel.}
\label{tab:m4c}
\footnotesize
\begin{tabular}{lcccccc}
\toprule
 & \multicolumn{6}{c}{assumed boundary $t$} \\
\cmidrule(lr){2-7}
model (own cutoff) & Oct'23 & Apr'24 & May'24 & Jul'24 & Sep'24 & Dec'24 \\
\midrule
GPT-4o-0806 (Oct'23) & \cellcolor{gray!18}$+0.090^\ast$ & $-0.050$ & $-0.048$ & $-0.056$ & $-0.006$ & $-0.029$ \\
Claude-3.5-Sonnet (Apr'24) & $+0.017$ & \cellcolor{gray!18}$+0.095^\ast$ & $+0.069$ & $-0.014$ & $-0.019$ & $+0.128$ \\
\bottomrule
\end{tabular}
\end{table}
\section{Related Work}
\label{sec:related}

\paragraph{Detecting contamination.}
A large literature asks \emph{whether} evaluation data was seen in training: $n$-gram and
longest-substring screens \citep{brown2020gpt3,singh2024contamination}, membership-inference
and likelihood tests \citep{shokri2017membership,carlini2021extracting,shi2024detecting},
exchangeability tests \citep{oren2024proving}, and guided or perturbation-based black-box
probes \citep{golchin2024time,codec2026}; surveys catalogue detectors and their failure
modes \citep{sainz2023nlp,xu2024benchmark,ravaut2024survey}. All are membership-oriented,
and membership inference is near chance at pretraining scale, its apparent successes an
artifact of temporal shift \citep{duan2024membership}. Temporal leakage largely escapes
these tests, acting through parametric memory along an indirect causal channel
(\Cref{fig:dag}), and detection does not answer how much the score is inflated.

\paragraph{Quantifying contamination's score effect.}
Controlled training studies insert benchmark data deliberately
\citep{magar2022data,jiang2024investigating,hubble2025}; Hubble grounds our M2, M3 adapts
its dose design to the temporal setting, and concurrent work fits dose--response curves for
generative evaluations \citep{schaeffer2026quantifying}. On deployed models, clean-reference estimators
\citep{singh2024contamination,haimes2024benchmark,zhang2024gsm1k,dekoninck2024constat} are
instances of our DiD route, overlap audits price contamination on code benchmarks
\citep{riddell2024quantifying}, and rephrased samples evade $n$-gram screens
\citep{yang2023rethinking}. Live and date-stamped benchmarks avoid contamination by
construction \citep{white2024livebench,livecodebench2025,wu2025antileak}. All define
``clean'' by membership in a static test set; a backtest defines it by a \emph{date}, so
every pre-cutoff question is potentially exposed and the reference must come from elsewhere
(\Cref{sec:estimation}).

\paragraph{Temporal leakage and lookahead bias.}
Cutoff-based natural experiments document the symptom
\citep{roberts2024cutoff,li2024task}; without a recency model, the same gap is consistent
with legitimate recency (\Cref{thm:nonident}). In financial and forecasting backtests,
lookahead bias has been separated by anonymization \citep{glasserman2024lookahead}, tested
on unpredictable events \citep{sarkar2024lookahead}, argued to undermine LLM forecaster
evaluation \citep{paleka2025pitfalls}, and measured or mitigated on the defining benchmarks
\citep{zou2022autocast,halawi2024forecasting,karger2025forecastbench,exante2026,lookaheadbench2026,gao2025lookahead,shapleydclr2025,fincad2026}.
Retraining avoids leakage by construction
\citep{lazaridou2021mind,dhingra2022time,drinkall2024time,he2025chronologically,yan2026datedgpt}
but cannot evaluate a given frontier model. Closest to us,
\citet{lopezlira2025memorization} prove counterfactual forecasting ability is non-identified
once a model trains on realized outcomes; we add a distinct recency channel, derive where
leakage concentrates, and show that one external reference restores measurement. Classical
identification tools are catalogued in \Cref{app:related-tools}.
\section{Limitations and Conclusion}
\label{sec:discussion}
\label{sec:conclusion}

Our guarantees are conditional on assumptions made explicit and, where possible, tested. The per-question law excludes misleading leakage ($w<0$, \Cref{ass:pull}), so the
estimands measure \emph{net} inflation; the twins impose rather than test this restriction
(their documents state only true outcomes), and they show that homogeneous-$w$ overstates
inflation on the hardest questions (\Cref{sec:exp:m3}). DiD requires a clean,
recency-matched control (\Cref{ass:transport}): a weak control overstates $B$, a failure we
demonstrate with GPT-3.5-Turbo and guard against with pre-registered profile matching
(necessary, not sufficient). RD rests on continuity
(\Cref{ass:cont}); converting the boundary jump into global $B$ adds an extrapolation
assumption (\Cref{ass:smooth}) that our code data do not support. A wild boundary jump is
best read as a total contamination footprint, and quasi-resolved outcomes and ingestion lag
attenuate it toward zero (\Cref{sec:wild}). PRC has a ground-truth
validation but failed its real-data calibration-stability gate (\Cref{app:prc-real}), so it
remains a detection tool. Empirically, the twins are LoRA continued-training runs (M2
anchors pretraining), documented cutoffs are month-resolution, and the M5 nulls exclude
boundary-differential inflation above $0.05$ Brier-reduction units, not uniform inflation
over the training window (\Cref{sec:exp:m5}). Natural next steps are
a frontier-scale planted-leakage pretraining run, a capability-matched dated model pair that
point-identifies a nonzero global $B$, content (rather than framing) perturbations that
identify $w$, extensions to non-binary and ranking outputs, and transferable PRC
calibration, promoting Route~3 from detection toward estimation.

Backtest scores mix skill, recency, and leakage, and we proved that no passive backtest can
separate the three: a flat pre/post profile is not evidence of a clean backtest. The
inflation is nevertheless systematic: under a minimal model of partial recall it obeys
$B=\E[b_0\,w(2-w)\mid G<0]$, concentrating where the crowd was surprised and training
coverage was dense. One external
reference restores measurement: a known cutoff identifies leakage at the boundary, and a
matched clean control identifies it globally and yields a leakage-adjusted score. Validated
against ground truth, the estimators recovered planted leakage in dose, location, and
per-question profile; deployed in the wild, they detected one cutoff-localized signature and
the documented code contamination, and, at the stated power floors, cleared five models whose
apparent advantages were recency alone. Backtests need not be discarded; they need one
defensible reference.

\subsubsection*{Broader Impact and Reproducibility Statements}
This paper reports a contamination finding about a named commercial model (GPT-5.5), held to
the evidentiary standard the paper argues for (\Cref{sec:exp:m4anchor}): language scoped to
a boundary leakage signature, not an accusation of intent or a global capability claim. The
work otherwise improves evaluation hygiene and raises no concerns beyond those of the
underlying public benchmarks. All analysis code, the forecasting panel, twin-training
configurations, pre-registration and deviations documents, and per-experiment results are
available at \url{https://github.com/ZeyuZhang1901/Temporal-Leakage-Backtesting};
\Cref{app:provenance} maps each result to its script and output.

\bibliographystyle{tmlr}
\bibliography{references}

\newpage
\appendix
\section*{Appendix roadmap}

Nothing in this appendix is optional filler: it holds the twin formalism demoted from the
main text, every proof, the practitioner recipe, full experimental configurations, demoted
main-text detail, and the provenance map. Read by role:

\begin{itemize}
\item \textbf{A.} \Cref{app:notation}: notation and standing conventions.
\item \textbf{B.} \Cref{app:proofs}: counterfactual twin theory (\Cref{app:twin}) plus all
proofs for \Cref{sec:impossibility,sec:concentration,sec:estimation}, in main-text order,
with technical complements (including the interpretive factorization of $w$).
\item \textbf{C.} \Cref{app:delta-machinery}: practitioner recipe, control validation,
finite-$K$ PRC machinery, paraphrase protocol, and the classical identification-tool map
(\Cref{app:related-tools}).
\item \textbf{D.} \Cref{app:experiments}: full configurations and complete results for
M1--M5, including material demoted from the main text (M1 matched-window check, Gemini
exploratory row, clean-anchor banding and design disagreement, code-arm extension).
\item \textbf{E.} \Cref{app:supp}: synthetic validation (E1) and per-experiment robustness.
\item \textbf{F.} \Cref{app:provenance}: every reported number mapped to its script and
output file.
\end{itemize}

A reader checking the theory needs A--B; a reader assessing or replicating the experiments
needs D--F; a practitioner deploying the routes needs C.

\section{Notation and standing conventions}
\label{app:notation}

\begin{center}
\small
\begin{tabular}{@{}p{3.6cm}p{9.6cm}@{}}
\toprule
\textbf{Symbol} & \textbf{Meaning} \\
\midrule
$M$, $T$ & backtested model and its training cutoff (\Cref{sec:setup}) \\
$Y$, $\tau$ & realized binary outcome; its resolution time (\Cref{sec:setup}) \\
$G=\tau-T$ & running variable; $g<0$ leakage-eligible, $g>0$ clean (\Cref{sec:setup}) \\
$X$ & leakage-free difficulty covariate, e.g.\ crowd forecast $c_0$ (\Cref{sec:setup}) \\
$P$ & the model's forecast under a fixed protocol (\Cref{sec:setup}) \\
$m(x,g)$ & observable conditional mean loss, \eqref{eq:m} (\Cref{sec:setup}) \\
$\Dclean$, $\Dbridge$, $\Dtrain$ & pre-$t_0$ corpus; bridge data from the leaked window $(t_0,T]$; training data (\Cref{sec:setup,app:twin}) \\
$\ycorrupt$, $\yclean$ & corrupt-trained and clean-trained predictions (\Cref{sec:setup,app:twin}) \\
$\ybayes$, $\ycleanbayes$ & Bayesian predictors with/without the bridge data (\Cref{app:twin}) \\
$S=\ybayes-\ycleanbayes$ & leakage signal (\Cref{app:twin}) \\
$\delta=\ycorrupt-\yclean$ & perturbation of the deployed model (\Cref{app:twin}) \\
$\lambda$, $\mudelta$, $\tnu$ & loading of $\delta$ on $S$; conditional shift; orthogonal residual (\Cref{app:twin}) \\
$B_{\mathrm{twin}}$ & counterfactual inflation, \eqref{eq:bias-def} (\Cref{app:twin}) \\
$m_0(x,g)$, $L(x,g)$ & latent leakage-free risk surface; leakage contribution (\Cref{sec:setup,sec:impossibility}) \\
$B=\E[L\mid G<0]$ & operational inflation, \eqref{eq:Bdef} (\Cref{sec:setup}) \\
$P_{\mathrm{hon}}$, $w$ & honest forecast; per-question extraction weight (\Cref{sec:concentration}) \\
$b_0=(P_{\mathrm{hon}}-Y)^2$ & honest Brier, the ``stakes'' (\Cref{sec:concentration}) \\
$J(x)$ & leakage jump at the cutoff boundary (\Cref{sec:estimation}) \\
$\PRC$ & prediction-residual covariance, a detection statistic (\Cref{sec:estimation}) \\
$\Radj=\Rmeas+\Bhat$ & leakage-adjusted score (\Cref{sec:estimation}) \\
\bottomrule
\end{tabular}
\end{center}

\paragraph{Standing regularity.}
All predictions and outcomes have finite second moments. Lowercase $x,g$ denote realizations
of $X,G$. Conditional expectations, variances, and
projections are defined almost surely on their relevant support. Whenever
$\Var(S\mid X,\Dtrain,G<0)=0$, the projection coefficient $\lambda=\Cov(\delta,S)/\Var(S)$ is a
$0/0$ expression; we adopt the convention $\lambda=0$ there, since a signal with no conditional
variation carries no leakage to load on. Common support and density conditions are invoked explicitly for the
identification route that requires them.

\section{The twin theory, proofs, and technical complements}
\label{app:proofs}

The counterfactual twin theory summarized in \Cref{rem:twin} comes first (\Cref{app:twin});
the proofs of all formal results follow, grouped by the section in which each result appears.
Every entry states the context of the result, restates it, and
then gives the proof. Interspersed with the proofs are short technical complements that
belong with the mathematics rather than the experiments: auxiliary lemmas stated only here, a
bound on the systematic shift, and an interpretive parameterization of the extraction weight
(\Cref{app:w-interp}).

\subsection{The counterfactual twin theory}
\label{app:twin}
\label{sec:decomposition}

The counterfactual frame compares the deployed model with a clean twin retrained without the
leaked data and asks what leaked information does to their score difference; \Cref{rem:twin}
in the main text summarizes the result. The argument moves from an ideal benchmark
(\Cref{sec:decomposition:bayes}) to general predictors (\Cref{sec:decomposition:general}) to
the reason the construction cannot be run in practice
(\Cref{sec:decomposition:obstruction}). With the two models of \Cref{fig:dag}, we write
$\delta=\ycorrupt-\yclean$ for their disagreement and $\mathcal H=\sigma(X,\Dtrain)$ for the
pre-cutoff information; $\ycorrupt$ and $\yclean$ denote the corrupt- and clean-trained
predictions, and for a probabilistic protocol $\ycorrupt=P$. All statements and conditional
moments in this subsection are under the conditional law of leakage-eligible questions
($G<0$). The target is the
\emph{counterfactual inflation}
\begin{equation}
  B_{\mathrm{twin}}
  \;=\;\E\!\left[(\yclean-Y)^2-(\ycorrupt-Y)^2\mid G<0\right],
  \label{eq:bias-def}
\end{equation}
positive when the deployed model looks more accurate than its counterfactual clean version.
Every result below is measured against the \emph{leakage signal}
\begin{equation}
  S\;=\;\underbrace{\E[Y\mid X,\Dtrain,\Dbridge]}_{\ybayes}
  \;-\;\underbrace{\E[Y\mid X,\Dtrain]}_{\ycleanbayes},
  \label{eq:signal-def}
\end{equation}
the amount by which an ideal learner would revise its forecast upon seeing the leaked data:
large exactly when the bridge data would move an ideal forecaster's belief about $Y$, zero when
they carry no information about the question. Because the conditional expectation is the best
possible use of the bridge data, $S$ summarizes all any model could extract from $\Dbridge$
about $Y$---the natural yardstick for leakage.

\subsubsection{Bayes-optimal extraction: the value of leaked information}
\label{sec:decomposition:bayes}

Suppose first that both predictors are Bayes-optimal, $\yclean=\ycleanbayes$ and
$\ycorrupt=\ybayes$: the deployed model extracts everything the leaked data contain, and its
disagreement equals the signal ($\delta=S$).

\begin{restatable}[Bayesian value of bridge information]{theorem}{thmbayesian}
\label{thm:bayesian}
For the Bayes-optimal pair,
\[
B_{\mathrm{twin,Bayes}}
=\E\!\left[\Var(S\mid\mathcal H)\mid G<0\right]\ge0.
\]
\end{restatable}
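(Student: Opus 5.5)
The plan is to expand the squared-loss difference algebraically in terms of the signal $S$, and then remove the cross term with the tower property. Throughout, I work under the conditional law of leakage-eligible questions ($G<0$), as the subsection stipulates, so every expectation and conditional expectation below is taken under that law. Write $c=\ycleanbayes=\E[Y\mid\mathcal H]$ and $b=\ybayes=\E[Y\mid\mathcal F]$, where $\mathcal F=\sigma(X,\Dtrain,\Dbridge)\supseteq\mathcal H$. Then $b=c+S$. The standing finite-second-moment convention makes every quantity below integrable.

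\emph{Step 1 (algebra).} Using $a^2-d^2=(a-d)(a+d)$,
\[
(c-Y)^2-(b-Y)^2=(c-b)(c+b-2Y)=-S\,(2c+S-2Y)=2S\,(Y-c)-S^2 .
\]
Hence $B_{\mathrm{twin,Bayes}}=2\,\E[S(Y-c)]-\E[S^2]$.

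\emph{Step 2 (cross term).} Both $S$ and $c$ are $\mathcal F$-measurable. Conditioning on $\mathcal F$ gives
\[
\E[S(Y-c)]=\E\big[S\,(\E[Y\mid\mathcal F]-c)\big]=\E[S\cdot S]=\E[S^2].
\]
Therefore $B_{\mathrm{twin,Bayes}}=\E[S^2]$. This is the only step that uses Bayes optimality: the clean predictor's residual $Y-b$ is orthogonal to every $\mathcal F$-measurable quantity, and in particular to $S$.

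\emph{Step 3 (variance form).} By the tower property, $\E[S\mid\mathcal H]=\E[\E[Y\mid\mathcal F]\mid\mathcal H]-c=c-c=0$. It follows that $\E[S^2\mid\mathcal H]=\Var(S\mid\mathcal H)$. Taking expectations gives $\E[S^2]=\E[\Var(S\mid\mathcal H)]$, which is the claimed identity. Nonnegativity is immediate because conditional variances are nonnegative.

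I do not expect a real obstacle. The only points that need care are bookkeeping. First, conditioning on $G<0$ must be done consistently: I treat it as a change of the underlying measure, so that $c$ and $b$ are the conditional expectations under that law, as in the definitions of $S$ and $B_{\mathrm{twin}}$. Second, I must confirm the nesting $\mathcal H\subseteq\mathcal F$, which is what makes the tower step in Step 3 valid.
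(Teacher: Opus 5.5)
Your proof is correct, and it takes a mildly different route from the paper.

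\textbf{The paper's route.} The paper works conditionally on $\mathcal H$ throughout. It identifies $\E[(\ycleanbayes-Y)^2\mid\mathcal H]=\Var(Y\mid\mathcal H)$ and $\E[(\ybayes-Y)^2\mid\mathcal H]=\E[\Var(Y\mid\mathcal F)\mid\mathcal H]$. It then invokes the law of total variance to obtain the stratum-level inflation $\Var(\ybayes\mid\mathcal H)$. Finally it notes $\Var(S\mid\mathcal H)=\Var(\ybayes\mid\mathcal H)$, because $S$ and $\ybayes$ differ by the $\mathcal H$-measurable $\ycleanbayes$.

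\textbf{Your route.} You expand the loss difference as $2S(Y-\ycleanbayes)-S^2$. You remove the cross term using orthogonality of the Bayes residual $\gperp=Y-\ybayes$ to $\mathcal F$-measurable variables, which gives the unconditional identity $B_{\mathrm{twin,Bayes}}=\E[S^2]$. Only then do you convert to conditional variance via $\E[S\mid\mathcal H]=0$.

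\textbf{How they relate.} Both arguments rest on the same Pythagorean fact for nested conditional expectations; the law of total variance is that fact in packaged form. Your computation is exactly the paper's proof of \Cref{thm:decomposition} specialized to $\delta=S$ (so $\lambda=1$, $\mudelta=\tnu=0$). The paper itself remarks that this specialization recovers the theorem.

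\textbf{What each buys.} The paper's version delivers the per-stratum identity $b(\mathcal H)=\Var(S\mid\mathcal H)$ directly, which matches its reading of $\Var(S\mid\mathcal H)$ as the volatility of the leaked period. Yours is self-contained, and the form $\E[S^2]$ makes the equality case immediate: $B_{\mathrm{twin,Bayes}}=0$ if and only if $S=0$ almost surely. You also recover the per-stratum identity if you run Step 2 conditionally on $\mathcal H$ rather than unconditionally.

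\textbf{A wording slip.} In Step 2 you call $Y-b$ ``the clean predictor's residual.'' In fact it is the residual of the bridge-informed Bayes predictor $\ybayes$ (the paper's $\gperp$). This does not affect the validity of the argument.
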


\noindent The inflation is \emph{always} non-negative for the full extractor, and its magnitude
equals how much the post-cutoff world deviates from the pre-cutoff world, as seen through the
outcome: volatile, surprising periods (large $\Var(S\mid\mathcal H)$) admit large inflation,
stable ones almost none. This is the value of bridge information for Bayes-optimal prediction,
not a universal upper bound over arbitrary algorithm pairs.

\subsubsection{General predictors: decomposition along the signal}
\label{sec:decomposition:general}

The model under backtest is not Bayes-optimal: it extracts only a fraction of $S$, and not
necessarily along $S$. To see how much of the ideal value it captures---and at what cost---we
take the conditional $L^2$ projection of its perturbation $\delta$ onto the signal,
\begin{equation}
  \delta \;=\; \lambda\,S \;+\; \mudelta \;+\; \tnu,
  \qquad
  \lambda=\frac{\Cov(\delta,S\mid\mathcal H)}{\Var(S\mid\mathcal H)},
  \qquad
  \mudelta=\E[\delta\mid\mathcal H],
  \label{eq:delta-decomp}
\end{equation}
with $\lambda=0$ on strata where the denominator vanishes; the residual $\tnu$ has conditional
mean zero and is conditionally orthogonal to $S$. Two benchmark deviations complete the
bookkeeping: $\rclean=\yclean-\ycleanbayes$, the clean model's deviation from its own Bayes
benchmark, and $\gperp=Y-\ybayes$, the Bayesian residual---the part of the outcome that no
training data can predict; the clean gap then decomposes as
$\gamma=Y-\yclean=S+\gperp-\rclean$. \Cref{fig:geometric} depicts the geometry of the two
decompositions and why only the $S$-aligned component of $\delta$ can pay off. Substituting
\eqref{eq:delta-decomp} into \eqref{eq:bias-def} gives the central decomposition.

\begin{figure}[t]
\centering
\begin{tikzpicture}[scale=1.0, >=Stealth, baseline=(current bounding box.center)]
  \node[font=\small\bfseries] at (-0.15,2.45) {(a)};
  \fill[black] (0,0) circle (1.5pt);
  \draw[->, very thick, blue!70!black, line width=1.5pt] (0,0) -- (4.2,0)
    node[pos=0.83, above, font=\small, yshift=1pt] {$S$};
  \draw[->, very thick, orange!80!black, line width=1.5pt] (0,0) -- (2.1,0)
    node[pos=0.45, above, font=\small, yshift=1pt] {$\lambda S$};
  \draw[->, very thick, red!60!black, line width=1.5pt] (4.2,0) -- (4.2,2.3)
    node[midway, right, font=\small, xshift=2pt] {$\gperp$};
  \draw[->, very thick, black!65!green, line width=1.5pt] (0,0) -- (4.2,2.3)
    node[pos=0.42, above left, font=\small, xshift=2pt, yshift=-2pt] {$\gamma$};
  \draw[gray!55, thick] (4.0, 0) -- (4.0, 0.2) -- (4.2, 0.2);
  \draw[->, dashed, thick, purple!55!black, line width=1.1pt] (2.1,0) -- (2.1,-0.65)
    node[midway, right, font=\small, xshift=2pt] {$\tnu$};
  \draw[->, very thick, purple!70!black, line width=1.5pt] (0,0) -- (2.1,-0.65)
    node[pos=0.42, below, font=\small, yshift=-2pt] {$\delta$};
  \draw[purple!40, thick] (1.92, 0) -- (1.92, -0.18) -- (2.1, -0.18);
\end{tikzpicture}
\hspace{1.1cm}
\begin{tikzpicture}[>=Stealth, baseline=(current bounding box.center)]
  \node[font=\small\bfseries] at (-0.55,2.45) {(b)};
  \draw[->, thick] (0,0) -- (3.5,0) node[below, font=\small, xshift=-4pt, yshift=-1pt] {$\lambda$};
  \draw[->, thick] (0,0) -- (0,2.5);
  \node[anchor=west, font=\scriptsize] at (0.02,2.42) {inflation $/\,\Var(S)$};
  \foreach \x/\l in {1.5/0.5, 3.0/1}
    \draw[thin] (\x,0.05) -- (\x,-0.05) node[below, font=\scriptsize] {$\l$};
  \foreach \y/\l in {0.825/0.75, 1.1/1}
    \draw[thin] (0.05,\y) -- (-0.05,\y) node[left, font=\scriptsize] {$\l$};
  \draw[gray!75, thick] (0,0) -- (3.0,2.2);
  \node[font=\scriptsize, gray!75!black, rotate=36] at (1.52,1.36) {benefit $2\lambda$};
  \draw[gray!75, thick, densely dashed] plot[domain=0:3.0, samples=40] (\x,{1.1*(\x/3)*(\x/3)});
  \node[font=\scriptsize, gray!75!black] at (2.42,0.4) {cost $\lambda^2$};
  \draw[blue!70!black, line width=1.5pt] plot[domain=0:3.0, samples=60] (\x,{1.1*(\x/3)*(2-\x/3)});
  \node[font=\scriptsize, blue!70!black, anchor=east] at (3.4,1.33) {$\lambda(2-\lambda)$};
  \draw[densely dashed, red!65!black] (1.5,0) -- (1.5,0.825) -- (0,0.825);
  \fill[red!65!black] (1.5,0.825) circle (1.8pt);
\end{tikzpicture}
\caption{\textbf{Why leakage inflates a backtest: the decomposition (a) and the double benefit
it implies (b).} \textbf{(a)}~What the clean model does not know about the outcome---the clean
gap $\gamma$ (green)---splits into the leakage signal $S$ (blue), everything the leaked data
reveal about $Y$, and the residual $\gperp$ (red), which no training data can predict. Training
on leaked data moves the model by $\delta$ (purple), of which only the extracted component
$\lambda S$ (orange) overlaps $\gamma$ and can raise the score; the noise $\tnu$ (dashed)
strictly deflates it (\Cref{cor:no-free}). \textbf{(b)}~Extracting a fraction $\lambda$ of the
signal buys a score benefit linear in $\lambda$ at a variance cost only quadratic in $\lambda$,
so the net inflation $\lambda(2-\lambda)\Var(S)$ rises steeply from zero: extracting half the
signal (dashes) already yields $75\%$ of the full-memorization inflation. A faint leakage trace
suffices to distort a backtest. The per-question factor $w(2-w)$ of \Cref{prop:perq} traces the
identical curve with $w$ in place of $\lambda$.}
\label{fig:geometric}
\end{figure}

\begin{restatable}[Leakage inflation decomposition]{theorem}{thmdecomposition}
\label{thm:decomposition}
For any clean/corrupt predictor pair with finite second moments, with $\delta$, $S$, $\lambda$,
$\mudelta$, $\tnu$ as in \eqref{eq:delta-decomp} and $\rclean$, $\gperp$ as above,
\begin{equation}
  B_{\mathrm{twin}}\;=\;\Psignal\;-\;\Rshift\;-\;\Rnoise\;+\;\Rcross,
  \label{eq:four-term}
\end{equation}
where, with all outer expectations conditional on $G<0$,
$\Psignal=\E[\lambda(2-\lambda)\Var(S\mid\mathcal H)]$,
$\Rshift=\E[\mudelta^2+2\mudelta\rclean]$,
$\Rnoise=\E[\Var(\tnu\mid\mathcal H)]$, and
$\Rcross=2\,\E[\tnu\gperp]$. For $\lambda\in[0,2]$, $\Psignal\ge0$.
\end{restatable}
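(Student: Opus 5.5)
The plan is a direct algebraic expansion of \eqref{eq:bias-def}, followed by conditioning on $\mathcal H$ term by term. Write $\gamma=Y-\yclean$, so that $Y-\ycorrupt=\gamma-\delta$. Then
\[
B_{\mathrm{twin}}=\E\big[\gamma^2-(\gamma-\delta)^2\mid G<0\big]=\E\big[2\delta\gamma-\delta^2\mid G<0\big].
\]
Into this I substitute $\delta=\lambda S+\mudelta+\tnu$ from \eqref{eq:delta-decomp} and $\gamma=S+\gperp-\rclean$. That turns $B_{\mathrm{twin}}$ into a finite sum of products, each of which I evaluate by conditioning on $\mathcal H$ and using the tower property. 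Throughout I work under the law given $G<0$ and use the standing finite-second-moment convention.

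Four structural facts drive every cancellation.
\begin{itemize}
\item[(i)] $\E[S\mid\mathcal H]=0$. This holds because $\ycleanbayes=\E[\ybayes\mid\mathcal H]$ by the tower property.
\item[(ii)] $\E[\gperp\mid\mathcal H,\Dbridge]=0$, while $S$, $\ycleanbayes$ and every $\mathcal H$-measurable quantity are $\sigma(\mathcal H,\Dbridge)$-measurable.
\item[(iii)] $\lambda$, $\mudelta$ and $\rclean$ are $\mathcal H$-measurable. For $\rclean$ this needs $\yclean$ to be a function of $(X,\Dtrain)$ under the fixed deterministic protocol; I would state this explicitly as the reading of ``clean-trained prediction.''
\item[(iv)] $\E[\tnu\mid\mathcal H]=0$ and $\E[\tnu S\mid\mathcal H]=0$, by construction of the projection.
\end{itemize}

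For the quadratic term, (i), (iii) and (iv) kill every cross product in $\E[\delta^2\mid\mathcal H]$, leaving $\lambda^2\Var(S\mid\mathcal H)+\mudelta^2+\Var(\tnu\mid\mathcal H)$.

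For the linear term $2\E[\delta\gamma\mid\mathcal H]$ I go through the nine products one at a time.
\begin{itemize}
\item $\lambda S\cdot S$ gives $2\lambda\Var(S\mid\mathcal H)$.
\item $\lambda S\cdot\gperp$ and $\mudelta\gperp$ vanish by (ii).
\item $\lambda S\rclean$ and $\mudelta S$ vanish by (i) and (iii).
\item $\mudelta\rclean$ survives as $-2\mudelta\rclean$.
\item $\tnu S$ and $\tnu\rclean$ vanish by (iv) and (iii).
\item $\tnu\gperp$ survives as $2\E[\tnu\gperp]$.
\end{itemize}
Collecting terms gives
\[
(2\lambda-\lambda^2)\Var(S\mid\mathcal H)-(\mudelta^2+2\mudelta\rclean)-\Var(\tnu\mid\mathcal H)+2\tnu\gperp
\]
inside the outer expectation, which is exactly $\Psignal-\Rshift-\Rnoise+\Rcross$.

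On strata where $\Var(S\mid\mathcal H)=0$, the convention $\lambda=0$ is harmless. There $S=0$ a.s. by (i), so $\tnu=\delta-\mudelta$ and the same bookkeeping applies.

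The final claim is immediate. For $\lambda\in[0,2]$ we have $\lambda(2-\lambda)\ge0$ and $\Var(S\mid\mathcal H)\ge0$, so $\Psignal\ge0$.

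The main obstacle is not the algebra but justifying which cross terms die. In particular, the term $\tnu\gperp$ does not vanish in general: $\tnu$ depends on the corrupt model's training randomness and need not be $\sigma(\mathcal H,\Dbridge)$-measurable, so (ii) cannot be applied to it. This is exactly why $\Rcross$ appears. By contrast, $\lambda S\gperp$ does vanish because $S$ is $\sigma(\mathcal H,\Dbridge)$-measurable. I would therefore state the measurability conventions for $\yclean$, $\ybayes$ and $\ycleanbayes$ carefully before expanding, and verify each vanishing term against the specific $\sigma$-field it requires.
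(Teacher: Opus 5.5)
Your proof is correct and is essentially the paper's argument: you rewrite $B_{\mathrm{twin}}=\E[2\delta\gamma-\delta^2\mid G<0]$, use the conditional orthogonality of $\lambda S$, $\mudelta$, $\tnu$ to get $\E[\delta^2\mid\mathcal H]$, and expand the nine products in $\delta\gamma$ so that only $\lambda\Var(S\mid\mathcal H)$, $-\mudelta\rclean$ and $\E[\tnu\gperp\mid\mathcal H]$ survive. One difference: under the fixed deterministic protocol the paper takes $\ycorrupt$, and hence $\tnu$, to be $\sigma(X,\Dtrain,\Dbridge)$-measurable, so the cross term you keep is actually zero (that is the subsequent lemma, not this theorem), but keeping it as $\Rcross$ is exactly what the theorem as stated requires.
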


\begin{restatable}[The cross-term vanishes]{lemma}{lemcrossterm}
\label{thm:cross-term}
For any such pair, $\Rcross=0$; hence $B_{\mathrm{twin}}=\Psignal-\Rshift-\Rnoise$.
\end{restatable}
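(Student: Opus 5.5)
The plan is to prove $\Rcross=0$ by a single tower-property argument: $\tnu$ is a function of the information available to the corrupt-trained model, and $\gperp$ is, by construction, the residual of $Y$ after projecting on exactly that information. Let $\mathcal F=\sigma(X,\Dtrain,\Dbridge)$ denote the full training-plus-covariate information. Throughout I work, as this subsection stipulates, under the conditional law of leakage-eligible questions ($G<0$). All conditional expectations, including $\ybayes$ and $\ycleanbayes$, are then taken under that law, so conditioning on $G<0$ adds nothing beyond a change of measure.

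\textbf{Step 1: measurability of every piece of $\tnu$.} Under the fixed deterministic prediction protocol of \Cref{sec:setup}, $\ycorrupt$ is a measurable function of $(X,\Dtrain,\Dbridge)$ and $\yclean$ a measurable function of $(X,\Dtrain)$. Hence $\delta=\ycorrupt-\yclean$ is $\mathcal F$-measurable. The signal $S=\ybayes-\ycleanbayes$ is also $\mathcal F$-measurable, since $\mathcal H=\sigma(X,\Dtrain)\subseteq\mathcal F$. The coefficient $\lambda$ and the shift $\mudelta=\E[\delta\mid\mathcal H]$ are $\mathcal H$-measurable, and this includes the convention $\lambda=0$ on degenerate strata. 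It follows that $\tnu=\delta-\lambda S-\mudelta$ is $\mathcal F$-measurable.

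\textbf{Step 2: the orthogonality.} By definition $\E[\gperp\mid\mathcal F]=\E[Y\mid\mathcal F]-\ybayes=0$. Then
\[
\E[\tnu\,\gperp]=\E\big[\tnu\,\E[\gperp\mid\mathcal F]\big]=0 ,
\]
so $\Rcross=0$. Substituting into \Cref{thm:decomposition} gives $B_{\mathrm{twin}}=\Psignal-\Rshift-\Rnoise$.

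\textbf{Step 3: integrability, the main technical point.} Pulling $\tnu$ out of the conditional expectation requires $\tnu\gperp\in L^1$. I would get this from Cauchy--Schwarz once $\tnu,\gperp\in L^2$.
\begin{itemize}
\item $\gperp\in L^2$ because $Y$ is bounded.
\item $\delta\in L^2$ by the standing finite-second-moment convention, and $\mudelta\in L^2$ by conditional Jensen.
\item $\lambda S$ is the conditional $L^2$ projection of $\delta$ onto $S$. So $\E[(\lambda S)^2\mid\mathcal H]\le\Var(\delta\mid\mathcal H)$ by the conditional Pythagorean identity, which puts $\lambda S$ in $L^2$.
\end{itemize}

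The step I expect to be most delicate is conceptual rather than computational: the deployed model might depend on randomness beyond $(X,\Dtrain,\Dbridge)$, such as training seeds. If so, I would enlarge $\mathcal F$ to include that randomness $U$, assume $U$ is independent of $Y$ given $(X,\Dtrain,\Dbridge)$, and note that $\E[Y\mid\mathcal F\vee\sigma(U)]=\ybayes$. The same tower argument then goes through unchanged.
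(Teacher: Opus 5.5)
Your proof is correct and is the paper's argument: under the deterministic protocol $\tnu$ is $\mathcal F$-measurable, and $\gperp$ is orthogonal to every square-integrable $\mathcal F$-measurable variable, so $\Rcross=2\,\E[\tnu\gperp]=0$. Your Step~3 adds the square-integrability check on $\tnu$ (via the projection bound on $\lambda S$) that the paper leaves implicit. Your remark on training randomness extends the paper's deterministic setting but is not needed for it.
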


\noindent The cross-term vanishes identically---not by assumption---because the Bayesian residual
$\gperp$ is orthogonal to every square-integrable
$\sigma(X,\Dtrain,\Dbridge)$-measurable variable, and $\tnu$ is such a variable. The identity itself imposes no sign
restriction on $\lambda$. Under the additional non-adversarial condition $\lambda\in[0,2]$,
$\Psignal\ge0$; $\Rnoise\ge0$ deflates the contrast, and the shift $\Rshift$ is bounded and vanishes
when $\mudelta=0$ (\Cref{app:proof-shift}). Reading the signs together yields the asymmetry
summarized in \Cref{sec:concentration}.

\begin{corollary}[No free inflation]
\label{cor:no-free}
If $\lambda=0$ and $\mudelta=0$ almost surely, then $B_{\mathrm{twin}}=-\Rnoise\le0$.
\end{corollary}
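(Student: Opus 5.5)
The plan is to specialize the exact identity of \Cref{thm:decomposition} together with \Cref{thm:cross-term}. Those two results give, for any clean/corrupt pair with finite second moments,
\[
B_{\mathrm{twin}}=\Psignal-\Rshift-\Rnoise .
\]
Under the hypotheses $\lambda=0$ and $\mudelta=0$ almost surely, I would show that the first two terms vanish and that the third is nonnegative.

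The steps are as follows.

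\textbf{Signal term.} With $\lambda=0$ a.s., the integrand $\lambda(2-\lambda)\Var(S\mid\mathcal H)$ is zero a.s., so $\Psignal=0$. The convention $\lambda=0$ on strata where $\Var(S\mid\mathcal H)=0$ causes no trouble. On those strata the product is already zero, because it is the conditional variance factor that vanishes.

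\textbf{Shift term.} With $\mudelta=0$ a.s., the integrand $\mudelta^2+2\mudelta\rclean$ is zero a.s. Integrability is not an issue. $\rclean$ has a finite second moment by the standing regularity, so $\E[\mudelta\rclean]$ is well defined, and here it equals $0$. Hence $\Rshift=0$.

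\textbf{Noise term.} $\Rnoise=\E[\Var(\tnu\mid\mathcal H)\mid G<0]$ is the expectation of a conditional variance, which is nonnegative a.s. Therefore $\Rnoise\ge0$.

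Combining the three steps gives $B_{\mathrm{twin}}=-\Rnoise\le 0$.

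As a consistency check, I would also note what $\delta$ reduces to under the hypotheses. By \eqref{eq:delta-decomp}, $\delta=\tnu$, so $\Rnoise=\E[\Var(\delta\mid\mathcal H)]=\E[\delta^2]$ because $\E[\delta\mid\mathcal H]=\mudelta=0$. The inflation is then exactly minus the mean squared perturbation. Equality $B_{\mathrm{twin}}=0$ holds iff $\delta=0$ a.s.

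I expect no real obstacle, since all the work lives in \Cref{thm:decomposition} and \Cref{thm:cross-term}. The one point to state carefully is that the decomposition identity carries no sign restriction on $\lambda$. The corollary therefore needs neither the non-adversarial condition $\lambda\in[0,2]$ nor any assumption beyond finite second moments.
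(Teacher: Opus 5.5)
Your proposal is correct and matches the paper's argument: the paper states the corollary is immediate from \Cref{thm:decomposition} and \Cref{thm:cross-term}, which amounts to setting $\Psignal=\Rshift=0$ and using $\Rnoise\ge0$. Your extra observation is also correct: $\delta=\tnu$ gives $\Rnoise=\E[\delta^2]$, so the inflation is strictly negative unless $\delta=0$ a.s., which backs the paper's informal claim that orthogonal change ``strictly deflates'' the score.
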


\noindent The corollary is immediate from
\Cref{thm:decomposition,thm:cross-term}. A perturbation uncorrelated with the leakage signal cannot inflate the backtest:
change orthogonal to $S$ strictly \emph{deflates} the score, so observed inflation is evidence of
extraction. This asymmetry is what makes detection meaningful; the detection
statistic of \Cref{sec:estimation} is built on it.

\paragraph{The double benefit in the twin frame.}
The signal term carries the factor $\lambda(2-\lambda)=1-(1-\lambda)^2$, the twin-frame
analogue of the per-question factor $w(2-w)$ in \Cref{prop:perq}. The arithmetic is the same:
extracting a fraction $\lambda$ of $S$ earns a covariance gain of $2\lambda\Var(S)$---the
forecast now moves \emph{with} the truth, a benefit linear in $\lambda$---while the variance
cost of the added component is only $\lambda^2\Var(S)$, quadratic in $\lambda$. The net is
$\lambda(2-\lambda)\Var(S)$ (\Cref{fig:geometric}). The two factors coincide under the
loading correspondence of \Cref{lem:loading-correspondence}: $\lambda$ loads $\delta$ onto
$S$ with signal strength $\Var(S\mid\mathcal H)$, $w$ is the per-question extraction with
signal strength $b_0$, and when the crowd forecast proxies the honest forecast, the
observable crowd surprise plays the role of the latent $\Var(S\mid\mathcal H)$.

\subsubsection{Infeasibility of the counterfactual comparison}
\label{sec:decomposition:obstruction}

The decomposition is exact and cleanly isolates leakage: the deployed model and its
counterfactual clean version share the same pre-$t_0$ data, so everything they have in
common---including recency---cancels in $\delta$, leaving the pure effect of the bridge data.
The obstruction is that \emph{the counterfactual model cannot be instantiated for a deployed
system}: one cannot un-train a released black box, and the Bayes-optimal predictors, the signal
$S$, and the loading $\lambda$ are all uncomputable. Without the counterfactual model, the only
within-model comparison left is \emph{across time}---pre- versus post-cutoff on the one
deployed model---and there the cancellation fails, because the model is genuinely fresher near
its cutoff: the time comparison reintroduces recency as a confound. The main text therefore
works with the operational inflation $B$ of \eqref{eq:Bdef}, defined on observables of a
single model with recency modeled explicitly; it coincides with $B_{\mathrm{twin}}$ under the
alignment conditions of \Cref{lem:estimand-alignment} (\Cref{app:proofs-nonident}), and the
price of losing the counterfactual comparison is the non-identifiability of
\Cref{thm:nonident}.

\subsection{Proofs for \texorpdfstring{\Cref{app:twin}}{the twin theory}: the leakage
inflation mechanism}
\label{app:proofs-mechanism}

All expectations in this subsection are under the conditional law given $G<0$. Let
$\mathcal H=\sigma(X,\Dtrain)$ and $\mathcal F=\sigma(X,\Dtrain,\Dbridge)$, with
$\mathcal H\subseteq\mathcal F$. The leakage signal is
$S=\ybayes-\ycleanbayes$, where $\ybayes=\E[Y\mid\mathcal F]$ and
$\ycleanbayes=\E[Y\mid\mathcal H]$. Further,
$\delta=\ycorrupt-\yclean$, $\gamma=Y-\yclean$, $\gperp=Y-\ybayes$, and
$\rclean=\yclean-\ycleanbayes$. Under the fixed deterministic protocol of \Cref{sec:setup},
predictions are measurable with respect to the information available to the model:
$\yclean$ is $\mathcal H$-measurable and $\ycorrupt$ is $\mathcal F$-measurable, so $\delta$,
$\rclean$, and (below) $\tnu$ are $\mathcal F$-measurable. The decomposition
$\delta=\lambda S+\mudelta+\tnu$ is the conditional $L^2$ projection given $\mathcal H$, with
$\lambda$ and $\mudelta$ both $\mathcal H$-measurable.

\subsubsection{Proof of \texorpdfstring{\Cref{thm:bayesian}}{Theorem 1}}
\label{app:proof-bayesian}

\Cref{thm:bayesian} is the ideal-case benchmark of \Cref{sec:decomposition:bayes}: when both
predictors are Bayes-optimal, the inflation equals the full information value of the bridge
data.

\thmbayesian*

\begin{proof}
For the Bayes-optimal pair, $\yclean=\ycleanbayes=\E[Y\mid\mathcal H]$ and
$\ycorrupt=\ybayes=\E[Y\mid\mathcal F]$, so the $\mathcal H$-conditional inflation is
\[
b(\mathcal H)
=\E\!\left[(\ycleanbayes-Y)^2\mid\mathcal H\right]
 -\E\!\left[(\ybayes-Y)^2\mid\mathcal H\right].
\]
By the minimum-mean-squared-error property of conditional expectation, the first term equals
$\Var(Y\mid\mathcal H)$, while $\E[(\ybayes-Y)^2\mid\mathcal F]=\Var(Y\mid\mathcal F)$;
conditioning the latter on $\mathcal H\subseteq\mathcal F$ and invoking the law of total
variance,
\[
\Var(Y\mid\mathcal H)
=\E\!\left[\Var(Y\mid\mathcal F)\mid\mathcal H\right]
 +\Var(\ybayes\mid\mathcal H),
\]
we conclude that $b(\mathcal H)=\Var(\ybayes\mid\mathcal H)$. Moreover
$\E[S\mid\mathcal H]=\E[\ybayes\mid\mathcal H]-\ycleanbayes=0$ by the tower property, so
$\Var(S\mid\mathcal H)=\Var(\ybayes\mid\mathcal H)=b(\mathcal H)$. Taking expectations over
$\mathcal H$ yields $B_{\mathrm{twin,Bayes}}=\E[\Var(S\mid\mathcal H)]\ge0$, with equality if and
only if $S=0$ almost surely.
\end{proof}

\subsubsection{Proof of \texorpdfstring{\Cref{thm:decomposition}}{Theorem 2}}
\label{app:proof-decomposition}

\Cref{thm:decomposition} is the central decomposition of \Cref{sec:decomposition:general}: a
general predictor's inflation splits into a signal term, a shift term, a noise term, and a cross
term. \Cref{fig:geometric} in the main text depicts the geometry.

\thmdecomposition*

\begin{proof}
Since $\yclean-Y=-\gamma$ and $\ycorrupt-Y=\delta-\gamma$, definition \eqref{eq:bias-def} reads
\[
B_{\mathrm{twin}}
=\E\!\left[\gamma^2-(\delta-\gamma)^2\right]
=\E\!\left[2\delta\gamma-\delta^2\right],
\]
and it suffices to compute the two conditional moments $\E[\delta^2\mid\mathcal H]$ and
$\E[\delta\gamma\mid\mathcal H]$. Throughout, recall that $\lambda$, $\mudelta$, and $\rclean$
are $\mathcal H$-measurable, that $\E[S\mid\mathcal H]=0$ by the tower property, and that
$\E[\tnu\mid\mathcal H]=\E[S\tnu\mid\mathcal H]=0$ by construction of the projection
\eqref{eq:delta-decomp}; we suppress the conditioning on $\mathcal H$ in the displays below.

The three components of $\delta=\lambda S+\mudelta+\tnu$ are conditionally uncorrelated, so
\[
  \E[\delta^2]=\lambda^2\Var(S)+\mudelta^2+\Var(\tnu).
\]
For the cross moment, substituting $\gamma=S+\gperp-\rclean$ and expanding the product
$\delta\gamma=(\lambda S+\mudelta+\tnu)(S+\gperp-\rclean)$ produces nine terms, of which all but
three vanish: $\E[S\gperp]=\E[\gperp]=0$ because $\gperp=Y-\ybayes$ is the
minimum-mean-squared-error residual, orthogonal to every square-integrable
$\mathcal F$-measurable variable (in particular to $S$ and to constants), while
$\E[S]=\E[\tnu]=\E[S\tnu]=0$ as noted above. What survives is
\[
  \E[\delta\gamma]=\lambda\Var(S)-\mudelta\rclean+\E[\tnu\gperp].
\]
Combining the two moments,
\[
  \E\!\left[2\delta\gamma-\delta^2\mid\mathcal H\right]
  =\lambda(2-\lambda)\Var(S\mid\mathcal H)
   -\left(\mudelta^2+2\mudelta\rclean\right)
   -\Var(\tnu\mid\mathcal H)
   +2\,\E[\tnu\gperp\mid\mathcal H],
\]
and taking expectations over $\mathcal H$ yields \eqref{eq:four-term}. For $\lambda\in[0,2]$ the
factor $\lambda(2-\lambda)$ is non-negative, whence $\Psignal\ge0$. Specializing to the
Bayes-optimal pair ($\lambda=1$, $\mudelta=0$, $\tnu=0$) recovers \Cref{thm:bayesian}.
\end{proof}

\subsubsection{Proof of \texorpdfstring{\Cref{thm:cross-term}}{the cross-term lemma}}
\label{app:proof-cross}

\Cref{thm:cross-term} removes the only unsigned term of \Cref{thm:decomposition}, reducing the
inflation to signal minus shift minus noise.

\lemcrossterm*

\begin{proof}
The Bayesian residual $\gperp=Y-\E[Y\mid\mathcal F]$ satisfies $\E[\gperp h]=0$ for every
square-integrable $\mathcal F$-measurable $h$. As recorded in the preamble to this appendix,
$\tnu=\delta-\lambda S-\mudelta$ is $\mathcal F$-measurable, so taking $h=\tnu$ gives
$\E[\tnu\gperp]=0$, and therefore $\Rcross=2\,\E[\tnu\gperp]=0$; the identity
$B_{\mathrm{twin}}=\Psignal-\Rshift-\Rnoise$ then follows from \Cref{thm:decomposition}. The same
argument applied conditionally shows $\E[\tnu\gperp\mid\mathcal H]=0$ almost surely.
\end{proof}

\subsubsection{The systematic shift is bounded}
\label{app:proof-shift}

The following proposition, stated and proved only here, bounds the shift term $\Rshift$ of
\Cref{thm:decomposition} and identifies when it vanishes.

\begin{proposition}[Bounded systematic shift]
\label{prop:shift-bound}
For any model, $|\Rshift|\le \E[\mudelta^2]+2\sqrt{\E[\mudelta^2]\,\E[\rclean^2]}$, where
$\E[\mudelta^2]\le\E[\delta^2]$ and $\E[\rclean^2]$ is the clean model's excess risk relative to the
clean Bayesian predictor. In particular $\Rshift=0$ for any predictor with $\mudelta\equiv0$
(including the Bayesian and any conditionally unbiased predictor).
\end{proposition}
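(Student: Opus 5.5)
The plan is to bound the two pieces of $\Rshift=\E[\mudelta^2+2\mudelta\rclean]$ separately. The triangle inequality gives
\[
|\Rshift|\le\E[\mudelta^2]+2\,\big|\E[\mudelta\rclean]\big| .
\]
The first piece is nonnegative and needs no further work. For the second, Cauchy--Schwarz under the conditional law given $G<0$ gives $|\E[\mudelta\rclean]|\le\sqrt{\E[\mudelta^2]\,\E[\rclean^2]}$. Both moments are finite by the standing finite-second-moment convention of \Cref{app:notation}, since $\mudelta$ is a conditional expectation of the square-integrable $\delta$, and $\rclean=\yclean-\ycleanbayes$ is a difference of square-integrable variables. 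Combining the two displays yields the stated bound.

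Next come the two auxiliary claims. For $\E[\mudelta^2]\le\E[\delta^2]$, conditional Jensen gives $\mudelta^2=(\E[\delta\mid\mathcal H])^2\le\E[\delta^2\mid\mathcal H]$, and taking expectations finishes it. Alternatively, one can read it off the orthogonal decomposition in the proof of \Cref{thm:decomposition}: $\E[\delta^2\mid\mathcal H]=\lambda^2\Var(S\mid\mathcal H)+\mudelta^2+\Var(\tnu\mid\mathcal H)$.

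For the interpretation of $\E[\rclean^2]$ as excess risk, I would expand
\[
\E[(\yclean-Y)^2]=\E[(\rclean-(Y-\ycleanbayes))^2].
\]
Since $\rclean$ is $\mathcal H$-measurable and $Y-\ycleanbayes$ is orthogonal to every square-integrable $\mathcal H$-measurable variable, the cross term vanishes. This leaves $\E[(\yclean-Y)^2]-\E[(\ycleanbayes-Y)^2]=\E[\rclean^2]$.

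For the vanishing claim, if $\mudelta\equiv0$ then both summands of $\Rshift$ are zero pointwise. For the Bayesian pair, $\delta=S$ and $\E[S\mid\mathcal H]=0$ by the tower property, as already noted in the proof of \Cref{thm:bayesian}. "Conditionally unbiased" I would read as $\E[\delta\mid\mathcal H]=0$. No step is a real obstacle. The only point needing care is that $\rclean$ must be $\mathcal H$-measurable for the excess-risk identity, and that holds because $\yclean$ is $\mathcal H$-measurable under the fixed protocol recorded in the preamble of \Cref{app:proofs-mechanism}.
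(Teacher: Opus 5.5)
Your proof is correct and matches the paper's argument: the triangle inequality with Cauchy--Schwarz gives the bound, conditional Jensen gives $\E[\mudelta^2]\le\E[\delta^2]$, and the tower property $\E[S\mid\mathcal H]=0$ handles the Bayes pair. Your orthogonality check that $\E[\rclean^2]$ equals the clean model's excess risk is a small addition the paper states without proof, and it is valid because $\rclean$ is $\mathcal H$-measurable.
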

\begin{proof}
Since $\Rshift=\E[\mudelta^2]+2\,\E[\mudelta\rclean]$, the Cauchy--Schwarz inequality
$|\E[\mudelta\rclean]|\le(\E[\mudelta^2]\,\E[\rclean^2])^{1/2}$ together with the triangle
inequality yields the stated bound. Jensen's inequality applied to
$\mudelta=\E[\delta\mid\mathcal H]$ gives $\mudelta^2\le\E[\delta^2\mid\mathcal H]$ pointwise,
whence $\E[\mudelta^2]\le\E[\delta^2]$. Finally, if $\mudelta\equiv0$ then both terms of
$\Rshift$ vanish; this holds in particular for the Bayes-optimal pair, for which
$\mudelta=\E[S\mid\mathcal H]=0$.
\end{proof}

\subsection{Results of \texorpdfstring{\Cref{sec:impossibility,sec:concentration}}{Sections 3
and 4}: non-identifiability and concentration}
\label{app:proofs-nonident}

Throughout this subsection, $m(x,g)=\E[(P-Y)^2\mid X=x,G=g]$ is the observable conditional mean
loss \eqref{eq:m}, $(m_0,L)$ an operational decomposition satisfying \Cref{ass:struct}, and the
\emph{passive law} is the joint distribution of $(P,Y,X,G)$.

\subsubsection{The alignment lemma}
\label{app:proof-alignment}

The following lemma, referenced in \Cref{rem:twin}, bridges the two halves of the
theory: under its hypotheses, the counterfactual inflation of \Cref{app:twin} and the
operational estimand of \Cref{sec:setup} are the same number.

\begin{lemma}[Alignment of counterfactual and operational estimands]
\label{lem:estimand-alignment}
Suppose that, on the same pre-cutoff question distribution,
$\E[(\yclean-Y)^2\mid X,G]=m_0(X,G)$ almost surely on $\{G<0\}$. Then $B_{\mathrm{twin}}=B$.
\end{lemma}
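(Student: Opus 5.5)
The plan is to write both estimands as expectations over the same pre-cutoff question law and use the tower property. The counterfactual inflation \eqref{eq:bias-def} splits by linearity into two risks:
\[
B_{\mathrm{twin}}=\E\big[(\yclean-Y)^2\mid G<0\big]-\E\big[(\ycorrupt-Y)^2\mid G<0\big].
\]
By the standing regularity (finite second moments), each term is finite, so the split is legitimate.

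\textbf{First term.} Condition on $(X,G)$ inside the conditional law given $G<0$. Since $\{G<0\}$ is $\sigma(G)$-measurable, the tower property gives
\[
\E[(\yclean-Y)^2\mid G<0]=\E\big[\E[(\yclean-Y)^2\mid X,G]\mid G<0\big].
\]
The hypothesis replaces the inner expectation by $m_0(X,G)$ almost surely on $\{G<0\}$.

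\textbf{Second term.} For a probabilistic protocol $\ycorrupt=P$, as fixed at the start of \Cref{app:twin}. Applying the same tower argument, the second term becomes $\E\big[\E[(P-Y)^2\mid X,G]\mid G<0\big]$. The inner expectation is $m(X,G)$ by definition \eqref{eq:m}.

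\textbf{Conclusion.} Subtracting gives $B_{\mathrm{twin}}=\E[m_0(X,G)-m(X,G)\mid G<0]$, which is $B$ by \eqref{eq:Bdef}. Under \Cref{ass:struct} this equals $\E[L(X,G)\mid G<0]$.

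The main obstacle is bookkeeping rather than mathematics. I will state explicitly that "the same pre-cutoff question distribution" means both expectations are taken under a single joint law of $(Y,X,G,\yclean,P)$. Then $m$ computed from the passive law agrees with the conditional mean of the corrupt loss used in $B_{\mathrm{twin}}$. I will also note that the almost-sure hypothesis is only needed on the support of $(X,G)$ given $G<0$, so null sets do not affect the outer expectation.
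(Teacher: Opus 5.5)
Your proposal is correct and follows the paper's proof essentially step for step: both identify the corrupt model's conditional risk with $m$ via $\ycorrupt=P$, substitute the hypothesis for the clean model's conditional risk, and apply iterated expectations under the pre-cutoff law to obtain $\E[m_0-m\mid G<0]=B$. Your explicit remarks about the single joint law and about $\{G<0\}\in\sigma(G)$ only make precise bookkeeping that the paper leaves implicit.
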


\noindent The deployed model's side needs no hypothesis, since
$\E[(\ycorrupt-Y)^2\mid X,G]=m(X,G)$ holds by definition of $m$ ($\ycorrupt=P$,
\Cref{sec:setup}). The one substantive condition equates the counterfactual clean model's risk
with the honest surface $m_0$, plausible when the pair shares the training algorithm and
protocol of \Cref{sec:decomposition}.

\begin{proof}
The identity $\E[(\ycorrupt-Y)^2\mid X,G]=m(X,G)$ holds by definition of the observable surface
\eqref{eq:m}, since $\ycorrupt=P$ under the probabilistic protocol of \Cref{sec:setup}; the
hypothesis supplies the matching identity $\E[(\yclean-Y)^2\mid X,G]=m_0(X,G)$ for the clean
model on $\{G<0\}$. Taking expectations over $(X,G)$ under the
pre-cutoff law and applying iterated expectations,
\begin{align*}
B_{\mathrm{twin}}
&=\E\!\left[(\yclean-Y)^2-(\ycorrupt-Y)^2\mid G<0\right]
=\E\!\left[m_0(X,G)-m(X,G)\mid G<0\right]\\
&=\E\!\left[L(X,G)\mid G<0\right]=B,
\end{align*}
where the third equality is the operational decomposition $m=m_0-L$ (\Cref{sec:impossibility}).
\end{proof}

\subsubsection{Proof of \texorpdfstring{\Cref{thm:nonident}}{the non-identifiability theorem}}
\label{app:proof-nonident}

\Cref{thm:nonident} is the paper's central negative result: passive backtest data determine the
observable surface $m$ but nothing more, so the inflation $B$ ranges over a sharp interval.

\thmnonident*

\begin{proof}
The argument has three parts: observational equivalence, attainability, and sharpness.

\emph{Observational equivalence.} The passive law of $(P,Y,X,G)$ determines the conditional mean
loss $m$ through \eqref{eq:m}, but neither $m_0$ nor $L$ separately: these are not functions of
the observables, and any admissible pair $(\tilde m_0,\tilde L)$ with
$\tilde m_0-\tilde L=m$ is consistent with the same passive law. Consequently, if two admissible
pairs yield different values of $B=\E[\tilde m_0-m\mid G<0]$, no functional of the passive law
can equal $B$ for both, and $B$ is not identified.

\emph{Attainability.} Fix $t\in[0,1]$ and define
\[
\tilde m_{0,t}=m+t(1-m)\ \text{on}\ \{g<0\},\qquad
\tilde m_{0,t}=m\ \text{on}\ \{g\ge0\}.
\]
Since $m\in[0,1]$, we have $m\le\tilde m_{0,t}\le1$, so $\tilde m_{0,t}$ is a valid risk
surface; the induced $\tilde L_t=t(1-m)\ge0$ on $\{g<0\}$ and $\tilde L_t=0$ on $\{g\ge0\}$, so
\Cref{ass:struct} holds. The corresponding inflation is
$\tilde B_t=t\,\E[1-m\mid G<0]$, which is continuous and increasing in $t$ and traces every
value in $[0,\E[1-m\mid G<0]]$ as $t$ ranges over $[0,1]$.

\emph{Sharpness.} Conversely, any admissible pair satisfies, pointwise on $\{g<0\}$,
$0\le\tilde L=\tilde m_0-m\le1-m$, the upper bound because $\tilde m_0\le1$; taking expectations
under the pre-cutoff law places $\tilde B$ in the stated interval. The identified set is
therefore exactly $[0,\E[1-m\mid G<0]]$, and additional passive draws refine the estimate of $m$
without shrinking it.
\end{proof}

\subsubsection{Proof of \texorpdfstring{\Cref{prop:perq}}{the per-question law}}
\label{app:proof-perq}

\Cref{prop:perq} converts the convex-pull model into the stakes$\times$extraction law: leakage
per question is the honest Brier times the double-benefit factor.

\propperq*

\begin{proof}
Fix a leakage-eligible question $q$. Under \Cref{ass:pull} and the fixed deterministic protocol,
\[
P(q)-Y(q)=(1-w(q))\,\big(P_{\mathrm{hon}}(q)-Y(q)\big),
\]
so the realized Brier loss is $(P-Y)^2=(1-w)^2 b_0$ with $b_0=(P_{\mathrm{hon}}-Y)^2$, while the
honest loss is $b_0$; the per-question saving is $b_0-(1-w)^2b_0=b_0\,w(2-w)$. Taking
conditional expectations given $(X,G)=(x,g)$ yields the surfaces
\[
m(x,g)=\E\big[(1-w)^2b_0\mid X{=}x,G{=}g\big],\qquad
m_0(x,g)=\E\big[b_0\mid X{=}x,G{=}g\big],
\]
the latter being the loss the model would incur using only legitimate information, whence
$L(x,g)=m_0(x,g)-m(x,g)=\E[b_0\,w(2-w)\mid X{=}x,G{=}g]$. This pair satisfies
\Cref{ass:struct}: $w\in[0,1]$ implies $L\ge0$, and post-cutoff no leaked information exists, so
$P=P_{\mathrm{hon}}$ and $L=0$ there. Finally, iterated expectations and \eqref{eq:Bdef} give
\[
B=\E\big[L(X,G)\mid G<0\big]=\E\big[b_0\,w(2-w)\mid G<0\big],
\]
which is \eqref{eq:perq}.
\end{proof}

\subsubsection{The loading correspondence}
\label{app:proof-loading}

The following lemma, referenced in \Cref{app:twin}, connects the per-question
extraction weight $w$ to the population loading $\lambda$ of \Cref{app:twin} in the
homogeneous special case.

\begin{lemma}[Loading correspondence in the homogeneous pull model]
\label{lem:loading-correspondence}
Suppose within a conditioning stratum that $P_{\mathrm{hon}}=\yclean$ and
$w(q)\equiv w$ is constant. Define the pull direction
$S_{\mathrm{pull}}=Y-P_{\mathrm{hon}}$. Then
$\delta=P-P_{\mathrm{hon}}=wS_{\mathrm{pull}}$ and the $L^2$ loading of $\delta$ onto
$S_{\mathrm{pull}}$ is $w$. If, additionally, $S_{\mathrm{pull}}$ equals the Bayesian leakage
signal $S$ of \Cref{sec:decomposition}, then $\lambda=w$ and $\mudelta=\tnu=0$.
\end{lemma}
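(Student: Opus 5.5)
The plan has two parts: first a direct algebraic computation of $\delta$, then an appeal to the uniqueness of the conditional $L^2$ projection in \eqref{eq:delta-decomp}.

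\emph{First claim.} Since $P_{\mathrm{hon}}=\yclean$ on the stratum, we have $\delta=\ycorrupt-\yclean=P-P_{\mathrm{hon}}$. Substituting \Cref{ass:pull} with constant $w$ gives
\[
P-P_{\mathrm{hon}}=w\,(Y-P_{\mathrm{hon}})=w\,S_{\mathrm{pull}} .
\]
The $L^2$ loading of $\delta$ onto $S_{\mathrm{pull}}$ is $\Cov(\delta,S_{\mathrm{pull}})/\Var(S_{\mathrm{pull}})$, which equals $w\Var(S_{\mathrm{pull}})/\Var(S_{\mathrm{pull}})=w$ whenever the variance is positive. If the variance vanishes, the standing convention of \Cref{app:notation} sets the loading to $0$; I will note this degenerate case explicitly rather than hide it.

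\emph{Second claim.} Assume $S_{\mathrm{pull}}=S$. Then $\delta=wS$, and I will check that the triple $(\lambda,\mudelta,\tnu)=(w,0,0)$ meets the defining properties of the projection:
\begin{itemize}
\item $\lambda=w$ is $\mathcal H$-measurable, being constant on the stratum.
\item $\mudelta=\E[\delta\mid\mathcal H]=w\,\E[S\mid\mathcal H]=0$, by the tower property already used in the proof of \Cref{thm:bayesian}.
\item The residual $\tnu=\delta-\lambda S-\mudelta=0$ trivially has conditional mean zero and is conditionally orthogonal to $S$.
\end{itemize}
Uniqueness of the conditional projection (on strata with $\Var(S\mid\mathcal H)>0$) then forces these values. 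A direct computation gives the same result, namely $\Cov(\delta,S\mid\mathcal H)=w\Var(S\mid\mathcal H)$.

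The main obstacle is the mismatch between the two loadings. The unconditional stratum loading of the first claim need not coincide with the conditional-on-$\mathcal H$ loading $\lambda$ of \eqref{eq:delta-decomp}. Once $\delta=wS$ exactly with constant $w$, however, both ratios collapse to $w$ at every level of conditioning. So I will state that the identification is pointwise in $\mathcal H$, with the zero-variance strata handled by the convention.
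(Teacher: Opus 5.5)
Your proposal is correct and follows the paper's proof essentially line for line. Both substitute the convex pull to get $\delta=wS_{\mathrm{pull}}$, and then, under $S_{\mathrm{pull}}=S$, compute $\lambda=w$, $\mudelta=w\,\E[S\mid\mathcal H]=0$ and $\tnu=0$ directly. Your uniqueness-of-projection argument and your explicit treatment of zero-variance strata are harmless refinements that the paper leaves implicit: on such strata the convention gives $\lambda=0$, but $\E[S\mid\mathcal H]=0$ forces $S=0$, so the decomposition is unchanged.
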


\begin{proof}
Within the stratum, \Cref{ass:pull} with $P_{\mathrm{hon}}=\yclean$ and constant $w$ gives
\[
\delta=P-\yclean=w\,\big(Y-P_{\mathrm{hon}}\big)=w\,S_{\mathrm{pull}},
\]
so $\delta$ is proportional to $S_{\mathrm{pull}}$ with coefficient $w$: the $L^2$ loading of
$\delta$ onto $S_{\mathrm{pull}}$ is $w$ and the projection residual vanishes. If in addition
$S_{\mathrm{pull}}=S$, then $\delta=wS$, whence
$\lambda=\Cov(\delta,S\mid\mathcal H)/\Var(S\mid\mathcal H)=w$,
$\mudelta=\E[\delta\mid\mathcal H]=w\,\E[S\mid\mathcal H]=0$, and
$\tnu=\delta-\lambda S-\mudelta=0$.
\end{proof}

\subsubsection{Interpreting the extraction weight}
\label{app:w-interp}

A useful, non-unique parameterization of the extraction weight in \Cref{ass:pull} is
\begin{equation}
  w(q)\;=\;\underbrace{c(q)}_{\text{knowledge \emph{exists}}}\cdot
           \underbrace{u(q)}_{\text{\emph{usable}/deployed}}\cdot
           \underbrace{\kappa(\mathrm{type})}_{\text{answer vs.\ evidence}}.
  \label{eq:w-decomp}
\end{equation}
The factorization is interpretive rather than identified from backtest scores. Existence $c$
asks whether the model absorbed the relevant fact at all, and is probeable leak-free by
self-consistency; usability $u$ captures the well-documented gap between possessing a fact and
deploying it; and $\kappa$ encodes the leakage type of \Cref{sec:nonident:perq}, near $1$ for
answer leakage and intermediate for evidence leakage. The controlled contamination experiment
(E2, \Cref{app:e2}) uses this reading: inserted duplicates raise $c$, and the measured
dose--response tracks exposure that is both absorbed and usable.

\subsection{Results of \texorpdfstring{\Cref{sec:estimation}}{Section 5}: identification
routes}
\label{app:proofs-routes}

Throughout this subsection, $(m_0,L)$ is an operational decomposition satisfying
\Cref{ass:struct} for the model under audit, with superscripts ($m^A$, $m_0^A$, $L^A$)
distinguishing models where needed. For the DiD route, fix the target pre-cutoff covariate
law $Q$ and write, for a model $A$,
\[
\mu^A_{\pm}=\int\E\big[m^A(X,G)\mid X=x,\ G\gtrless0\big]\,dQ(x),
\]
with $\mu^A_{0,\pm}$ the analogous averages of the honest surface $m_0^A$; standardizing all
four cells to $Q$ removes composition differences, and $\Delta^A=\mu^A_+-\mu^A_-$ is the
standardized pre-to-post change of \Cref{prop:did}. In this notation, the formal content of
\Cref{ass:transport} is: $L^{M_0}\equiv0$ on the evaluation support; target-post and control
pre/post observations have common support with $Q$; and the honest changes are parallel,
$\mu^{M}_{0,+}-\mu^{M}_{0,-}=\mu^{M_0}_{0,+}-\mu^{M_0}_{0,-}$. The regression-discontinuity
statements assume the following support regularity.

\begin{assumption}[RD support regularity]
\label{ass:rd-reg}
The density of $G$ is positive and continuous near zero, and the conditional support of $X$
overlaps on both sides of the cutoff. Conditional limits are taken on this common support.
\end{assumption}

\subsubsection{Proof of \texorpdfstring{\Cref{thm:rd}}{the RD theorem}}
\label{app:proof-rd}

\Cref{thm:rd} is the boundary guarantee of Route 1: continuity of the honest surface converts
the observed risk jump at the cutoff into the left-limit leakage.

\thmrd*

\begin{proof}
For $g>0$, $L\equiv0$ by \Cref{ass:struct}, so $m=m_0$ there, and continuity (\Cref{ass:cont})
gives $\lim_{g\downarrow0}m(x,g)=m_0(x,0)$ on the common support of \Cref{ass:rd-reg}. For
$g<0$, $m=m_0-L$, so $\lim_{g\uparrow0}m(x,g)=m_0(x,0)-L(x,0^-)$, the left limit existing by
\Cref{ass:cont}. Subtracting the two limits gives $J(x)=L(x,0^-)$, which is non-negative by
\Cref{ass:struct}.
\end{proof}

\subsubsection{Proof of \texorpdfstring{\Cref{prop:rd-global}}{global extrapolation}}
\label{app:proof-rd-global}

\Cref{prop:rd-global}, stated here in full, extends Route 1 from the boundary to the global
estimand, at the price of the extrapolation class.

\begin{assumption}[Uniquely extrapolable honest surface]
\label{ass:smooth}
$m_0(x,g)=\alpha(x)+\varphi(g)$, where $\varphi$ belongs to a known finite-dimensional class
$\Phi$ whose restriction to the observed clean support uniquely determines it on the target
pre-cutoff support (e.g.\ a fixed-degree polynomial), and the class is correctly specified.
\end{assumption}

\begin{restatable}[Global leakage by clean-side extrapolation]{proposition}{proprdglobal}
\label{prop:rd-global}
Under the assumptions of \Cref{thm:rd} and \Cref{ass:smooth}, the honest surface on $\{g<0\}$ is
identified by extrapolating the fit of $m_0$ from $\{g>0\}$, and
$B=\E[\,m_0(X,G)-m(X,G)\mid G<0\,]$ is identified.
\end{restatable}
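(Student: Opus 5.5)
The argument has two steps. First I identify the honest surface on the clean side, and then I extrapolate it to the leaked side through the finite-dimensional class of \Cref{ass:smooth}. On $\{g\ge0\}$, \Cref{ass:struct} gives $L\equiv0$, so $m_0=m$ there. Since $m$ is a functional of the passive law, \eqref{eq:m} determines $m_0$ on the observed clean support. By \Cref{ass:smooth}, on that support
\[
m(x,g)=\alpha(x)+\varphi(g),\qquad \varphi\in\Phi .
\]

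The next step is to separate $\alpha$ from $\varphi$. The decomposition is only determined up to a constant $\alpha\mapsto\alpha+c$, $\varphi\mapsto\varphi-c$, so I fix this by a normalization, for instance $\varphi(0)=0$, assuming $\Phi$ is closed under constant shifts. For any $x$ in the common support of \Cref{ass:rd-reg}, the differences $m(x,g)-m(x,g')$ for $g,g'>0$ determine $\varphi$ up to that constant on the clean support. The uniqueness clause of \Cref{ass:smooth} then pins $\varphi$ down on the whole target pre-cutoff support. The level $\alpha(x)$ is recovered as the right limit $\lim_{g\downarrow0}m(x,g)-\varphi(0)$. This limit exists because $m_0$ is continuous at $0$ (\Cref{ass:cont}), and it is computed on the support where $G$ has positive density near zero. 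If $X$ on the pre-cutoff side takes values outside the clean-side support, I restrict to the common support, which \Cref{ass:rd-reg} supplies. I treat that as part of "the assumptions of \Cref{thm:rd}".

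This gives $m_0(x,g)=\alpha(x)+\varphi(g)$ for $g<0$ as a functional of the passive law. Then
\[
B=\E\big[m_0(X,G)-m(X,G)\mid G<0\big]
\]
is an integral of identified functions against the identified pre-cutoff law of $(X,G)$, so $B$ is identified. As a consistency check, the same construction evaluated at $g\uparrow0$ reproduces $J(x)$ of \Cref{thm:rd}.

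The main obstacle is the separation of $\alpha(x)$ from $\varphi$ across supports. "Restriction to the clean support uniquely determines it" has to be read as an injectivity statement about $\Phi$ modulo constants. It must also be combined with overlap in $x$, so that $\alpha(x)$ is known for every $x$ charged by the pre-cutoff law. I would state this reading explicitly, using the polynomial example: a fixed-degree polynomial is determined by its values on any interval. I would then note that the argument needs only the additive separability and this injectivity, not continuity beyond what fixes $\alpha$.
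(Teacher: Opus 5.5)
Your proposal is correct and follows the paper's proof. The paper also sets $m_0=m$ on the clean side via the support restriction of \Cref{ass:struct}, uses the uniqueness clause of \Cref{ass:smooth} to continue $\varphi$ (and hence $m_0$) to $\{g<0\}$, and concludes that $B$ is an integral of identified functions against the identified pre-cutoff law. Your proof also makes explicit two points that the paper's three-line proof leaves implicit: the additive normalization, which needs $\Phi$ to be injective modulo constants, and full $X$-overlap, so that $\alpha(x)$ is known wherever the pre-cutoff law puts mass. \Cref{ass:rd-reg} is local to the cutoff and does not literally supply that overlap.
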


\begin{proof}
On $\{g>0\}$, $m=m_0=\alpha(x)+\varphi(g)$ is observed; the two components are identified only
up to an additive normalization, but their sum---the clean surface---is identified. By
\Cref{ass:smooth}, the restriction of $\varphi$ to the observed clean support uniquely
determines its continuation to the target pre-cutoff support, and hence determines $m_0$ there.
Consequently $L=m_0-m$ is identified on $\{g<0\}$, and so is $B=\E[L\mid G<0]$.
\end{proof}

\subsubsection{Proof of \texorpdfstring{\Cref{prop:did}}{DiD identification}}
\label{app:proof-did}

\Cref{prop:did} is the global guarantee of Route 2: a clean, recency-matched control converts
the difference of pre/post changes into the standardized inflation.

\propdid*

\begin{proof}
Target leakage vanishes on $\{G>0\}$ by \Cref{ass:struct}, so $\mu^M_+=\mu^M_{0,+}$. On the pre
side, $m^M=m_0^M-L^M$ integrates to $\mu^M_-=\mu^M_{0,-}-\int\E[L^M\mid X{=}x,G{<}0]\,dQ(x)$,
and because $Q$ is the target pre-cutoff covariate law, the integral equals
$\E[L\mid G<0]=B$ by iterated expectations. Hence
$\Delta^M=\big(\mu^M_{0,+}-\mu^M_{0,-}\big)+B$. The control is clean ($L^{M_0}\equiv0$), so
$\Delta^{M_0}=\mu^{M_0}_{0,+}-\mu^{M_0}_{0,-}$. Subtracting and applying the parallel honest
change of \Cref{ass:transport} cancels the honest terms, leaving
$\Delta^M-\Delta^{M_0}=B$.
\end{proof}

\subsubsection{Boundary DiD with a no-discontinuity control}
\label{app:proof-did-boundary}

The following variant, stated and proved only here, weakens the clean-control requirement to
continuity of the control's leakage at the target cutoff; it identifies the boundary estimand
only, and underwrites the control checks of M4 (\Cref{sec:exp:e3}).

\begin{proposition}[Boundary DiD with a no-discontinuity control]
\label{prop:did-boundary}
At the target cutoff, define the observed risk jump
$J_{\mathrm{obs}}^A(x)=m^A(x,0^+)-m^A(x,0^-)$. Suppose
(i) target leakage is zero on the post side and has left limit $L^M(x,0^-)$;
(ii) control leakage is continuous at the target cutoff; and
(iii) target and control share the same honest/composition jump,
$m_0^M(x,0^+)-m_0^M(x,0^-)=m_0^{M_0}(x,0^+)-m_0^{M_0}(x,0^-)$.
Then
\[
J_{\mathrm{obs}}^M(x)-J_{\mathrm{obs}}^{M_0}(x)=L^M(x,0^-).
\]
\end{proposition}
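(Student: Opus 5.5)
The approach is to write each model's observed jump in terms of its honest surface and leakage, and then subtract; the shared honest jump of (iii) cancels.

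For each model $A\in\{M,M_0\}$ I use the operational decomposition $m^A=m_0^A-L^A$ of \Cref{def:struct}. I take one-sided limits at the target cutoff on the common support of \Cref{ass:rd-reg}, so that $m^A(x,0^\pm)$ means the corresponding limit of $m^A(x,g)$. This gives
\[
J_{\mathrm{obs}}^A(x)=\big[m_0^A(x,0^+)-m_0^A(x,0^-)\big]-\big[L^A(x,0^+)-L^A(x,0^-)\big].
\]
This step requires the one-sided limits of $m_0^A$ and $L^A$ to exist separately, not just their difference. For the target, (i) supplies both leakage limits: the post side is identically zero, so $L^M(x,0^+)=0$, and $L^M(x,0^-)$ exists. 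For the control, (ii) makes $L^{M_0}$ continuous at $0$, so both one-sided limits exist and are equal. In each case the honest limits then exist because $m_0^A=m^A+L^A$. I will state explicitly that the observed limits $m^A(x,0^\pm)$ are assumed to exist, since the definition of $J^A_{\mathrm{obs}}$ already presupposes them.

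With these limits in hand, the target's jump becomes $J_{\mathrm{obs}}^M(x)=\big[m_0^M(x,0^+)-m_0^M(x,0^-)\big]+L^M(x,0^-)$. The control's leakage bracket vanishes by (ii), so $J_{\mathrm{obs}}^{M_0}(x)=m_0^{M_0}(x,0^+)-m_0^{M_0}(x,0^-)$. Subtracting the second from the first, assumption (iii) cancels the two honest jumps and leaves $L^M(x,0^-)$.

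The algebra is immediate. The only point needing care is the existence of the separate one-sided limits discussed above; everything else mirrors the proof of \Cref{thm:rd}. Unlike \Cref{thm:rd}, no continuity of $m_0^M$ is needed here: its jump is allowed to be nonzero and is removed by (iii) instead. No sign claim on the result is needed beyond what \Cref{ass:struct} already provides.
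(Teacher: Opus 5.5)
Your proof is correct and takes the same approach as the paper. You split each observed jump into the honest jump minus the leakage jump, use (i) to turn the target's leakage term into $+L^M(x,0^-)$, use (ii) to make the control's leakage term vanish, and cancel the honest jumps with (iii); your explicit check that the separate one-sided limits exist is a point the paper leaves implicit, and you handle it correctly.
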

\begin{proof}
Write $J_0^A$ for the honest jump of model $A$ appearing in condition (iii). For the target,
condition (i) gives $J_{\mathrm{obs}}^M=J_0^M+L^M(x,0^-)$: post-cutoff leakage vanishes while
the pre-side limit contributes $L^M(x,0^-)$. For the control, condition (ii) makes its leakage
difference across the cutoff vanish, so $J_{\mathrm{obs}}^{M_0}=J_0^{M_0}$. Condition (iii)
equates the honest jumps, and differencing leaves $L^M(x,0^-)$.
\end{proof}

\subsubsection{Proof of \texorpdfstring{\Cref{prop:efficiency}}{PRC detection}}
\label{app:proof-efficiency}

\Cref{prop:efficiency}, stated here in full, gives the population signature of Route 3: under
calibration and homogeneous extraction, the residual covariance is the hump-shaped function of
the extraction weight that makes PRC a detector of evidence leakage. It rests on the following
population conditions.

\begin{assumption}[PRC population conditions]
\label{ass:prc-pop}
Within the pre-cutoff population, $P_{\mathrm{hon}}=c_0$,
$\E[Y\mid c_0,G<0]=c_0$, and the extraction weight is homogeneous:
$w(q)\equiv\bar w$.
\end{assumption}

\noindent Conditional calibration is achievable by construction---recalibrate on leakage-free
anchors (\Cref{rem:calib})---and $P_{\mathrm{hon}}=c_0$ takes the crowd as the honest
benchmark, appropriate when the model holds no legitimate edge over public information.
Homogeneous $w$ is a first-order simplification: under heterogeneous $w$ the covariance is a
nonlinear functional of the joint law of $(w,c_0,Y)$, not of $B$, and the endpoint cases below
show it cannot separate pure answer leakage ($w\equiv1$) from no leakage at all. In the
population result, $P$ denotes the framing-invariant limit $P_\infty$ of the paraphrase
consensus; the finite-$K$ deployment, including the framing-invariance conditions, is
developed in \Cref{app:delta-machinery}.

\begin{restatable}[Residual-covariance detection under convex pull]{proposition}{propefficiency}
\label{prop:efficiency}
Under \Cref{ass:pull,ass:prc-pop},
\[
\Cov(P,Y-P\mid G<0)
=\bar w(1-\bar w)\,\E[c_0(1-c_0)\mid G<0]\;\ge\;0,
\]
and the covariance vanishes whenever $w\equiv0$ or $w\equiv1$.
\end{restatable}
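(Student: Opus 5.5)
Under \Cref{ass:pull} with $w\equiv\bar w$ and $P_{\mathrm{hon}}=c_0$ (\Cref{ass:prc-pop}), the forecast is the affine blend $P=(1-\bar w)c_0+\bar wY$. This gives the residual
\[
Y-P=(1-\bar w)(Y-c_0).
\]
So both arguments of the covariance are linear in the pair $(c_0,Y)$. The plan is to expand the covariance bilinearly and evaluate each piece using conditional calibration. All moments are taken under the pre-cutoff law $G<0$; I suppress this conditioning below.

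\textbf{Expansion.} Writing $P=c_0+\bar w(Y-c_0)$, we get
\[
\Cov(P,Y-P)=(1-\bar w)\big[\Cov(c_0,Y-c_0)+\bar w\,\Var(Y-c_0)\big].
\]
Two facts follow from calibration, $\E[Y\mid c_0]=c_0$. First, $\E[Y-c_0\mid c_0]=0$, so $Y-c_0$ is uncorrelated with every function of $c_0$, and hence $\Cov(c_0,Y-c_0)=0$. Second, $\E[Y-c_0]=0$, so $\Var(Y-c_0)=\E[(Y-c_0)^2]$. Conditioning on $c_0$ and using $Y\in\{0,1\}$ (so $Y^2=Y$) together with calibration gives
\[
\E[(Y-c_0)^2\mid c_0]=c_0-2c_0^2+c_0^2=c_0(1-c_0).
\]
By the tower property, $\Var(Y-c_0)=\E[c_0(1-c_0)]$. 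Substituting yields $\bar w(1-\bar w)\,\E[c_0(1-c_0)\mid G<0]$.

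\textbf{Sign and endpoints.} Since $\bar w\in[0,1]$ and $c_0\in[0,1]$, both factors are nonnegative, so the covariance is $\ge 0$. The factor $\bar w(1-\bar w)$ vanishes at $\bar w=0$ and at $\bar w=1$. I will also check the endpoints directly as a sanity check. At $w\equiv1$ we have $Y-P=0$, and at $w\equiv0$ we have $P=c_0$, which is uncorrelated with $Y-c_0$ by calibration.

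\textbf{Main obstacle.} The only delicate step is justifying $\Cov(c_0,Y-c_0)=0$. This needs the calibration to be conditional on $c_0$, not merely marginal, and it needs finite second moments, which the standing regularity supplies. The remaining steps are routine algebra.
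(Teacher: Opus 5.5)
Your proof is correct and follows essentially the same route as the paper's. Both write $P=(1-\bar w)c_0+\bar wY$ and $Y-P=(1-\bar w)(Y-c_0)$, expand the covariance bilinearly, use conditional calibration to kill $\Cov(c_0,Y-c_0)$, and evaluate the surviving term as $\E[c_0(1-c_0)]$; you go through $\Var(Y-c_0)$, the paper through $\Cov(Y,Y-c_0)=\E[\Var(Y\mid c_0)]$, which coincide under calibration, and your endpoint checks match the paper's.
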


\begin{proof}
All moments are conditional on $G<0$. Under \Cref{ass:pull} with homogeneous weight $\bar w$ and
$P_{\mathrm{hon}}=c_0$, we have $P=(1-\bar w)c_0+\bar wY$ and $Y-P=(1-\bar w)(Y-c_0)$.
Conditional calibration ($\E[Y\mid c_0,G<0]=c_0$) gives $\Cov(c_0,Y-c_0)=0$ and, for binary
$Y$, $\Cov(Y,Y-c_0)=\E[\Var(Y\mid c_0)]=\E[c_0(1-c_0)]$. Hence
\[
\Cov(P,Y-P)
=(1-\bar w)\,\Cov\big((1-\bar w)c_0+\bar wY,\;Y-c_0\big)
=\bar w(1-\bar w)\,\E[c_0(1-c_0)]\;\ge\;0 .
\]
If $w\equiv0$ then $P=P_{\mathrm{hon}}=c_0$ and the covariance equals $\Cov(c_0,Y-c_0)=0$; if
$w\equiv1$ then $P=Y$, so $Y-P=0$ and the covariance vanishes as well.
\end{proof}

\begin{remark}[Calibration is mandatory]
\label{rem:calib}
\Cref{prop:efficiency} presumes a calibrated honest forecast. Real LLMs are markedly
overconfident (on our data, fitted temperature $T\approx8$; provenance in \Cref{app:e4}), so
the \emph{raw} residual covariance is dominated by miscalibration and is in fact
negative---the opposite of the leakage signature. One must recalibrate $P$ on leakage-free
anchors first, or use a differenced form; a positive \emph{calibrated} value indicates
evidence leakage.
\end{remark}

\subsubsection{A sufficient external reference}
\label{app:proof-minimal}

The following corollary, referenced in \Cref{sec:estimation:minimal}, records that a single
external reference restores the point identification that \Cref{thm:nonident} denies to
passive data.

\begin{corollary}[A sufficient external reference]
\label{prop:minimal}
With only the passive backtest distribution of $(P,Y,X,G)$ (including the crowd anchor $c_0$),
$B$ is set-identified, not point-identified (\Cref{thm:nonident}). Each of the following
suffices to point-identify $B$ under its stated assumptions: \textnormal{(R1)} a known cutoff
with questions on both sides (\Cref{prop:rd-global}); \textnormal{(R2)} a clean,
capability-matched control on the same questions (\Cref{prop:did}). The intervention
\textnormal{(R3)} detects evidence leakage (\Cref{prop:efficiency}) but does not point-identify
it.
\end{corollary}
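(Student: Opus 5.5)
The corollary has three parts: a negative part for passive data, two positive parts (R1, R2), and a detection-only part for R3. The plan is to assemble each part from results already proved, adding only one small construction for the last claim.

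\emph{Negative part.} I would invoke \Cref{thm:nonident} directly. Including the crowd anchor $c_0$ changes nothing, because $c_0$ is just another leakage-free covariate absorbed into $X$. The construction $\tilde m_{0,t}=m+t(1-m)$ on $\{g<0\}$ leaves the passive law of $(P,Y,X,G)$ untouched. It therefore yields a continuum of admissible pairs with distinct $B$, namely $B=t\,\E[1-m\mid G<0]$. This gives set- but not point-identification whenever $\E[1-m\mid G<0]>0$; I would note the degenerate case $m\equiv1$ separately.

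\emph{Positive parts.} For (R1), I would cite \Cref{prop:rd-global}, which rests on \Cref{thm:rd}, \Cref{ass:rd-reg} and \Cref{ass:smooth}. The known cutoff fixes the sign convention of $G$, and the clean side pins $m_0$ there. Unique extrapolation then determines $m_0$ on $\{g<0\}$, and hence $B=\E[m_0-m\mid G<0]$. For (R2), I would cite \Cref{prop:did}. A clean, recency-matched control on the same questions supplies the four standardized cells, and the formula $B=\Delta^M-\Delta^{M_0}$ expresses $B$ as a functional of the joint law of the target and control data. I would remark that ``capability-matched'' should be read as satisfying the parallel-change condition of \Cref{ass:transport}; level differences cancel, so only that condition is actually used.

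\emph{R3 part, and the main obstacle.} The delicate step is showing that the paraphrase intervention does not point-identify $B$. \Cref{prop:efficiency} gives detection: under its conditions $\PRC=\bar w(1-\bar w)\E[c_0(1-c_0)\mid G<0]$, which is positive exactly when $\bar w\in(0,1)$.

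For non-identification, I would exhibit two laws with the same observable PRC but different $B$. The cleanest pair uses the endpoint cases: $w\equiv0$ and $w\equiv1$ both give $\PRC=0$. By \Cref{prop:perq}, however, $B=0$ in the first case and $B=\E[b_0\mid G<0]>0$ in the second.

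More generally, $\bar w$ and $1-\bar w$ give the same PRC, while $w(2-w)$ is strictly increasing on $[0,1]$, so the two values of $B$ differ. Hence $B$ is not a function of the population PRC. I expect the care here to go into confirming that these two laws agree on everything R3 observes, not just on the covariance. At $w\equiv1$ the paraphrase consensus equals $Y$, which is observable, so the endpoint pair may actually be distinguishable.

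If that objection bites, I would instead use the interior symmetric pair $\bar w\leftrightarrow1-\bar w$ under heterogeneity. Following the remark after \Cref{ass:prc-pop}, I would argue that the statistic is a nonlinear functional of the law of $(w,c_0,Y)$ and not of $B$. The claim is then scoped accordingly: the PRC statistic alone does not identify $B$.
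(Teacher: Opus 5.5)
Your proposal is correct and follows the paper's route. The paper's proof is simply ``immediate from'' \Cref{thm:nonident}, \Cref{prop:rd-global}, \Cref{prop:did}, and \Cref{prop:efficiency}, and its R3 non-identification rests on the same non-injectivity of $\bar w(1-\bar w)$ you use, namely the endpoint cases noted after \Cref{ass:prc-pop}. Your distinguishability worry is well founded, and the symmetric-pair fallback does not escape it either: under \Cref{ass:prc-pop}, $\bar w=(P-c_0)/(Y-c_0)$ is directly observable, and $B=\E[(c_0-Y)^2-(P-Y)^2\mid G<0]$ is even identified from passive data. So your final scoping to ``the $\PRC$ statistic alone does not determine $B$'' is the right reading, and the one the paper intends.
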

\begin{proof}
Immediate from \Cref{thm:nonident,prop:rd-global,prop:did,prop:efficiency}.
\end{proof}

\begin{remark}[Metric scope]
\label{rem:metric}
The route results use only the structure $m=m_0-L$ and hold for any bounded loss---Brier,
$1-$accuracy, or $1-\text{pass@}1$---which licenses the pass@1 correction of
\Cref{sec:exp:e3}; the law $L=b_0\,w(2-w)$ and the $\PRC$ statistic are Brier-specific.
\end{remark}

\subsubsection{Complementary signatures}
\label{app:proof-connection}

The following proposition, referenced in \Cref{sec:estimation:minimal}, combines the boundary
and residual signatures into the complementarity statement that motivates pairing the probes.
We make no formal claim about a consistent test; a sampling-level analysis would require
thresholds, finite-sample noise models, and the availability of both probes.

\begin{proposition}[Complementary signatures under homogeneous extraction]
\label{prop:connection}
Under \Cref{ass:struct,ass:pull,ass:cont,ass:prc-pop} and the support conditions of
\Cref{app:proofs-routes}, suppose the stakes and homogeneous extraction have left limits $b_0$
and $w$ at the cutoff, and let $\sigma_S^2=\E[c_0(1-c_0)\mid G<0]$. Then
\[
\begin{aligned}
  J(w)&=b_0\,w(2-w)
    &&(\text{increasing; maximal at }w=1),\\
  \PRC(w)&=w(1-w)\sigma_S^2
    &&(\text{maximal at }w=\tfrac12;\ \text{zero at }w\in\{0,1\}).
\end{aligned}
\]
\end{proposition}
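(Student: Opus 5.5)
The proposition splices two earlier results, so the plan treats the two displayed identities separately and combines them at the end.

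\emph{The boundary jump.} First invoke \Cref{thm:rd}. Under \Cref{ass:struct,ass:cont} and the support regularity of \Cref{ass:rd-reg}, it gives $J(x)=L(x,0^-)$. Next express $L$ through \Cref{prop:perq}, which applies because \Cref{ass:pull} holds. That result gives
\[
L(x,g)=\E[b_0\,w(2-w)\mid X=x,G=g]\quad\text{for } g<0 .
\]
With homogeneous extraction ($w\equiv\bar w$ from \Cref{ass:prc-pop}), the factor $w(2-w)$ comes out of the conditional expectation, leaving $L(x,g)=w(2-w)\,\E[b_0\mid X=x,G=g]$. Take $g\uparrow0$ and use the hypothesis that the stakes and the weight have left limits at the cutoff. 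Reading $b_0$ as the left limit of the conditional stakes $\E[b_0\mid X=x,G=g]$, this gives $J=b_0\,w(2-w)$.

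\emph{Shape of $J$.} The shape statement is elementary calculus. Since $\frac{d}{dw}\,w(2-w)=2(1-w)\ge0$ on $[0,1]$, $J$ is increasing there and attains its maximum $b_0$ at $w=1$.

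\emph{The residual covariance.} This part is a direct restatement of \Cref{prop:efficiency} with $\bar w=w$ and $\sigma_S^2=\E[c_0(1-c_0)\mid G<0]$. It gives $\PRC(w)=w(1-w)\sigma_S^2$. The parabola $w(1-w)$ is maximized at $w=\tfrac12$, by derivative $1-2w=0$. It vanishes at $w\in\{0,1\}$, matching the endpoint cases already shown in that proof. The proof then ends by displaying both identities side by side, to make the complementarity explicit: $J$ is largest exactly where $\PRC$ is blind.

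\emph{Main obstacle.} The main obstacle is bookkeeping rather than depth: making precise what ``$b_0$ at the cutoff'' means. Two issues need care.

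First, the conditional stakes may depend on $x$. So I would state the $J$ identity pointwise in $x$, with $b_0=b_0(x,0^-)$ defined as the left limit of $\E[b_0\mid X=x,G=g]$.

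Second, \Cref{prop:efficiency} uses $P$ as the framing-invariant consensus limit $P_\infty$, whereas the RD jump uses the deployed forecast. I would note that under the fixed deterministic protocol these coincide for the population statement, so both signatures refer to the same convex-pull forecast. If they could differ, I would add this identification as an explicit standing convention rather than prove it.
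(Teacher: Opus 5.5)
Your proposal is correct and follows the paper's proof essentially verbatim. \Cref{thm:rd} composed with \Cref{prop:perq}, with homogeneous $w$ pulled out of the conditional expectation, gives $J=b_0\,w(2-w)$; \Cref{prop:efficiency} with $\bar w=w$ gives $\PRC(w)=w(1-w)\sigma_S^2$; and the shape claims are elementary calculus. Your pointwise-in-$x$ reading of $b_0$ and your flag about $P$ versus $P_\infty$ are more careful than the paper, but a single fixed-prompt forecast does not automatically equal $P_\infty$ (under \Cref{ass:framing} it differs by framing noise), so the explicit standing convention, rather than the claimed coincidence, is the right way to close that point.
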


\begin{proof}
By \Cref{thm:rd} and \Cref{prop:perq}, the boundary jump equals the left-limit leakage,
$J=L(\cdot,0^-)=b_0\,w(2-w)$, which is increasing in $w$ on $[0,1]$ and maximal at $w=1$. By
\Cref{prop:efficiency} in the homogeneous case, $\PRC=w(1-w)\sigma_S^2$, maximal at
$w=\tfrac12$ and zero at $w\in\{0,1\}$. In particular $\PRC(1)=0$ while $J(1)=b_0>0$, and
$\PRC(\tfrac12)=\sigma_S^2/4>0$.
\end{proof}

\noindent The complementarity is verified on the synthetic sweep of \Cref{app:e1}; the
answer-leakage regime $w\to1$ does not arise in the ground-truth experiment (measured
$w\le0.42$, \Cref{sec:exp:m3}), so its real-data face is untested.

\section{Deploying the routes}
\label{app:delta-machinery}

\Cref{fig:decision} summarizes the practitioner's choice of route given the available resources;
it is the decision-oriented complement of \Cref{tab:routes}, which states what each route
guarantees and how its assumption is checked.

\begin{figure}[t]
\centering
\begin{tikzpicture}[
    >=Stealth, node distance=9mm,
    box/.style={draw, rounded corners, align=center, font=\footnotesize, inner sep=4pt, minimum height=7mm},
    route/.style={draw, rounded corners, align=center, font=\footnotesize\bfseries, fill=blue!8, inner sep=4pt},
    q/.style={draw, diamond, aspect=2.4, align=center, font=\footnotesize, inner sep=2pt, fill=gray!8},
    arr/.style={->, thick},
  ]
  \node[box, fill=black!5] (start) {Backtest (cutoff $T$);\\ fix metric $+$ difficulty adjustment};
  \node[q, below=of start] (wait) {Post-cutoff\\ data available?};
  \node[route, right=28mm of wait] (fwd) {Forward eval\\ ($B{=}0$)};
  \node[q, below=12mm of wait] (straddle) {Straddles\\ cutoff?};
  \node[q, below=12mm of straddle] (known) {Cutoff\\ known?};
  \node[route, right=28mm of known] (rd) {RD: boundary $\widehat J$;\\ global $\Bhat$ w/ extrapolation};
  \node[route, below=10mm of known] (did) {DiD $\Bhat$\\ {\footnotesize(clean matched control)}};
  \node[route, left=30mm of did] (prc) {Calibrate $\to$ PRC\\ (detect only)};

  \draw[arr] (start) -- (wait);
  \draw[arr] (wait) -- node[above,font=\scriptsize]{yes} (fwd);
  \draw[arr] (wait) -- node[right,font=\scriptsize]{no} (straddle);
  \draw[arr] (straddle) -- node[right,font=\scriptsize]{yes} (known);
  \draw[arr] (known) -- node[above,font=\scriptsize]{yes} (rd);
  \draw[arr] (known) -- node[right,font=\scriptsize]{no} (did);
  \draw[arr] (straddle.west) -| node[above left,font=\scriptsize,pos=0.05]{no}
        (prc.north);
  \draw[arr, dashed] (did) -- node[below,font=\scriptsize]{corroborate} (prc);
\end{tikzpicture}
\caption{\textbf{Choosing a route: report a leakage-adjusted score only when global $B$ is
identified; otherwise report the boundary or detection estimand.}
Forward evaluation ($B=0$) is preferred when post-cutoff outcomes can be awaited. Otherwise, RD
identifies the boundary jump when the cutoff is known (and global $B$ under extrapolation), and
DiD identifies global $B$ with a clean, capability/recency-matched control; if no data straddle
the cutoff, only the calibrated PRC detection statistic remains. The flow is a simplification:
with both a known cutoff and a matched control available, DiD may be preferred for global $B$.
\Cref{tab:routes} states each route's assumptions and checks.}
\label{fig:decision}
\end{figure}

\subsection{The practitioner's recipe and control validation}
\label{app:recipe}

\paragraph{The practitioner's recipe.}
\textbf{(1)} Fix the evaluation metric and a leakage-free
difficulty adjustment; for probabilistic forecasts, a contemporaneous crowd forecast provides
the stakes anchor. \textbf{(2)} Estimate $\Bhat$ by a point-identifying route: DiD requires a
capability/recency-matched clean control (selection and validation protocol below), while RD
identifies the boundary jump and needs the
extrapolation assumption for global $B$. \textbf{(3)} If using $\PRC$, first calibrate on
leakage-free anchors and report it only as a detection statistic
(\Cref{rem:calib,prop:connection}); calibration is not required for accuracy- or pass@1-based
RD/DiD contrasts. \textbf{(4)} When global $B$ is identified, report $\Rmeas+\Bhat$ (or the
score form of \Cref{cor:honest}) with a confidence interval; otherwise report the boundary or
detection estimand without converting it into a global correction. \textbf{(5)} Prefer
\emph{forward} evaluation when feasible, for which $B=0$ by construction; and if the cutoff is
uncertain or staged, report sensitivity to its placement rather than treating a single date as
known.

\paragraph{Selecting and validating a DiD control (Route 2).}
The control should be a same-family snapshot or a vintage-matched peer of the target---same
capability tier and generation, similar training-data distribution---whose documented cutoff
predates the evaluation window; its cleanliness should be verified empirically rather than
assumed, for example by date-only recall probes on the window's questions. The parallel-change
condition of \Cref{ass:transport} is not fully testable, since the target's pre-cutoff cell is
contaminated by construction, but it has three testable implications: on the jointly clean
region $\{G>0\}$ both models' risk trends are honest and should move in parallel; placebo
cutoffs at dates with no leakage discontinuity should return estimates near zero; and covariate
balance should hold across the four standardized cells. When several candidate controls are
available, agreement of $\Bhat$ across them is an over-identification check, and their spread
is a systematic-uncertainty band. Finally, the dominant violation is signed
(\Cref{sec:estimation:did}): a weak control inflates $\Bhat$, so a null finding is conservative
under this failure mode, whereas a positive finding should be reported together with the
matching checks. M5 (\Cref{app:m5}) implements this protocol.

\subsection{Finite-\texorpdfstring{$K$}{K} PRC machinery (Route 3)}
\label{app:prc-machinery}

The deployable form of the paraphrase route (\Cref{sec:estimation:prc}) uses a finite number
of queries. Query the model with $K$ paraphrases per question, form the
consensus $\bary_{K,i}=K^{-1}\sum_k P_i^{(k)}$, and compute
$\widehat\Cov(\bary_K,Y-\bary_K)$. The target population quantity is
$\PRC_\infty=\Cov(P_\infty,Y-P_\infty\mid G<0)$ from \Cref{prop:efficiency}, where $P_\infty$ is
the framing-invariant consensus defined by the following condition (moved here from the main
text because only the finite-$K$ analysis uses it).

\begin{assumption}[Framing-invariant paraphrase noise]
\label{ass:framing}
Within the pre-cutoff population, for paraphrase $\pi_k$, write
$P^{(k)}=P_\infty+\varepsilon_k$, where the leaked signal and
$P_\infty$ do not depend on framing. Conditional on the question, the deviations are identically
distributed, mean zero, mutually uncorrelated, and uncorrelated with $P_\infty$ and
$Y-P_\infty$. They have finite fourth moments.
\end{assumption}

\begin{lemma}[Finite-paraphrase attenuation]
\label{lem:finite-k}
Under \Cref{ass:framing}, let
$W=\E[\Var_\pi(P^{(k)}\mid q,G<0)\mid G<0]$. For constant $K$,
\[
\Cov(\bary_K,Y-\bary_K\mid G<0)=\PRC_\infty-\frac{W}{K}.
\]
\end{lemma}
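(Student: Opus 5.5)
The plan is to expand the covariance by bilinearity after substituting $\bary_K=P_\infty+\bar\varepsilon_K$, where $\bar\varepsilon_K=K^{-1}\sum_{k=1}^K\varepsilon_k$. All moments are taken under the pre-cutoff law ($G<0$). Then $Y-\bary_K=(Y-P_\infty)-\bar\varepsilon_K$, and
\[
\Cov(\bary_K,Y-\bary_K)=\Cov(P_\infty,Y-P_\infty)-\Cov(P_\infty,\bar\varepsilon_K)+\Cov(\bar\varepsilon_K,Y-P_\infty)-\Var(\bar\varepsilon_K).
\]
The first term is $\PRC_\infty$ by definition. The two middle terms vanish by \Cref{ass:framing}, since each deviation $\varepsilon_k$ is uncorrelated with $P_\infty$ and with $Y-P_\infty$, and linearity carries this over to $\bar\varepsilon_K$. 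The statement therefore reduces to showing $\Var(\bar\varepsilon_K)=W/K$.

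For that step I would condition on the question $q$ and use the law of total variance. Conditional on $q$, the $\varepsilon_k$ have mean zero, so $\E[\bar\varepsilon_K\mid q]=0$. Hence $\Var(\bar\varepsilon_K)=\E[\Var(\bar\varepsilon_K\mid q)]$, with no between-question term. The conditional deviations are mutually uncorrelated and identically distributed, so $\Var(\bar\varepsilon_K\mid q)=K^{-1}\Var(\varepsilon_1\mid q)$. Moreover, $P_\infty$ is fixed given $q$, so $\Var(\varepsilon_1\mid q)=\Var_\pi(P^{(k)}\mid q,G<0)$. Averaging over questions gives $W/K$, and substituting back yields the claim. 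Because $K$ is constant, it factors out of the outer expectation; this is the only place the hypothesis ``constant $K$'' enters.

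The main obstacle is a subtlety in the middle terms rather than any computation. The assumption states that the deviations are uncorrelated with $P_\infty$ and $Y-P_\infty$ \emph{conditional on the question}, but the lemma needs unconditional covariances. I would bridge this with the law of total covariance: since $\E[\varepsilon_k\mid q]=0$, the between-question component $\Cov(\E[P_\infty\mid q],\E[\varepsilon_k\mid q])$ vanishes. The within-question component vanishes by the conditional uncorrelatedness, so the unconditional covariance is zero as well. One point to check here is the role of $Y$. If $Y$ is not fixed given $q$, the conditional statement is read as being about the joint law of $(P_\infty,Y,\varepsilon_k)$ given $q$. If $Y$ is fixed given $q$, the claim is immediate. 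Finite second moments, which follow from the finite fourth moments, justify all the expansions. The fourth moments matter only for the variance of the sample-covariance estimator, not for this population identity.
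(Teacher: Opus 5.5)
Your proposal is correct and follows the paper's proof: you substitute $\bary_K=P_\infty+\bar\varepsilon_K$, remove the cross terms using the orthogonality conditions of the framing assumption, and use $\Var(\bar\varepsilon_K\mid q)=\Var(\varepsilon_k\mid q)/K$ before averaging over questions. Your explicit appeal to the laws of total covariance and total variance, where conditional mean zero kills the between-question terms, spells out what the paper compresses into ``taking the question-level expectation.''
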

\begin{proof}
Write $\bary_K=P_\infty+\bar\varepsilon_K$. Conditional mean-zero and mutual
uncorrelatedness give $\Var(\bar\varepsilon_K\mid q)=\Var(\varepsilon_k\mid q)/K$.
The remaining orthogonality conditions in \Cref{ass:framing} give
\[
\Cov(P_\infty+\bar\varepsilon_K,\,
Y-P_\infty-\bar\varepsilon_K)
=\Cov(P_\infty,Y-P_\infty)-\Var(\bar\varepsilon_K).
\]
Taking the question-level expectation yields the result.
\end{proof}

\paragraph{Estimator and inference.}
Let
\[
\widehat W=\frac1m\sum_{i=1}^m\frac1{K-1}
\sum_{k=1}^K(P_i^{(k)}-\bary_{K,i})^2.
\]
The bias-corrected estimator is
\[
\PRChat_{\mathrm{bc}}
=\widehat\Cov(\bary_K,Y-\bary_K)+\widehat W/K.
\]
For varying $K_i$, replace $\widehat W/K$ by the average of the per-question sample
variances divided by $K_i$. Under independent questions, \Cref{ass:framing}, and finite fourth
moments, the vector of sample moments defining $\PRChat_{\mathrm{bc}}$ obeys a multivariate
central limit theorem, and the delta method yields
$\sqrt m(\PRChat_{\mathrm{bc}}-\PRC_\infty)\Rightarrow\mathcal N(0,\sigma_\PRC^2)$; in practice
we bootstrap questions and refit any calibrator inside each replicate. $\PRC$ remains a
detection statistic, not an estimator of $B$ (\Cref{prop:efficiency}), and paraphrase diversity
must come from framing rather than sampling temperature (\Cref{app:paraphrase}).

\subsection{Paraphrase set and generation protocol}
\label{app:paraphrase}

Route 3 (\Cref{sec:estimation}) requires paraphrases that vary framing without revealing the
outcome; the set documented here illustrates the design principles and the generation
protocol.

\paragraph{Design principles.}
Effective paraphrases must (i) preserve the event and resolution criteria (same answer), (ii)
maximize variation in the reasoning channel, and (iii) reveal nothing about the outcome. We vary
six dimensions independently: lexical choice, sentence form, level of specificity, entity
reference, framing/tone, and which background context is emphasized.

\paragraph{Generation and validation.}
Paraphrases are generated by an auxiliary model (distinct from the model under test) instructed to
preserve meaning and resolution criteria while varying the six dimensions and concealing the
outcome. Candidates are validated by three checks: semantic preservation (embedding cosine
similarity above a threshold), lexical diversity (low pairwise self-BLEU), and an outcome-leakage
screen rejecting any phrasing that encodes the answer's direction. Querying uses temperature $0$
with reasoning disabled and a strict final-answer parser; truncated or out-of-range generations
are retried once and otherwise discarded. The protocol is instantiated, with the specific $K$
and models used, in the $\PRC$ experiments of \Cref{app:e1,app:prc-real}.

\subsection{Classical identification tools used in the paper}
\label{app:related-tools}

The machinery instantiates standard tools. The residual covariance is a crowd-anchored
analogue of Mincer--Zarnowitz efficiency tests and of Murphy's excess resolution
\citep{mincer1969evaluation,elliott2016economic,patton2012forecast,murphy1973new,gneiting2007strictly}.
The routes are regression discontinuity \citep{imbens2008regression,lee2010regression} and
difference-in-differences, with a fixed-question intervention in the spirit of control
functions and specification tests
\citep{wooldridge2015control,wu1973alternative,hausman1978specification}. The sharp-bounds
framing of \Cref{thm:nonident} follows partial identification
\citep{manski2003partial,imbens2004confidence}. The finite-$K$ correction is classical
errors-in-variables disattenuation
\citep{fuller1987measurement,spearman1910correlation}. Two superficially appealing
instruments do not transfer and serve only as motivation: temporal-head ablation removes
legitimate temporal reasoning along with leakage, and item-preknowledge item response theory
requires cross-examinee variation that a single deployed model lacks.

\section{Experiment configurations and full results}
\label{app:experiments}

Each subsection below gives, for one main-text experiment (M1--M5, in order), the full
configuration---data provenance, model inventories, estimator definitions, and inference
procedures---together with the complete results summarized in \Cref{sec:experiments}. The synthetic validation study E1,
the per-experiment robustness analyses, and all supplementary arms are collected in
\Cref{app:supp}. \Cref{tab:contract} is the map: each theoretical claim of
\Cref{sec:impossibility,sec:concentration,sec:estimation} and the experiment whose headline
result carries it.

\begin{table}[t]\centering
\caption{\textbf{Claim-to-evidence contract.} Each theoretical claim of
\Cref{sec:impossibility,sec:concentration,sec:estimation} and the experiment whose headline
result carries it (M1: \Cref{sec:intro}; M2, M3: \Cref{sec:validation}; M4, M5:
\Cref{sec:deployment}). $^\ast$: 95\% CI excludes zero.}
\label{tab:contract}
\footnotesize
\begin{tabular}{@{}p{6.0cm}p{1.3cm}p{5.5cm}@{}}
\toprule
Claim & Carried by & Headline result \\
\midrule
Recency confounds naive backtests (\Cref{thm:nonident}) & M1 &
4/5 cutoff-clean flagships show starred naive gaps ($+0.044$ to $+0.061^\ast$) \\
\addlinespace[2pt]
Inflation rises with contamination dose (\Cref{thm:decomposition}, \Cref{app:twin}) & M2, M3 &
monotone in $r$ at pretraining scale and in forecasting twins
(dose-64 twin contrast $+0.063^\ast$; dose-0 null) \\
\addlinespace[2pt]
Per-question law $L=b_0\,w(2-w)$ (\Cref{prop:perq}, \Cref{ass:pull}) & M3 &
concentration $13\times$ across $b_0$ quintiles; law matched everywhere but the top quintile;
measured $w$ declines with difficulty \\
\addlinespace[2pt]
Change orthogonal to the signal cannot inflate (\Cref{cor:no-free}, \Cref{app:twin}) & M3 &
outcome-scrubbed control twin gains recency but shows no inflation (naive $+0.001$; RD
$+0.012$, no jump) \\
\addlinespace[2pt]
RD detects boundary leakage; recency alone gives no jump (\Cref{thm:rd}) & M3, M4 &
twin RD $+0.052$ vs control $+0.012$, placebo null; wild diagonals starred on code
($+0.090^\ast$, $+0.095^\ast$) and via the clean anchor ($+0.106^\ast$) \\
\addlinespace[2pt]
DiD with a matched control returns disciplined nulls (\Cref{prop:did}, \Cref{ass:transport}) & M5 &
five targets null after adjustment; weak control re-inflates to $+0.055$; power $0.97$ at
effect $0.05$ \\
\addlinespace[2pt]
PRC detects evidence leakage, gated on calibration (\Cref{prop:efficiency},
\Cref{app:proof-efficiency}) & M3 &
differenced PRC $+0.0069^\ast$ only on injected cells; raw PRC negative everywhere \\
\bottomrule
\end{tabular}
\end{table}

\subsection{Scope and coverage}
\label{app:coverage}

\Cref{tab:coverage} summarizes, for each experiment, the data source, sample size, models, metric,
and the machinery that carries statistical significance. It is the single reference for what each
reported interval or $p$-value is computed from.

\begin{table}[h]\centering
\small
\caption{\textbf{Scope and coverage of the empirical program.} One row per experiment; the last
column names the inference machinery behind every significance claim in \Cref{sec:experiments}.}
\label{tab:coverage}
\begin{tabular}{@{}p{1.55cm}p{2.6cm}p{2.2cm}p{3.3cm}p{1.5cm}p{2.6cm}@{}}
\toprule
Experiment & Data source & $n$ & Models & Metric & Inference \\
\midrule
M1 & ForecastBench market panel & $332$--$767$ post-cutoff questions/model & GPT-5,
Gemini-3.1-Pro, Kimi-K2.6, GPT-5.5, MiniMax-M3 & anchored Brier reduction & source$\times$month
cluster bootstrap \\
\addlinespace[4pt]
M2 & Hubble published accuracies & $4$ tasks $\times$ $6$ doses & Hubble 1B/8B (perturbed,
standard) & accuracy & exact permutation trend test; SEM-propagated intervals \\
\addlinespace[4pt]
M3 & ForecastBench panel + Tinker LoRA twins & $988$ injected-pool + $543$ post-cutoff questions
& Qwen3.5-35B-A3B-Base twins (trained here) & anchored Brier reduction; log-prob & item bootstrap;
paired bootstrap; placebo boundaries \\
\addlinespace[4pt]
M4-F & ForecastBench panel + archived real-time forecasts & $1{,}646$ questions; $162$--$499$
anchored & DeepSeek-V3.1, Kimi-K2.6, GPT-5.4, GPT-5.5; controls MiniMax-M3, Claude-Opus-4.7;
family anchors GPT-5-Mini, GPT-5.1, GPT-4.1, o4-mini & anchored Brier reduction & cluster
bootstrap; diagonal permutation test; horizon banding \\
\addlinespace[4pt]
M4-C & LiveCodeBench per-problem pass@1 & $401$--$437$ problems per window & GPT-4o-0806,
Claude-3.5-Sonnet (published); GPT-5, GPT-5-Mini, DeepSeek-V3-0324, Kimi-K2-0905 (generated);
controls: 2025-cutoff pool & pass@1 & problem bootstrap ($4000$); date-permutation test \\
\addlinespace[4pt]
M5 & ForecastBench market panel & $265$ pre- / $1{,}207$ post-boundary paired questions &
Qwen3.5-35B-A3B (target), GPT-5 (matched control), Kimi-K2.6, DeepSeek-V3.2, GLM-4.7, MiniMax-M3,
GPT-3.5-Turbo (weak) & anchored Brier reduction & cluster bootstrap; boundary sweep;
semi-synthetic power \\
\addlinespace[4pt]
E1 (app.) & synthetic, known DGP & $400$--$8000$/rep; $200$--$300$ reps & simulated forecasters &
Brier reduction & bootstrap CIs; MC coverage \\
\bottomrule
\end{tabular}
\end{table}

\subsection{The forecasting panel and M1 (clean-flagship naive check)}
\label{app:m1}

\paragraph{Panel construction.} The panel is built from the public ForecastBench archive
\citep{karger2025forecastbench}: the nightly question banks and the resolution files published in
the project's datasets repository. We retain market-source binary questions (Polymarket,
Metaculus, Manifold, INFER) that are resolved, have a valid frozen market probability
$c_0\in(0,1)$ recorded before resolution, and have a resolution date between July 2024 and July
2026. Combination questions (logical conjunctions of base questions) are excluded. After
deduplication by question identifier this yields $1{,}646$ unique questions. Dataset-source
questions (for example weather and economic series) are excluded from the headline analyses
because their answers are tabular rather than event outcomes; they are retained in the archive
audit. The panel, the raw downloads, and the build script are versioned with the code.

\paragraph{Elicitation protocol.} Every model is queried once per question through OpenRouter
with temperature $0$, reasoning disabled where the endpoint permits, and a fixed prompt that
states the question, its resolution criteria, and the resolution date, and requests a single
probability. Responses are parsed to a float; refusals and parse failures are dropped and
reported per model (the retained fraction appears in each experiment's sample sizes). Scores are
crowd-anchored Brier reductions $L(q)=(c_0-Y)^2-(P-Y)^2$.

\paragraph{M1 design.} For each of the five flagships (GPT-5, cutoff 2024-09-30; Gemini-3.1-Pro,
2025-01-31; Kimi-K2.6, 2025-04-30; GPT-5.5, 2025-12-01; MiniMax-M3, 2026-01-31) we keep questions
resolving at least 30 days after the documented cutoff, so that no outcome can be in training
data. Cutoffs are taken from vendor documentation collected in the pre-registration; Kimi-K2.6's
cutoff is documented at coarser (tier-two) resolution, a fact shown not to be load-bearing by
the cutoff-shift check of \Cref{app:m4}. The
naive statistic is the pre/post gap in mean $L$ around the median resolution date of each model's
clean window. Confidence intervals use a cluster bootstrap over source$\times$month cells
($4{,}000$ resamples).

\paragraph{M1 headline results (demoted detail).}
Four of the five clean flagships show starred naive gaps of $+0.044$ to $+0.061$
(\Cref{fig:m1forest}): an auditor running the standard pre/post check would flag four model
families as leaking on questions they cannot possibly have seen. The fifth model is the
exception that proves the mechanism. MiniMax-M3's null ($+0.017$) is a property of its
\emph{window}, not of the model: restricting all five models to its short clean window
(March through July 2026) collapses every gap to insignificance---GPT-5.5 falls from
$+0.061^\ast$ to $+0.008$, and MiniMax's gap becomes the largest of the five. The naive
statistic therefore measures how much recency contrast an evaluation window spans, not
whether the audited model leaked. This is the empirical face of \Cref{thm:nonident}: a
passive backtest gap carries recency and leakage in a single number, and at roughly $0.05$
Brier-reduction units in every family we test, the confound is far too large to ignore.

\paragraph{Matched-window check.} Because each model's clean window starts at its own cutoff,
gap sizes are not comparable across models: a longer window spans more recency contrast between
its pre and post halves. Restricting all five models to the shortest clean window (MiniMax-M3's:
resolutions from 2026-03-02, split at the common median 2026-05-14, $n\approx333/337$) makes
every gap statistically null and nearly equal: GPT-5 $+0.001$ $[-0.053,+0.042]$, Gemini-3.1-Pro
$+0.005$ $[-0.055,+0.047]$, Kimi-K2.6 $+0.015$ $[-0.025,+0.058]$, GPT-5.5 $+0.008$
$[-0.034,+0.049]$, MiniMax-M3 $+0.017$ $[-0.020,+0.047]$. The spread across models in
\Cref{fig:m1forest} is therefore driven by the windows, not the model families, which is the
recency-confound reading the theory predicts. The same check backs the cutoff-integrity
reading of the documented cutoffs: had a documented cutoff materially understated a model's
training data, that model's gap would not collapse alongside the others in the common window.

\subsection{M2 (Hubble QA)}
\label{app:e2}

\begin{figure}[t]\centering
\figorbox[0.62\linewidth]{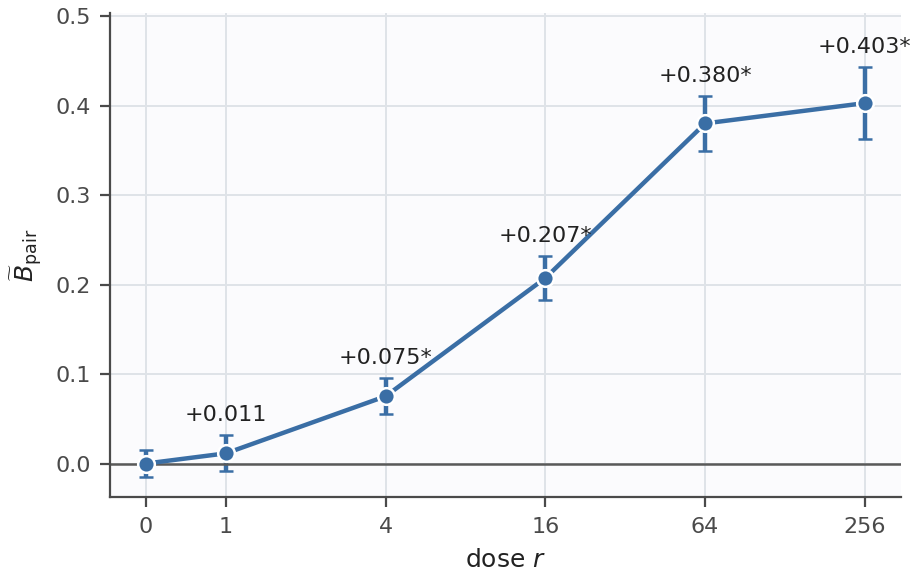}
\caption{\textbf{M2: at pretraining scale, measured accuracy inflates strictly monotonically
with the known contamination dose.} Hubble pretrains a \emph{perturbed} model with benchmark
documents inserted $r$ times and a \emph{standard} twin trained identically without them. The
x-axis is the dose $r$ (log scale). The y-axis is the placebo-centered contrast
$\widetilde B_{\mathrm{pair}}(r)=\widehat B_{\mathrm{pair}}(r)-\widehat B_{\mathrm{pair}}(0)$,
where $\widehat B_{\mathrm{pair}}(r)$ is the perturbed-minus-standard accuracy on items
inserted at dose $r$; $\widetilde B_{\mathrm{pair}}>0$ means contamination inflates measured
accuracy, and the $r=0$ point is the centering reference (zero by construction). Whiskers:
95\% intervals propagated from published SEMs; $^\ast$: interval excludes zero. Result:
strictly monotone from $+0.011$ at $r=1$ to $+0.403^\ast$ at $r=256$; sixteen duplicates
already inflate accuracy by ${\approx}21$ points.}
\label{fig:m2}
\end{figure}

\paragraph{The Hubble suite.} The Hubble suite \citep{hubble2025} is a controlled pretraining
experiment: models are trained from scratch on a web corpus into which benchmark evaluation
documents (MMLU, PIQA, HellaSwag, WinoGrande questions with answers) are deliberately inserted at
\emph{known} duplication counts $r\in\{0,1,4,16,64,256\}$. For each configuration, two models are
released: a \emph{perturbed} model (trained on the corpus with benchmark documents inserted at rate
$r$) and a \emph{standard} model (trained on the same corpus \emph{without} the benchmark
insertions). The standard model is a perfect minimal-pair clean control---it shares the same
architecture, hyperparameters, and training data except for the inserted documents. Model sizes are
1B and 8B parameters, trained for 100B and 500B tokens. We use the published per-(model, task,
duplication rate) accuracies, requiring no additional compute. Headline results use 8B at 500B
tokens, aggregated as an unweighted mean over the four tasks; per-task results are reported as a
robustness check.

\paragraph{Estimation.} Three quantities are reported:
\begin{itemize}
\item \emph{Minimal-pair contrast} (raw):
$\widehat B_{\mathrm{pair}}(r)=\mathrm{acc}_{\mathrm{pert}}(r)-\mathrm{acc}_{\mathrm{std}}(r)$.
\item \emph{Placebo-centered contrast}:
$\widetilde B_{\mathrm{pair}}(r)=\widehat B_{\mathrm{pair}}(r)-\widehat B_{\mathrm{pair}}(0)$,
which removes the small baseline offset between the two separately trained model instances
($\widehat B_{\mathrm{pair}}(0)=-0.022$ for 8B-500B).
\item \emph{Self-baseline estimator} (no separate control):
$\widehat B_{\mathrm{self}}(r)=\mathrm{acc}_{\mathrm{pert}}(r)-\mathrm{acc}_{\mathrm{pert}}(0)$.
\end{itemize}

\paragraph{Trend test.} The monotone dose-response is assessed by an exact one-sided Spearman
permutation test over the nonzero duplication rates ($r\in\{1,4,16,64,256\}$). Robustness checks
(difficulty matching, scale/token settings, per-task consistency) are in \Cref{app:supp-hubble}.

\paragraph{Results.} \Cref{tab:e2} tabulates the three estimators as a function of the known
dose; \Cref{fig:e2} shows the dose-response with propagated intervals, the replication across
scale and token settings, per-task consistency, and the clean-control accuracy range bounding
difficulty imbalance. The placebo-centered contrast is strictly monotone over $r>0$
($+0.011,\,+0.075,\,+0.207,\,+0.380,\,+0.403$; Spearman $\rho=1.0$, $p=0.0083$); the clean
control's accuracy varies by at most $0.051$ across duplication bins; the self-baseline tracks
the minimal-pair contrast within $0.035$ at every positive dose.

\begin{table}[t]\centering
\caption{\textbf{M2 (Hubble, 8B, 500B tokens): three estimators of the dose effect agree.}
Rows: the raw minimal-pair contrast $\widehat B_{\mathrm{pair}}$, the placebo-centered contrast
$\widetilde B_{\mathrm{pair}}$ (the primary estimand, plotted in \Cref{fig:m2}), and the
self-baseline $\widehat B_{\mathrm{self}}$, by duplication dose $r$. Result: all three rise
strictly monotonically over $r>0$ (placebo-centered: Spearman $\rho=1.0$, $p=0.0083$) and agree
within $0.04$ at every positive dose; 95\% intervals are shown in \Cref{fig:e2}(a).}
\label{tab:e2}
\begin{tabular}{lcccccc}
\toprule
duplication $r$ & 0 & 1 & 4 & 16 & 64 & 256 \\
\midrule
$\widehat B_{\mathrm{pair}}$ (raw) & $-0.022$ & $-0.010$ & $+0.053$ & $+0.186$ & $+0.358$ & $+0.381$ \\
$\widetilde B_{\mathrm{pair}}$ (centered) & $+0.000$ & $+0.011$ & $+0.075$ & $+0.207$ & $+0.380$ & $+0.403$ \\
$\widehat B_{\mathrm{self}}$ (self-baseline) & $+0.000$ & $+0.017$ & $+0.065$ & $+0.216$ & $+0.394$ & $+0.413$ \\
\bottomrule
\end{tabular}
\end{table}

\begin{figure}[t]\centering
\figorbox[0.85\linewidth]{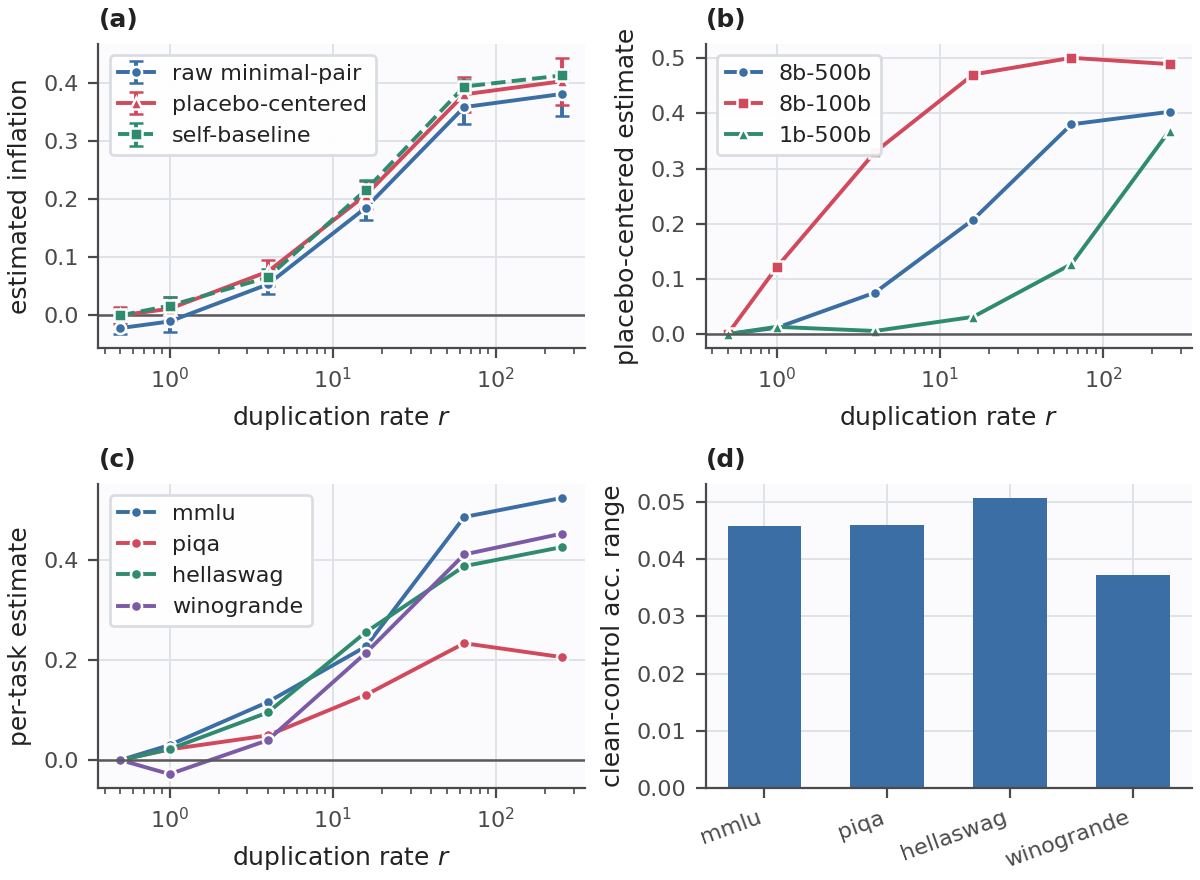}
\caption{\textbf{M2: the Hubble dose-response is robust across estimators, scales, and tasks.}
(a) Raw minimal-pair, placebo-centered, and self-baseline estimates for the 8B/500B pair
(whiskers: 95\% intervals propagated from published SEMs). (b) The placebo-centered estimate
across model scale and token budget. (c) Per-task estimates for 8B/500B. (d) The clean
control's accuracy range across dose bins, bounding difficulty imbalance. Result: inflation
rises monotonically with dose in every panel; no single estimator, scale, or task drives
\Cref{fig:m2}.}
\label{fig:e2}
\end{figure}

\subsection{M3 (forecasting twins with injected leakage)}
\label{app:m3}

\paragraph{Base model and training.} Both twins are LoRA continued-training runs of
Qwen3.5-35B-A3B-Base on the Tinker training API: LoRA rank 32, sequence length 2048, batch 16
sequences, learning rate $2\times10^{-4}$ with 100-step linear warmup, gradient clipping at 1.0,
one epoch over the corpus in a seed-fixed order. Each full twin processes approximately 94
million tokens (2{,}900 steps). The corpus is Wikitext-103 filler (95 million tokens, identical
for both twins) with the injected documents interleaved at deterministic schedule positions.

\paragraph{Injected documents.} For each treated question, dated news-style documents are written
by DeepSeek-V3.1 from the question metadata and the realized outcome, in rotating styles (news
article, encyclopedia entry, analyst note), and validated by DeepSeek-V3.2 (the validator must
answer the question correctly from the treatment document alone, and must \emph{fail} to answer
from the scrubbed control document). The control twin sees the same documents with outcome
statements scrubbed; source, topic, length, and schedule positions are matched. Dose
$r\in\{0,1,4,16,64\}$ counts document copies, randomized across questions stratified by the
control twin's per-question honest error tercile and by resolution month.

\paragraph{Screening.} Injection is restricted to questions resolving before the pseudo-cutoff
$T^\ast$ (April 2026) on which the un-tuned base model is not already confidently correct
(excluding questions with base $P(\mathrm{YES})>0.85$ and outcome YES, or $P(\mathrm{YES})<0.15$
and outcome NO). This directly operationalizes the design requirement that the injected documents
be the marginal information source. The screen keeps $988$ of $1{,}103$ pre-$T^\ast$ questions
and was fixed before any twin was trained; the deviation from the coarser pre-registered proxy
screen is documented in the versioned deviations file.

\paragraph{Probability readout.} Probabilities are read from paired ``Yes''/``No'' continuation
log-probabilities under a fixed, generic, outcome-balanced few-shot prefix, giving
$P(\mathrm{YES})$ as the softmax of the two continuation scores. The same readout is used for the
base model, both pilot twins, and both full twins.

\paragraph{Pilot gates.} Before the full runs, a pilot pair was trained on ten percent of
injected questions with 10 million filler tokens and had to pass five pre-registered gates:
(a) memorization (treatment log-probability advantage on injected documents), (b) signal
(positive twin contrast at dose 64), (c) null (twin contrast consistent with zero at dose 0),
(d) direction (treatment probability moves toward realized outcomes), and (e) recency
(control twin beats the base on injected-topic questions). All five passed at the pilot and again
at full scale, where the values quoted in the main text were computed.

\paragraph{Concentration binning variants.} The binning variable is the control twin's realized
per-question error $b_0$, the quantity appearing in the identity $L=b_0\,w(2-w)$; the main text
displays quintiles, and the conclusion is invariant to granularity. Under the pre-registered
tercile binning at dose 64 the observed contrasts are $+0.019$, $+0.055$, $+0.112$ against
homogeneous-$w$ predictions $+0.013$, $+0.044$, $+0.185$: same pattern, with the shortfall
confined to the top bin. Because binning on realized error could in principle inflate top-bin
estimates by selecting on the minuend of the contrast, we also rebin by ex-ante crowd
uncertainty $c_0(1-c_0)$, which involves no realized outcome. This too preserves the conclusion:
at dose 64 the lower two terciles match the prediction ($+0.068$ observed vs $+0.054$ predicted;
$+0.068$ vs $+0.069$) while the hardest tercile falls short ($+0.053$ observed, CI
$[+0.010,+0.097]$, against $+0.127$ predicted); at dose 16 the hardest ex-ante tercile is null
($-0.001$) against a $+0.125$ prediction. Solving $L/b_0=w(2-w)$ per tercile gives extraction
weights $0.55$, $0.44$, $0.18$ at dose 64 (per quintile: ${\approx}0.5$ on the lower three,
$0.30$, $0.16$ on the top two): extraction declines with question difficulty and rises with
dose.

\paragraph{Recovery decomposition and spillover mechanism.} The boundary statistics in the main
text are: treatment-twin naive pre/post gap $+0.053$; control-twin naive gap $+0.001$;
local-linear RD jumps at $T^\ast$ with 90-day bandwidth $+0.052$ (treatment) and $+0.012$
(control); placebo boundary at $T^\ast-120$ days $-0.014$. The per-item ground truth, averaged
over the injected pool with dose-0 differencing, is $+0.024$ to $+0.027$. The gap between the
boundary estimate and the per-item truth is a base-rate spillover, measured directly: the
treatment twin's mean probability of YES shifts relative to the control twin by $-0.088$ on
injected questions, $-0.096$ on dose-zero questions, and $-0.097$ on never-injected
post-$T^\ast$ questions ($n=543$)---a uniform lean matching the $83\%$ NO rate of the injected
outcomes. The lean is nearly cost-free on pre-$T^\ast$ pools (YES rate $0.170$--$0.183$, hence
the dose-zero null of $-0.003$) and costs $0.028$ Brier on the post pool (YES rate $0.355$).
Dose-zero differencing removes the uniform lean exactly, which is why the law analyses are
unaffected. Concentrated single-style document injection plausibly exaggerates this prior shift
relative to organic contamination; we flag it as a scope note for reading wild boundary
estimates as totals rather than pure pre-side inflation.

This accounting also delimits the operational model of \Cref{sec:nonident} empirically. The
twins realize both estimands at once: the counterfactual per-item contrast of
\Cref{sec:decomposition} ($+0.024$) and the operational boundary quantity built on
\Cref{ass:struct} ($+0.052$). The wedge between them is a \emph{post-side} contamination effect
(the base-rate lean costs $0.028$ Brier on post-$T^\ast$ questions), which the support
restriction of \Cref{ass:struct} ($L=0$ for $g\ge0$) excludes by fiat and which perturbs the
alignment of \Cref{lem:estimand-alignment} by the same amount. Under concentrated injection the
restriction is therefore measurably violated; under diffuse organic contamination the lean, and
with it the wedge, plausibly shrinks, bringing the operational and counterfactual estimands
back together.

\paragraph{Attenuation mechanisms the twins cannot exhibit.} The spillover above biases a
wild boundary jump \emph{upward}; two mechanisms present in wild deployments bias it
\emph{downward}, and injection that is sharp at $T^\ast$ produces neither. First,
quasi-resolved outcomes: a question resolving shortly after a cutoff can be effectively
decided in pre-cutoff text (an election with decisive polls, a lawsuit after closing
arguments). The support restriction of \Cref{ass:struct} classifies such information as
recency, so this is not a violation, but it raises the honest post-boundary score
$m(x,0^+)$ and shrinks the observed jump. Second, ingestion lag: text describing outcomes
realized just before the cutoff has had little time to be crawled and trained on, thinning
$L(x,g)$ as $g\to0^-$ and smoothing the left limit. Both mechanisms attenuate $\widehat J$
toward zero, and neither is covered by the cutoff-placement sensitivity checks, which
perturb the boundary's location rather than the information density around it. The reading
rules of \Cref{sec:wild} therefore treat a wild jump as an upper bound only against the
spillover mechanism: a starred jump survives attenuation, while a boundary null does not
certify the absence of leakage sitting deeper in the pre-cutoff period.

\paragraph{PRC protocol.} Three paraphrases per question are generated by DeepSeek-V3.1 and
validated by DeepSeek-V3.2 (semantic equivalence and preserved resolution criteria); $971$ of
$988$ questions retain all three. Both twins score all paraphrases with the same readout.
Per-question consensus probabilities are temperature-calibrated per twin on dose-0 anchors, and
the reported statistic is the twin-differenced residual covariance with paired question bootstrap
($4{,}000$ resamples): $+0.0069^\ast$ (CI $[+0.0035,+0.0107]$) on injected questions, null at
dose zero ($-0.0057$, CI $[-0.0128,+0.0016]$), and monotone in dose with the dose-64 cell
starred ($+0.0121^\ast$). Raw, undifferenced PRC is negative in every cell (treatment/control
$\times$ injected/dose-0), reproducing the miscalibration failure predicted by \Cref{rem:calib}.
\Cref{fig:m3prc} shows the calibrated twin-differenced statistic by cell.

\begin{figure}[t]\centering
\figorbox[0.62\linewidth]{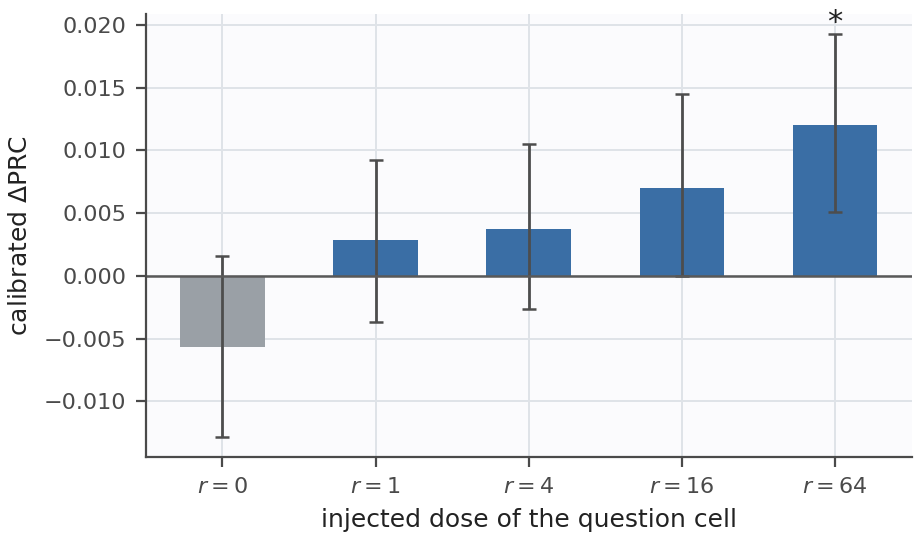}
\caption{\textbf{M3: the calibrated, twin-differenced PRC fires only where leakage was
injected.} Bars: the calibrated twin-differenced PRC per injection cell (dose zero and each
positive dose), computed from three validated paraphrases per question; positive values signal
evidence leakage. Whiskers: paired-bootstrap 95\% CIs; $^\ast$: CI excludes zero. Result: null
at dose zero, monotone in dose, starred at dose 64 ($+0.0121^\ast$); the raw statistic without
calibration or differencing is negative everywhere, the miscalibration failure
\Cref{rem:calib} predicts.}
\label{fig:m3prc}
\end{figure}

\subsection{M4-F (forecasting-panel matrix and clean-anchor arm)}
\label{app:m4}

\paragraph{Specificity matrix.} Targets are DeepSeek-V3.1 (documented cutoff March 2025),
Kimi-K2.6 (April 2025), GPT-5.4 (August 2025), and GPT-5.5 (December 2025); Gemini-3.1-Pro
(January 2025) is exploratory because the panel has thin mass before its boundary. Controls are
MiniMax-M3 and Claude-Opus-4.7, whose January 2026 cutoffs leave no leakage discontinuity inside
the tested window. For each target and each assumed boundary, the statistic is the pre/post jump
(90-day windows) in the per-question target-minus-control score difference, using the mean of the
available controls on each question. Confidence intervals use the source$\times$month cluster
bootstrap. Joint inference over the matrix uses a permutation test that reassigns cutoff dates to
models and compares the observed diagonal-minus-off-diagonal contrast with its permutation
distribution.

\paragraph{Exploratory Gemini row.} Gemini-3.1-Pro's January 2025 cutoff precedes every tested
boundary, so all of its cells are placebos: $+0.141^\ast$ (Mar'25), $+0.042$ (Apr'25), $-0.050$
(Aug'25), $+0.026$ (Dec'25). The one starred value sits at the boundary with the thinnest
pre-boundary mass, which makes the estimate unstable, and it is not at any documented cutoff;
we do not read it as a detection.

\paragraph{Cutoff-resolution robustness for Kimi-K2.6.} Kimi-K2.6's coarser (tier-two) cutoff
documentation is not load-bearing: its cells at the adjacent March and April boundaries
($+0.034$, $+0.022$; \Cref{tab:m4f}) are both unstarred, so its null survives a one-month
cutoff shift.

\paragraph{Clean-anchor arm.} ForecastBench archives every real-time forecast submitted to the
benchmark before question resolution. For a target with cutoff $T$, we select \emph{family
relatives} (same vendor, earlier release) whose archived real-time forecasts cover questions
resolving on both sides of $T$: GPT-5-Mini and GPT-5.1 for GPT-5.5; GPT-4.1 and o4-mini for
GPT-5.4. On each covered question the statistic is our retrospective target score minus the
relative's archived real-time score; the crowd anchor cancels in this difference. The estimand is
the jump of this difference at $T$. Because archived forecasts for late-resolving questions can
come from more recent submission rounds (shorter horizons), the primary variant restricts to
anchor horizons of 10 to 75 days, which balances mean pre/post horizons; the unbanded variant is
reported alongside. The band was adopted after the horizon imbalance was observed and is logged
as a dated deviation in the versioned deviations file, before any banded estimate was
interpreted. Two protocol checks validate the arm on models that submitted real-time
forecasts themselves: retrospective re-queries of GPT-5 reproduce its archived forecasts (mean
probability shift $-0.021$, CI $[-0.056,+0.010]$, $n=50$), and DeepSeek-V3.1 shows a small
positive shift ($+0.021$, CI $[+0.009,+0.032]$, $n=910$) that we flag as protocol sensitivity;
both are an order of magnitude smaller than the GPT-5.5 leakage signature. Because the leakage
estimand is a \emph{jump} rather than a level, protocol level effects cancel unless they vary
sharply in time. The placebo-jump check tests this directly on DeepSeek-V3.1's own
retrospective-minus-real-time score difference ($n=910$, mean level $-0.016$): at the December
2025 boundary the jump is $+0.026$ (CI $[-0.011,+0.064]$, $n_{\mathrm{pre}}/n_{\mathrm{post}} =
42/419$), and the only starred jump at any tested date is $-0.036$ at March 2026, opposite in
sign and one third of the banded GPT-5.5 signal. Matrix power: converting the null diagonal
cells of the specificity matrix into bounds, their $80\%$-power minimum detectable effects are
$0.077$ (DeepSeek-V3.1), $0.108$ (Kimi-K2.6), $0.097$ (GPT-5.4), and $0.054$ (GPT-5.5)
anchored-Brier units, computed from the cluster-bootstrap standard errors of each own-cutoff
cell.

\paragraph{Reading disagreement between the two designs.}
For the same model and boundary (GPT-5.5 at December 2025), the matrix reports $+0.034$
(unstarred) and the anchor arm $+0.106^\ast$. The designs differ in the two respects that
matter. Their baselines differ in cleanliness: the control models are guaranteed only to
have no \emph{discontinuity} at December 2025---their January 2026 cutoffs let them memorize
the same pre-boundary outcomes, and any inflation they carry is subtracted from the target's
cell---whereas an archived real-time forecast cannot contain leakage at all. And the paired
per-question difference removes the cross-vendor cluster variance that sets the matrix's
power floor ($0.054$ at this cell). Both forces push the matrix cell toward zero, so the
pattern is attenuation plus lower power, not contradiction. The auditor's rule: a starred
own-cutoff jump that survives placebo dates is a detection scoped to its design's baseline; a
null matrix cell bounds the effect but certifies nothing. Taken together, the matrix bounds
leakage among 2025-cutoff flagships at a few points of anchored Brier, and the within-family
route detects a leakage signature for GPT-5.5 exactly at its documented cutoff.

\subsection{M4-C (LiveCodeBench code arm)}
\label{app:e3}

\paragraph{LiveCodeBench.} LiveCodeBench \citep{livecodebench2025} is a coding benchmark that
collects problems from competitive programming contests (LeetCode, Codeforces, AtCoder). Each
problem carries its \emph{contest release date}, so a model can only have trained on a problem's
solution if the contest occurred before the model's training cutoff. The project publishes official
per-problem pass@1 (fraction of sampled code completions passing all unit tests) for many models,
and refreshes regularly with new contests to stay contamination-free. We use problems from May 2023
through April 2025 ($713$--$1055$ contest-dated problems per model, filtered to those present in all
evaluated models). All scores are taken from the public submissions repository; no additional
inference is run.

\paragraph{Models and cutoffs.} The \emph{continuity targets}, evaluated entirely from published
per-problem data, are GPT-4o-2024-08-06 (cutoff $\approx$Oct 2023) and Claude-3.5-Sonnet-20240620
(cutoff $\approx$Apr 2024). Composition controls have cutoffs \emph{after} the latest problem
(Gemini-2.5-Pro, DeepSeek-R1, and the published 2025-cutoff pool). Because control cutoffs lie
outside the evaluation window, they have no leakage discontinuity at any target's cutoff
(\Cref{prop:did-boundary}); their pre/post gap at any evaluation-window date captures only
composition drift (later contests are harder). Cutoff dates are documented approximations at
month resolution.

\paragraph{Self-generated extension.} To cover 2024 training cutoffs with current models, we
additionally generate one greedy completion per problem through OpenRouter for four targets with
documented 2024 cutoffs---GPT-5 (Sep 2024), GPT-5-Mini (May 2024), DeepSeek-V3-0324 (Jul 2024),
and Kimi-K2-0905 (Dec 2024)---and two clean controls with 2025 cutoffs (DeepSeek-V3.2 and
MiniMax-M1), and grade locally against the official unit tests with the
official checker semantics (stdin and functional harnesses, per-problem time limits). The local
grader was validated by regrading published DeepSeek-V3 completions: verdict agreement is
$98.3\%$, within the pre-registered three-point tolerance. The same $\pm160$-day window DiD and
date-permutation machinery as the continuity arm is applied at the targets' documented cutoffs.

\paragraph{DiD estimation.} For each target model and candidate cutoff date $T$, define a symmetric
window $[T{-}h,\ T{+}h]$ days (default half-width $h=160$, giving $n_{\mathrm{pre}}\approx207$ and
$n_{\mathrm{post}}\approx194$ problems for GPT-4o at Oct 2023). Split problems into pre ($g<0$) and
post ($g\ge0$). The within-model gap is
$\mathrm{gap}^{m}=\overline{\mathrm{pass@1}}^{m}_{\mathrm{pre}}-\overline{\mathrm{pass@1}}^{m}_{\mathrm{post}}$.
The pooled composition control's gap is the same quantity computed on the average pass@1 of the two
2025-cutoff models. The boundary-leakage estimate is
$\widehat J=\mathrm{gap}^{M}-\mathrm{gap}^{M_0}$: the target's excess gap beyond composition drift.
Bootstrap CIs: $4000$ resamples of problem IDs (resampled jointly for target and control), 95\%
percentile interval. The robustness battery (control smoothness, separate controls, date
specificity, bandwidth, difficulty stratification) is in \Cref{app:supp-e3}; the global-$B$
sensitivity analyses are in \Cref{app:supp-e3global}.

\paragraph{Results of the self-generated extension.} The 2024-cohort targets do not reproduce the
GPT-4o wild positive. At their documented cutoffs, GPT-5-Mini ($+0.005$, CI $[-0.058,+0.072]$) and
Kimi-K2-0905 ($-0.002$, CI $[-0.078,+0.075]$) are null. DeepSeek-V3-0324 is marginally positive
($+0.051$, CI $[-0.013,+0.115]$; date-permutation $p=0.050$), and its only starred cells sit at
the April and May 2024 placebo dates ($+0.079$, $+0.103$) rather than at its own July 2024
cutoff---a pattern consistent with diffuse contamination of spring-2024 contest solutions rather
than a sharp training-cutoff boundary, so we do not claim a detection. GPT-5 is significantly
\emph{negative} ($-0.119$, CI $[-0.185,-0.051]$): the 2025-cutoff control pool improves more on
post-September-2024 problems than GPT-5 does, a violation of the no-differential-trend condition
of \Cref{prop:did-boundary} for reasoning-era model pairs, and a reminder that the DiD sign test
is only interpretable when the control-trend assumption holds. We therefore report the extension
as bounded nulls with one flagged assumption failure, and rest the code-domain detection claim on
the published-data continuity targets above.

\subsection{M5 (disciplined null on forecasting)}
\label{app:e4}\label{app:m5}

\paragraph{Program reading (demoted from the main-text close).}
Each claim has one primary experiment whose headline result is that claim
(\Cref{tab:contract}): M1 establishes the confound on cutoff-clean models; M2 anchors dose
sensitivity in randomized controlled pretraining; M3 supplies ground truth for the estimators
and the laws; M4 shows what honest wild detection looks like, starred where the evidence
supports it and null elsewhere; and M5 shows the discipline, with the weak-control failure
mode demonstrated rather than assumed and power analysis quantifying what the nulls exclude.

\paragraph{Data.} The shared forecasting panel of \Cref{app:m1}, restricted to questions
resolving between January 2025 and July 2026: $265$ pre-boundary and $1{,}207$ post-boundary
questions, evaluated pairwise (every model on every question).

\paragraph{Boundary.} The primary target Qwen3.5-35B-A3B has only a year-level documented cutoff
(2026), so we place the boundary at July 2025, inside its plausible training window, and report a
boundary sweep from $-180$ to $+180$ days around it in 30-day steps. The adjusted estimate is
null at every placement.

\paragraph{Control selection.} The control is chosen \emph{before} any DiD is computed, by a
pre-registered profile-matching protocol: on post-boundary questions that are clean for every
candidate, compute each candidate's distance to the target (squared-error distance between
calibration profiles plus squared-error distance between per-source mean scores). Between the two
flagships with post-window-clean cutoffs, GPT-5 (distance
$0.0051$) is selected over Gemini-3.1-Pro ($0.0088$). GPT-3.5-Turbo (cutoff September 2021)
serves as the deliberately mismatched weak control. Secondary targets are Kimi-K2.6,
DeepSeek-V3.2, GLM-4.7, and MiniMax-M3, all queried retrospectively through the shared protocol.

\paragraph{Paired DiD estimation.} For each question $i$, form the paired difference
$D_i=L_i^{\mathrm{target}}-L_i^{\mathrm{control}}$ of crowd-anchored Brier reductions. The
adjusted estimate is the pre/post difference in mean $D_i$, with source$\times$month
cluster-bootstrap confidence intervals ($4{,}000$ resamples). Power is assessed by
semi-synthetic injection: a synthetic leakage effect of known size is added to pre-boundary
$D_i$ and the detection rate of the full pipeline is recorded over $300$ replications per effect
size. The boundary sweep and power results are in \Cref{app:supp-e4}.

\paragraph{Results.} \Cref{tab:m5} reports the per-model naive gaps and adjusted DiD estimates
summarized in \Cref{sec:exp:m5} and plotted in \Cref{fig:m5dissolve}.

\begin{table}[t]\centering
\caption{\textbf{M5: full estimates behind \Cref{fig:m5dissolve}.} Column 2: the naive pre/post
gap $\Delta$ at the July 2025 boundary; positive reads as leakage. Column 3: the paired DiD
against the pre-registered profile-matched control (GPT-5); for a leak-free model the ideal
value is zero. Cluster-bootstrap 95\% CIs; $^\ast$: CI excludes zero. Result: no adjusted
estimate is significantly positive (marginal negatives are slight overcorrection, the
conservative direction); the weak-control row substitutes GPT-3.5-Turbo and re-inflates the
estimate, the control-mismatch failure mode of \Cref{ass:transport}.}
\label{tab:m5}
\begin{tabular}{lcc}
\toprule
model & naive gap [CI] & adjusted DiD [CI] \\
\midrule
Qwen3.5-35B-A3B (target) & $+0.060\ [-0.008,+0.108]$ & $+0.020\ [-0.063,+0.074]$ \\
Kimi-K2.6                & $+0.024\ [-0.008,+0.055]$ & $-0.016\ [-0.058,+0.010]$ \\
DeepSeek-V3.2            & $+0.006\ [-0.021,+0.027]$ & $-0.034^\ast\ [-0.088,-0.000]$ \\
GLM-4.7                  & $+0.009\ [-0.025,+0.042]$ & $-0.031^\ast\ [-0.076,-0.002]$ \\
MiniMax-M3               & $+0.039^\ast\ [+0.008,+0.074]$ & $-0.000\ [-0.039,+0.026]$ \\
\midrule
weak control (GPT-3.5-Turbo) & --- & $+0.055\ [-0.028,+0.151]$ \\
\bottomrule
\end{tabular}
\end{table}

\paragraph{Provenance of the fitted overconfidence temperature.} The value $T\approx8$ cited in
\Cref{rem:calib} and used to set the T4 sweep range ($T_{\mathrm{over}}\in\{2,4,8\}$) was fit on
this forecasting pipeline's data: temperature scaling
$p_{\mathrm{cal}}=\sigma(\mathrm{logit}(p)/T)$ with $T$ chosen to minimize log-loss on Qwen3.5's
\emph{leakage-free} post-boundary control forecasts ($n=168$ ex-ante-uncertain factual questions,
crowd anchor frozen in $[0.25,0.75]$). The fit reached the upper end of the search interval
$(0.3,8]$, so $T\approx8$ is best read as a lower bound on the distortion: Qwen's raw probabilities
are severely overconfident, which is why the raw residual covariance is negative and PRC requires
recalibration before use.

\section{Complementary experiments and robustness analyses}
\label{app:supp}

The synthetic validation study E1 and its two supplementary arms come first; the remaining
subsections report one robustness analysis each, in a fixed format: the \emph{intuition} (what
question it answers and which claim it defends), the \emph{setup} (a short description or a
pointer to \Cref{app:experiments} where shared), and the \emph{results}.

\subsection{E1 (synthetic validation)}
\label{app:e1}

This synthetic study validates the estimators under known data-generating assumptions. It
formerly opened the main-text experiment section and is retained here in full; the real-data
ground-truth role is now carried by M3. All sub-tests share a common data-generation process,
described first, with per-test variations noted below; results follow at the end of this
subsection, and the supplementary arms T5--T6 follow in \Cref{app:supp-t5,app:supp-t6}.

\paragraph{Common data generation.}
For each question $i$, draw a crowd anchor $c_{0,i}\sim\mathrm{Uniform}(0.15,0.85)$ and a binary
outcome $Y_i\sim\mathrm{Bernoulli}(c_{0,i})$. The \emph{honest forecast} is
$P_{\mathrm{hon},i}=\mathrm{clip}\!\big(c_{0,i}+s\,(Y_i-c_{0,i})+\epsilon_i,\;[10^{-3},1{-}10^{-3}]\big)$,
where $s\in[0,1]$ is a skill parameter controlling how much the model improves on the crowd, and
$\epsilon_i\sim\mathcal{N}(0,0.05)$ is idiosyncratic noise. For a leaked question with extraction
weight $w\in[0,1]$, the observed forecast is the convex blend
$P_i=\mathrm{clip}\!\big((1-w)\,P_{\mathrm{hon},i}+w\,Y_i\big)$; for a clean question, $P_i=P_{\mathrm{hon},i}$.
An optional overconfidence parameter $T_{\mathrm{over}}\ge1$ transforms $P_i$ via
$P_{\mathrm{obs},i}=\sigma(\mathrm{logit}(P_i)\cdot T_{\mathrm{over}})$, where $\sigma$ is the
logistic function; $T_{\mathrm{over}}=1$ leaves the forecast unchanged.

\paragraph{T1 (recovery and coverage).}
For each extraction weight $w\in\{0,0.2,0.4,0.7,1.0\}$, generate $n=400$ leaked questions and
$n=400$ clean-control questions (both with skill $s=0.3$, no overconfidence). The true inflation is
$B_{\mathrm{true}}=\tfrac1n\sum_i\big[(P_{\mathrm{hon},i}-Y_i)^2-(P_i-Y_i)^2\big]$, computed from
the known honest forecast. The estimated inflation is the Brier-reduction DiD:
$\widehat B=\bar S_{\mathrm{leak}}-\bar S_{\mathrm{clean}}$, where $\bar S=\tfrac1n\sum(c_0-Y)^2-(P-Y)^2$.
Bootstrap CIs: 200 problem-level resamples, 95\% percentile interval. We run 300 Monte Carlo
repetitions per $w$ and report the mean bias $\widehat B-B_{\mathrm{true}}$ and the fraction of
repetitions in which the CI covers $B_{\mathrm{true}}$.

\paragraph{T2 (RD recovery and placebo).}
Each question $i$ has a resolution-time gap $\delta_i\sim\mathrm{Uniform}(-180,180)$ (days to the
cutoff). The honest skill varies smoothly with recency:
$s(\delta)=0.4-0.25\,|\delta|/180$, so the model is better near its cutoff (pure recency, no
leakage). Leakage of weight $w_0\in\{0,0.3,0.6\}$ is injected only on the leaked side
($\delta<0$). The RD jump is estimated by local-linear regression with a triangular kernel and
half-bandwidth $h=120$ days: fit $\ell_i=(P_i-Y_i)^2\approx\beta_0+\beta_1\,\mathbf{1}[\delta<0]+
\beta_2\,\delta+\beta_3\,\delta\cdot\mathbf{1}[\delta<0]$ on the weighted sample $|\delta|\le h$;
the jump estimate is $-\hat\beta_1$. The \emph{placebo} repeats the same procedure with no leakage
injected ($w_0=0$ forced, though recency is still present). $n=1500$ per run, 300 repetitions.

\paragraph{T3 (complementarity).}
For each extraction weight $w\in\{0,0.25,0.5,0.75,1.0\}$, generate $n=8000$ questions. For each
question, query $K=12$ independent ``paraphrases'' (simulated by adding independent noise
$e_k\sim\mathcal{N}(0,0.12)$ to the crowd anchor before blending with $w$). The consensus is
$\bar y_i=K^{-1}\sum_k P_i^{(k)}$. Compute the PRC as
$\widehat\Delta=\widehat{\mathrm{Cov}}(\bar y,\,Y-\bar y)$ and the leakage score
$L=\tfrac1n\sum\big[(c_0-Y)^2-(\bar y-Y)^2\big]$.

\paragraph{T4 (calibration under overconfidence).}
This sub-test isolates the calibration step. The honest forecast is set to the crowd anchor itself,
$P_{\mathrm{hon}}=c_0$, with $Y\sim\mathrm{Bernoulli}(c_0)$, so the clean forecaster is calibrated
by construction. Leaked forecasts are $P=(1-w)\,P_{\mathrm{hon}}+w\,Y$ with $w=0.8$. Reported
probabilities are then distorted by overconfidence:
$P_{\mathrm{obs}}=\sigma(T_{\mathrm{over}}\,\mathrm{logit}(P))$, for
$T_{\mathrm{over}}\in\{1,2,4,8\}$. For each repetition, we draw $n=600$ leaked and $n=600$ clean
evaluation questions, plus a \emph{separate} leakage-free calibration set of size
$n_{\mathrm{cal}}\in\{100,250,500,1000,2000,5000\}$. Temperature scaling ($\hat T$ minimizing
log-loss) is fit \emph{only} on the calibration set and then applied to both evaluation sets before
computing DiD. The \emph{naive} estimator skips calibration and uses $P_{\mathrm{obs}}$ directly.
We report mean bias, mean absolute bias, and RMSE over 200 repetitions per
$(T_{\mathrm{over}},\,n_{\mathrm{cal}})$ cell.

\paragraph{T5 (no free inflation; results in \Cref{app:supp-t5}).}
Honest forecasts are generated as in T1 with $w=0$ (zero true leakage). The ``perturbed'' side
replaces $P_{\mathrm{hon}}$ by $\sigma(\mathrm{logit}(P_{\mathrm{hon}})+\eta)$ with
$\eta\sim\mathcal N(0,\sigma_\eta^2)$ drawn independently of $Y$; logit-space noise avoids the
outcome dependence that clipping would induce at the probability boundaries. Noise scales
$\sigma_\eta\in\{0.25,0.5,1.0\}$; $n=400$ per side; the same Brier-reduction DiD as T1; 300 Monte
Carlo repetitions per scale.

\paragraph{T6 (concentration across surprise strata; results in \Cref{app:supp-t6}).}
Fixed extraction weight $w=0.5$, $n=4000$ leaked and $4000$ clean questions per repetition, 300
repetitions. Questions are stratified into quartiles of the \emph{observable} crowd surprise
$(c_0-Y)^2$ (quartile cuts computed on the pooled sample). Within each stratum, the stratified DiD
is the mean Brier reduction of the leaked group minus that of the clean group; the prediction is
the stratum mean of $b_0\,w(2-w)$ with $b_0=(P_{\mathrm{hon}}-Y)^2$, per \Cref{prop:perq}.

\paragraph{Results (T1--T4; \Cref{fig:e1}).}
\emph{T1 (DiD recovery and coverage).} Across five leakage weights
($w\in\{0,0.2,0.4,0.7,1.0\}$, true $B\in\{0,0.038,0.067,0.095,0.105\}$), the DiD estimator is
essentially unbiased (mean bias ${\approx}9\times10^{-4}$ at every level) and the 95\% bootstrap
CIs cover the true $B$ in $94$--$100\%$ of repetitions; the no-leak case returns a null.
\emph{T2 (RD recovery and placebo).} With a smooth recency surface and leakage injected only on
the pre-cutoff side, the RD jump tracks the injection ($+0.000$, $+0.039$, $+0.065$ for
$w_0=0,0.3,0.6$) while the no-leakage placebo jump is ${\approx}0.000$ at every level: continuous
recency alone does not fabricate a discontinuity (\Cref{thm:rd}).
\emph{T3 (RD/PRC complementarity).} Sweeping $w$ confirms \Cref{prop:connection}: PRC is
hump-shaped (peak $+0.052$ near $w{=}0.5$, vanishing at $w{=}1$) while the RD signal grows
monotonically to $+0.209$ at $w{=}1$; recovered ratios $L(w)/L(1)$ match $w(2-w)$
(\Cref{tab:funcform}).
\emph{T4 (calibration under overconfidence).} Distorting a calibrated clean forecaster by
$T_{\mathrm{over}}\in\{2,4,8\}$ produces naive DiD bias $0.022$, $0.055$, $0.085$; temperature
scaling fit on $n_{\mathrm{cal}}$ leakage-free anchors reduces it to ${\approx}0.008$ at
$n_{\mathrm{cal}}=100$ and ${\approx}0.001$ at $n_{\mathrm{cal}}=5000$. Calibration works but is
a real resource requirement (\Cref{rem:calib}).

\begin{figure}[t]\centering
\figorbox[0.85\linewidth]{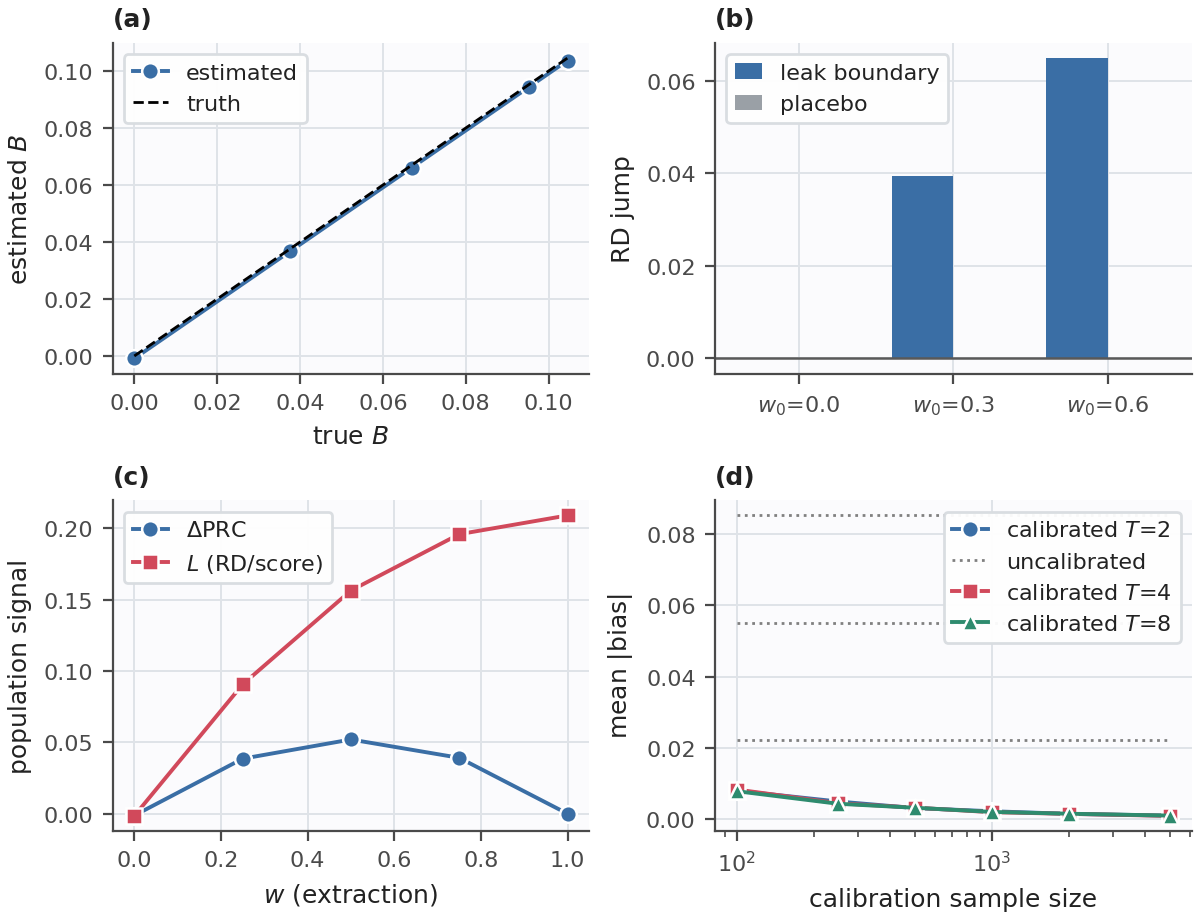}
\caption{\textbf{E1: the estimators recover known ground truth in simulation.}
(a) DiD estimates lie on the truth line (bootstrap coverage $94$--$100\%$ across leakage
levels). (b) RD jumps track injected boundary leakage; no-leak placebos stay at zero under
smooth recency. (c) PRC is hump-shaped and vanishes at pure answer leakage ($w{=}1$) while the
score-based signal grows monotonically---the two detectors are complements. (d) Overconfidence
biases the naive statistic; temperature calibration on leakage-free anchors removes the bias as
$n_{\mathrm{cal}}$ grows. Result: each estimator behaves as
\Cref{prop:did,thm:rd,prop:efficiency,rem:calib} predict under known data-generating
assumptions.}
\label{fig:e1}
\end{figure}

\subsection{No free inflation under outcome-orthogonal noise (E1 arm T5)}
\label{app:supp-t5}

\paragraph{Intuition.} \Cref{cor:no-free} states that a predictor can raise its score only through
a positive loading on the leaked signal: perturbations that are independent of the outcome cannot
inflate. A useful falsification check on the estimator is therefore whether it can be fooled into
booking outcome-orthogonal noise as leakage---if it could, positive estimates in M2--M4 would be
uninformative.

\paragraph{Setup.} As defined in \Cref{app:e1} (T5): honest forecasts with zero true leakage,
perturbed by zero-mean logit-space noise independent of $Y$ at scales
$\sigma_\eta\in\{0.25,0.5,1.0\}$; the same Brier-reduction DiD estimator as T1; 300 Monte Carlo
repetitions per scale.

\paragraph{Results.} The estimate is deflationary at every scale, increasingly so as noise grows:
mean $\widehat B=-0.003$ (95\% CI of the mean $[-0.004,-0.002]$) at $\sigma_\eta=0.25$;
$-0.012$ ($[-0.013,-0.011]$) at $\sigma_\eta=0.5$; and $-0.044$ ($[-0.045,-0.043]$) at
$\sigma_\eta=1.0$. Individual repetitions can come out slightly positive under sampling noise at
the smallest scale ($34\%$ of repetitions; $5\%$ at $\sigma_\eta=0.5$; none at $\sigma_\eta=1.0$),
but the mean effect is strictly negative throughout, matching \Cref{cor:no-free}: noise costs
Brier score, it cannot fake inflation (\Cref{fig:e1-supp}a).

\subsection{Leakage concentration across surprise strata (E1 arm T6)}
\label{app:supp-t6}

\paragraph{Intuition.} The per-question law $L=b_0\,w(2-w)$ (\Cref{prop:perq}) makes two testable
predictions beyond monotonicity in $w$: at fixed extraction, leakage should \emph{concentrate} on
questions where the crowd is surprised (large $b_0$), and across extraction levels the total
leakage should follow the $w(2-w)$ shape exactly. This licenses the main-text statement that
leakage is heterogeneous and concentrated rather than a uniform rate.

\paragraph{Setup.} As defined in \Cref{app:e1} (T6): fixed $w=0.5$, quartiles of observable crowd
surprise $(c_0-Y)^2$, stratified DiD against the stratum-mean prediction $\E[b_0\,w(2-w)]$; 300
repetitions. The functional-form check reuses the T3 sweep (\Cref{app:e1}), comparing recovered
leakage ratios $L(w)/L(1)$ with the predicted $w(2-w)$.

\paragraph{Results.} The stratified DiD estimate rises monotonically and steeply across surprise
quartiles ($0.018$, $0.041$, $0.083$, $0.174$ for Q1 through Q4) and matches the predicted stratum
means ($0.018$, $0.041$, $0.083$, $0.174$) to within Monte Carlo error (\Cref{fig:e1-supp}b): the
top quartile carries roughly ten times the leakage of the bottom quartile at the same extraction
weight. The functional form is confirmed across the full sweep (\Cref{tab:funcform}): observed
ratios track $w(2-w)$ to within $0.005$ at every $w$. A caveat applies to both arms: the
synthetic generator implements the convex blend of \Cref{ass:pull}, so these checks establish
that the estimators recover the law when it holds---they are consistency checks of the
machinery, not tests of the convex-pull assumption itself, whose real-data validation remains
open (\Cref{sec:discussion}).

\begin{table}[h]\centering
\caption{\textbf{E1: the recovered leakage profile matches $w(2-w)$.} Leakage ratios
$L(w)/L(1)$ from the synthetic extraction-weight sweep (arm T3) against the prediction of
\Cref{prop:perq}. Result: observed and predicted agree within $0.007$ at every $w$.}
\label{tab:funcform}
\begin{tabular}{lccccc}
\toprule
extraction weight $w$ & 0 & 0.25 & 0.5 & 0.75 & 1.0 \\
\midrule
observed $L(w)/L(1)$ & $-0.007$ & $0.433$ & $0.748$ & $0.937$ & $1.000$ \\
predicted $w(2-w)$   & $0.000$  & $0.438$ & $0.750$ & $0.938$ & $1.000$ \\
\bottomrule
\end{tabular}
\end{table}

\begin{figure}[t]\centering
\figorbox[0.85\linewidth]{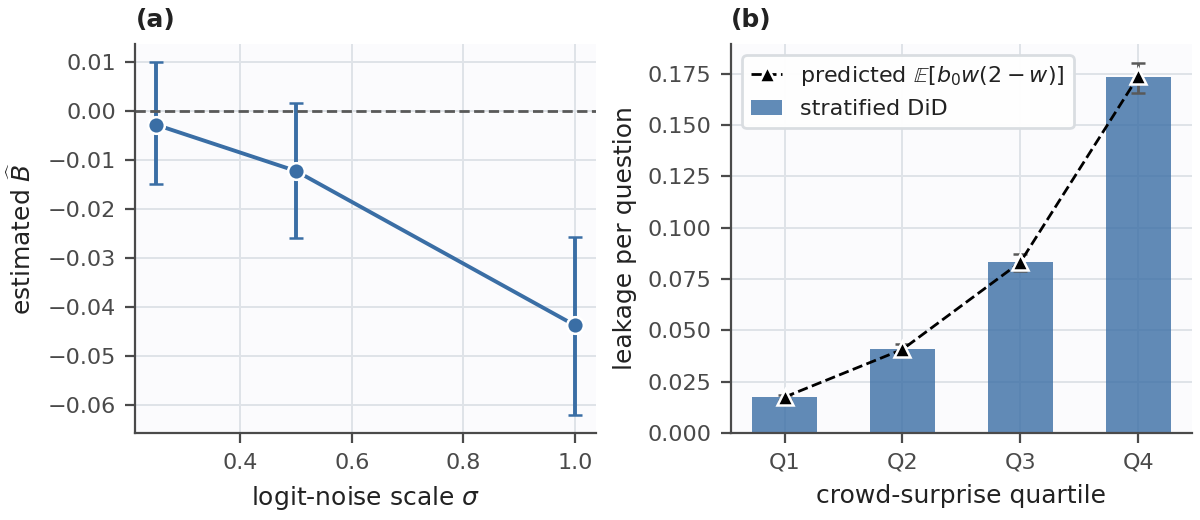}
\caption{\textbf{E1 supplementary arms: noise cannot fake inflation, and leakage concentrates
where the law predicts.} (a) T5: under outcome-orthogonal logit noise with zero true leakage,
the DiD estimate stays at or below zero at every noise scale, as \Cref{cor:no-free} requires
(whiskers: 2.5--97.5 percentile range across repetitions). (b) T6: the stratified DiD (bars)
tracks the predicted $\E[b_0\,w(2-w)]$ (dashed) across crowd-surprise quartiles at fixed
$w=0.5$.}
\label{fig:e1-supp}
\end{figure}

\subsection{Hubble robustness (M2)}
\label{app:supp-hubble}

\paragraph{Intuition.} The M2 dose-response supports the mechanism only if it is not an artifact of
item difficulty, a single model configuration, or a single task.

\paragraph{Setup.} All quantities are computed from the published Hubble accuracies as configured
in \Cref{app:e2}.

\paragraph{Results.}
\emph{Difficulty-matching check.} The standard model's accuracy varies by only
$0.037$--$0.051$ across duplication bins (per task: MMLU $0.046$, PIQA $0.046$, HellaSwag $0.051$,
WinoGrande $0.037$). This range is much smaller than the moderate/high-dose effects, bounding
difficulty imbalance as an explanation; it is comparable to the low-dose $r{=}4$ effect, so low-dose
conclusions should be interpreted more cautiously.

\emph{Robustness across scale and tokens.} The dose-response persists across configurations:
8B-100B shows stronger low-dose sensitivity (less token dilution reduces the contamination signal per
document), 8B-500B remains strong at moderate/high dose, and 1B-500B shows large effects only at
high duplication (lower capacity limits usable extraction). This heterogeneity is consistent with the
extraction factor $w=c\cdot u\cdot\kappa$: exposure must be both absorbed and usable.

\emph{Per-task consistency.} For 8B-500B, the high-dose effect is not driven by a single task:
MMLU, HellaSwag, and WinoGrande all show strong increases, while PIQA rises more mildly and
saturates earlier. Task heterogeneity is expected because $w$ can vary by task format.

\subsection{LiveCodeBench robustness battery (M4-C)}
\label{app:supp-e3}

\paragraph{Intuition.} The M4-C boundary estimate is credible only if the no-discontinuity
control assumption holds empirically, the signal is not an artifact of one control model or one
bandwidth, the effect is specific to documented cutoffs, and difficulty imbalance is excluded.
The named continuity violations of Route 1---composition shifts and corpus-density jumps at
the cutoff---are exactly what the placebo and bandwidth checks below target.

\paragraph{Setup.} All checks use the DiD estimator and bootstrap of \Cref{app:e3}, computed on
the published per-problem data.

\paragraph{Results.}
\emph{R1 (control smoothness).} The no-discontinuity assumption is directly tested by estimating
local-linear jumps for each control at each target cutoff. Pooled-control jumps: Oct'23
$-0.010$ [$-0.093$,$+0.077$]; Dec'23 $-0.012$ [$-0.097$,$+0.070$]; Apr'24 $-0.063$
[$-0.180$,$+0.040$]. All intervals include zero, supporting the identification condition.

\emph{R2 (separate-control robustness).} Own-cutoff estimates are computed against each control
individually. For GPT-4o-0806: $+0.094$ vs.\ Gemini, $+0.086$ vs.\ DeepSeek, $+0.090$ pooled. The
signal is not driven by a single control model.

\emph{R3 (date specificity).} A monthly cutoff scan finds no month outside $\pm1$ month of
GPT-4o's documented cutoff with a larger DiD. A permutation test over model-cutoff assignments in
the specificity matrix yields empirical $p=0.025$ across all targets.

\emph{R4 (bandwidth).} Own-cutoff DiD vs.\ pooled controls across half-widths $h\in\{100,120,
160,200,240,300\}$ days: GPT-4o point estimates are stable at $+0.084$ to $+0.100$; significance
appears for $h\ge160$ days.

\emph{R5 (difficulty stratification).} DiD stratified by problem difficulty (easy/medium/hard):
GPT-4o-0513 $+0.096$ [$+0.018$,$+0.169$]; GPT-4o-0806 $+0.089$ [$+0.015$,$+0.166$];
Claude-3.5 $+0.079$ [$-0.002$,$+0.156$]. GPT-4o survives cleanly; Claude becomes marginal after
stratification, so per-date inference for Claude leans on the date-permutation test
($p=0.017$).

\emph{Specificity.} For each continuity target, the DiD is computed at each candidate cutoff
date. Positive effects concentrate at the model's own documented cutoff (main text,
\Cref{sec:exp:e3}); off-diagonal cells are null, and monthly scans find no larger effect away
from the documented cutoffs.

\begin{figure}[t]\centering
\includegraphics[width=0.85\linewidth]{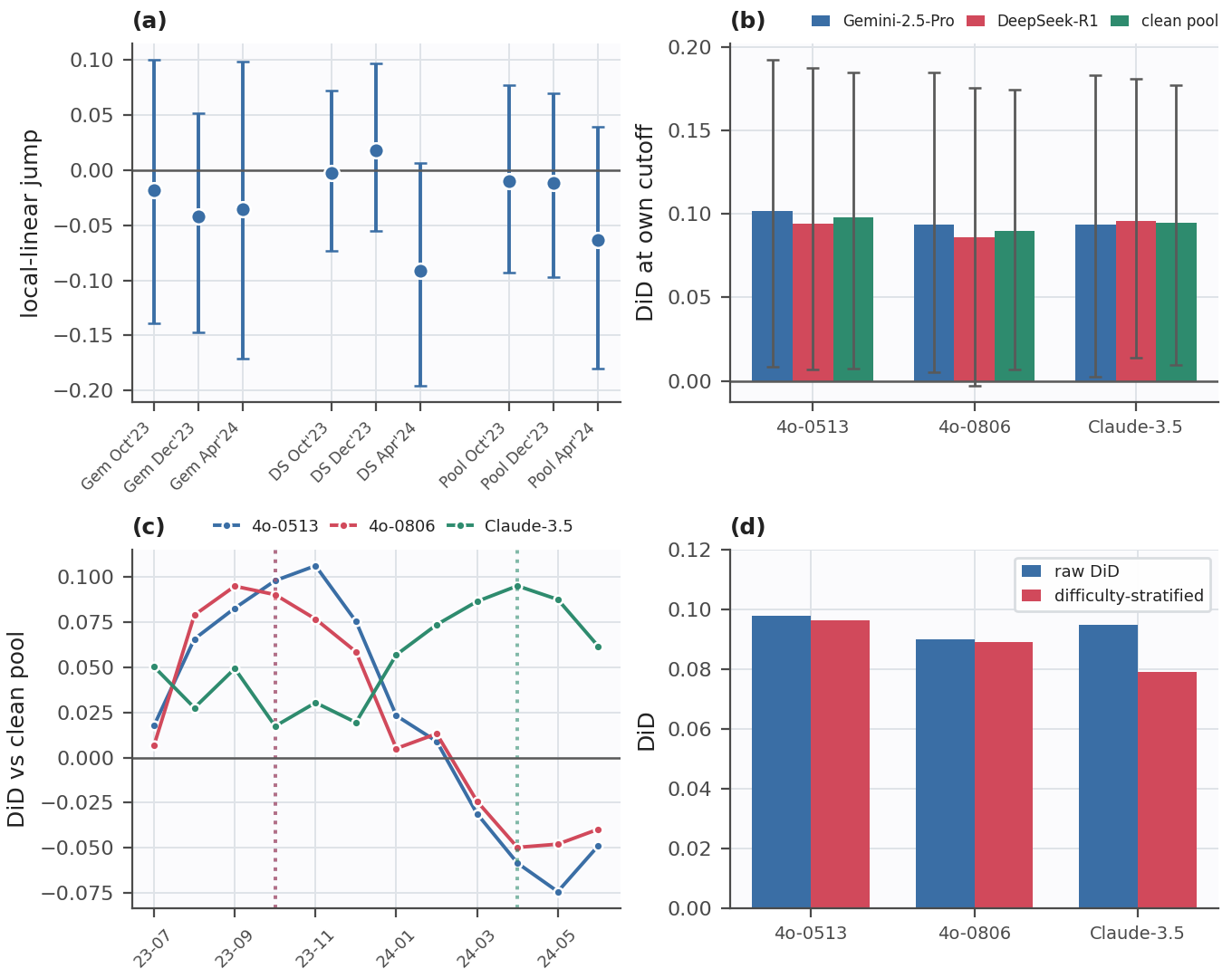}
\caption{\textbf{M4-C: the LiveCodeBench boundary signal survives the robustness battery.}
(a) Composition controls show no local discontinuity at target cutoffs. (b) Own-cutoff
estimates are stable across individual controls. (c) Monthly scans concentrate positive effects
at documented cutoffs. (d) Difficulty stratification preserves GPT-4o's signal and weakens
Claude's. Result: GPT-4o's boundary leakage is robust on every check; Claude's is reported as
supporting evidence only.}
\label{fig:e3-robustness}
\end{figure}

\subsection{Why M4-C does not identify global \texorpdfstring{$B$}{B}}
\label{app:supp-e3global}

\paragraph{Intuition.} Converting the boundary jump $\widehat J$ into a global leakage-adjusted
score requires either a strictly clean, recency-matched control (\Cref{ass:transport}) or the
extrapolation assumption (\Cref{ass:smooth}). Both routes fail on this dataset; documenting the
failure disciplines the boundary-only scope of M4-C's claim.

\paragraph{Setup.} Same data and estimator as \Cref{app:e3}; the strictly clean control is
GPT-4-0613 (cutoff before all problems); extrapolations fit time polynomials plus
difficulty/platform indicators on the clean side.

\paragraph{Results.} GPT-4-0613 is strictly clean on all LiveCodeBench questions and produces
large DiD estimates ($+0.145$ to $+0.168$), but its quarterly performance profile correlates only
$0.36$--$0.39$ with GPT-4o (centered RMSE $0.074$--$0.076$), violating the matched-recency
requirement and exhibiting the weak-control trap. Clean-side extrapolation is unstable: implied
corrections range from $-0.175$ to $+0.102$ for GPT-4o-0513 and from $-0.105$ to $+0.172$ for
GPT-4o-0806 across polynomial degrees and post-cutoff horizons. We therefore report boundary $J$
and do not convert it into a global leakage-adjusted score.

\begin{figure}[t]\centering
\includegraphics[width=0.75\linewidth]{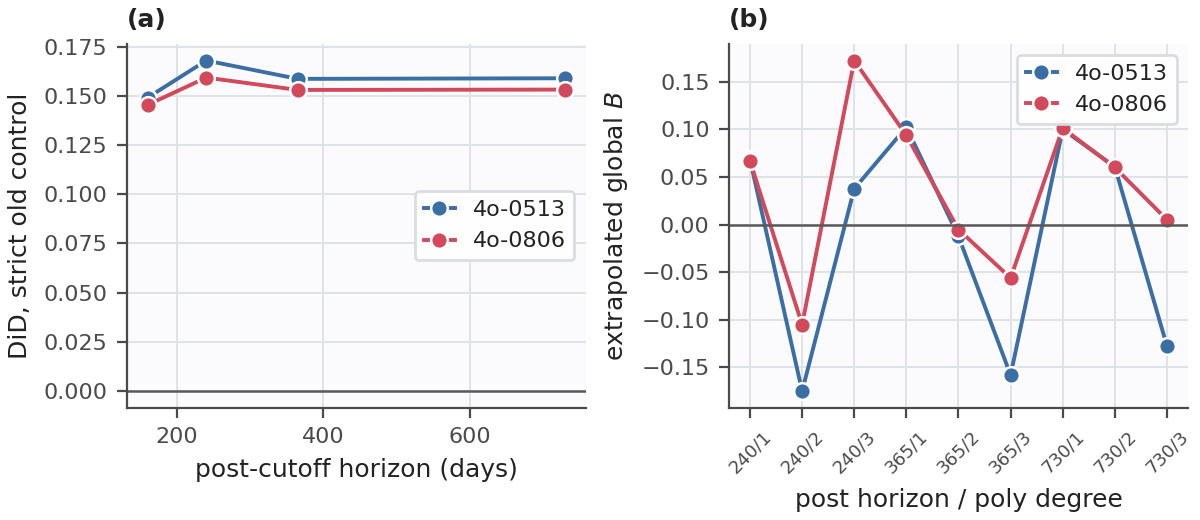}
\caption{\textbf{M4-C identifies the local boundary jump, not a global correction.}
(a) The strictly clean GPT-4-0613 control yields a large contrast but fails temporal-profile
matching (\Cref{ass:transport}). (b) Clean-side extrapolated corrections change sign across
horizons and polynomial degrees (\Cref{ass:smooth}). Result: both routes to a global $B$ fail
on this dataset, so the M4-C claim is scoped to the boundary.}
\label{fig:e3-global}
\end{figure}

\subsection{Forecasting robustness and power (M5)}
\label{app:supp-e4}

\paragraph{Intuition.} A null is only informative if the control is demonstrably well matched, the
result is stable across boundary placements, and the design has power to detect effects of
practical size.

\paragraph{Setup.} All quantities use the paired cluster-bootstrapped DiD of \Cref{app:m5}.

\paragraph{Results.}
\emph{Control validation.} The pre-registered profile-matching protocol selects GPT-5 over
Gemini-3.1-Pro (post-boundary profile distance $0.0051$ against $0.0088$) before any DiD is read.
Substituting the deliberately mismatched GPT-3.5-Turbo control flips the primary adjusted
estimate from $+0.020$ to $+0.055$, the spurious positive that an ill-matched control
manufactures under \Cref{ass:transport}.

\emph{Boundary sensitivity.} Adjusted estimates for boundaries swept from $-180$ to $+180$ days
around July 2025 in 30-day steps range from $-0.035$ to $+0.057$ and are never significantly
positive; placements more than 90 days before the reference boundary retain fewer than 62
pre-boundary questions and are reported for completeness only. The null does not depend on exact
placement of Qwen3.5's undocumented knowledge horizon. The sweep is informative only about
leakage that \emph{differs} across the assumed boundary: because every tested placement lies
inside the plausible training window, inflation roughly uniform over the window would produce a
null at all of them.

\emph{Power analysis.} Semi-synthetic effect injection on the observed paired panel detects an
injected effect of $0.05$ Brier-reduction units in $97.3\%$ of replications and an M4-C-sized
effect of $0.09$ in $100\%$. The design has high power for practically meaningful effects;
effects well below $0.05$ remain difficult to exclude.

\subsection{Matched real-data PRC sensitivity}
\label{app:prc-real}

\paragraph{Intuition.} E1-T3 shows PRC detects evidence leakage in principle; the question is
whether Route~3 yields a real calibrated positive on frontier data, under a pre-specified
robustness gate.

\paragraph{Setup.} A separate matched experiment was pre-specified before querying: Qwen3.5 with
$K=8$ paraphrases on outcome-balanced treatment/control cohorts in two domains with temporal
support on both sides, finance ($150/150$) and other ($130/130$); 552 of 560 questions had at
least six valid paraphrase probabilities.

\paragraph{Estimator and uncertainty.} Calibration was learned only from clean controls:
five-fold cross-fitting gave out-of-sample control predictions, and the five control-trained
calibrators were ensembled on treatment questions. Calibration was applied to each paraphrase
before computing consensus and within-question variance. The estimator was
$\widehat\Delta_{\mathrm{bc}}
=\widehat{\Cov}(\bar P,Y-\bar P)+\tfrac1n\sum_i\widehat W_i/K_i$.
Nested question bootstraps refit the calibrator and recomputed the finite-$K$ correction in every
draw; Platt scaling was primary, isotonic the pre-specified sensitivity.

\paragraph{Results.} Under Platt scaling, finance yielded excess PRC $+0.0116$
[$+0.0009,+0.0169$], other yielded $+0.0084$ [$-0.0046,+0.0164$], and the equal-weight pooled
excess was $+0.0100$ [$+0.0018,+0.0148$]. Clean-control point estimates were near zero. Under
isotonic calibration, however, finance was $+0.0142$ [$-0.0097,+0.0188$], other reversed to
$-0.0026$ [$-0.0264,+0.0124$], and the pooled excess was $+0.0058$
[$-0.0120,+0.0123$]. The experiment therefore failed its pre-specified calibration-stability
gate and is reported as suggestive sensitivity evidence only; Route~3 remains a
calibration-gated secondary diagnostic.

\begin{figure}[t]\centering
\includegraphics[width=0.80\linewidth]{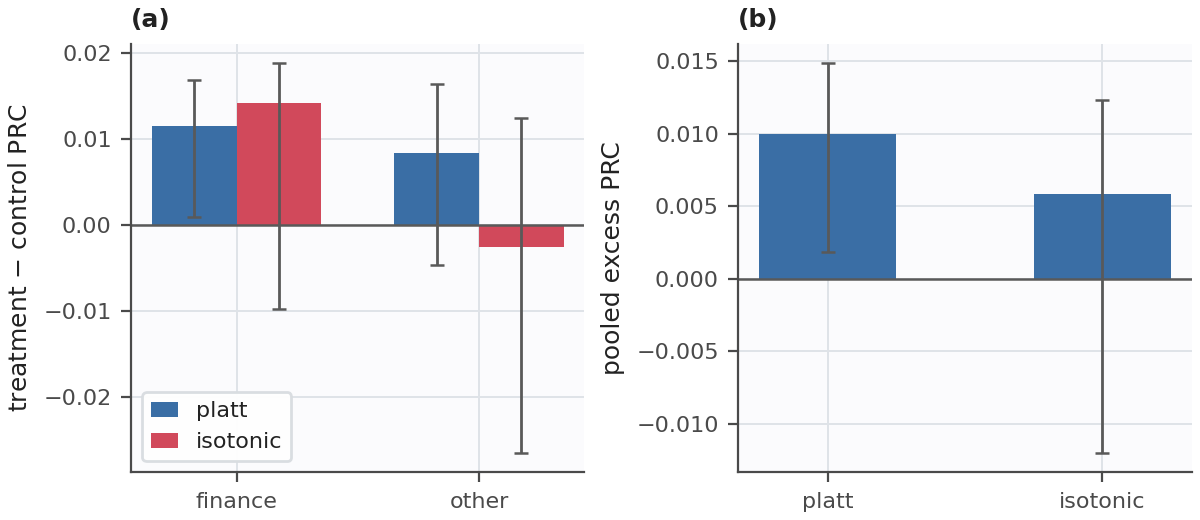}
\caption{\textbf{Matched real-data PRC: the calibration-stability gate fails, so PRC remains
secondary.} Treatment--control PRC excess by domain and pooled, under Platt and isotonic
calibration. Result: Platt yields positive excesses, but isotonic does not replicate the pooled
significance and flips the other-domain point estimate; by the pre-specified gate, PRC is
reported as suggestive sensitivity evidence only.}
\label{fig:prc-matched}
\end{figure}

\section{Experiment provenance}
\label{app:provenance}

\begin{table}[H]\centering
\footnotesize
\caption{\textbf{Reproducibility map.} Authoritative scripts and outputs for reported experiments.}
\begin{tabular}{@{}p{3.0cm}p{6.6cm}p{4.4cm}@{}}
\toprule
Result & Script & Primary output \\
\midrule
Panel build & \path{redesign/build_panel.py} & \path{data/panel_market.jsonl} \\
M1 clean flagships & \path{redesign/analysis/m1_analyze.py} & \path{m1/results.json} \\
M2 Hubble & \path{M2_hubble/run_m2.py} & \path{results.json} \\
M3 corpus and twins & \path{redesign/m3/build_corpus.py}, \path{train_twin.py} & \path{m3/corpus_full/} \\
M3 analysis & \path{redesign/m3/analyze_m3.py} & \path{m3/analysis_full.json} \\
M3 PRC & \path{redesign/m3/prc.py} & \path{m3/prc_differenced.json} \\
M4-F matrix, anchor & \path{redesign/analysis/m4f_{analyze,anchor}.py} & \path{m4f/*.json} \\
M4-C & \path{redesign/m4c/analyze.py} & \path{m4c/m4c_results.json} \\
M5 & \path{redesign/analysis/m5_analyze.py} & \path{m5/results.json} \\
E1 synthetic (T1--T6) & \path{M1_synthetic/run_m1.py} & \path{results.json} \\
Matched PRC & \path{M6_prc_matched/run_prc_matched.py} & \path{results.json} \\
Main-text figures & \path{redesign/analysis/paper_figures.py} & \path{figures/*.png} \\
Appendix figures & \path{redesign/analysis/appendix_figures.py} & \path{figures/*.png} \\
\bottomrule
\end{tabular}
\end{table}

\end{document}